\newtheorem{theorem}{Theorem}        
\newtheorem{lemma}{Lemma}
\newtheorem{remark}{Remark}
\renewenvironment{proof}[1][Proof: ]{\noindent \textit{#1}}{\qed\medskip}
\definecolor{bestbg}{RGB}{230,242,255}
\newcommand{\B}[1]{\multicolumn{1}{>{\columncolor{bestbg}}r}{#1}}
\newcommand{\yes}{$\checkmark$}
\newcommand{\no}{$\times$}
\title{LPPG-RL: Lexicographically Projected Policy Gradient Reinforcement Learning with Subproblem Exploration}
\author{
    Ruiyu Qiu\textsuperscript{\rm 1}\equalcontrib, Rui Wang\textsuperscript{\rm 2}\equalcontrib, Guanghui Yang\textsuperscript{\rm 1,\rm 3}, Xiang Li\textsuperscript{\rm 1}, Zhijiang Shao\textsuperscript{\rm 1,\rm 3}\thanks{Corresponding author.}
}
\begin{document}

\maketitle

\begin{abstract}
Lexicographic multi-objective problems, which consist of multiple conflicting subtasks with explicit priorities, are common in real-world applications. Despite the advantages of Reinforcement Learning (RL) in single tasks, extending conventional RL methods to prioritized multiple objectives remains challenging. In particular, traditional Safe RL and Multi-Objective RL (MORL) methods have difficulty enforcing priority orderings efficiently. Therefore, Lexicographic Multi-Objective RL (LMORL) methods have been developed to address these challenges. However, existing LMORL methods either rely on heuristic threshold tuning with prior knowledge or are restricted to discrete domains. To overcome these limitations, we propose Lexicographically Projected Policy Gradient RL (LPPG-RL), a novel LMORL framework which leverages sequential gradient projections to identify feasible policy update directions, thereby enabling LPPG-RL broadly compatible with all policy gradient algorithms in continuous spaces. LPPG-RL reformulates the projection step as an optimization problem, and utilizes Dykstra's projection rather than generic solvers to deliver great speedups, especially for small- to medium-scale instances. In addition, LPPG-RL introduces Subproblem Exploration (SE) to prevent gradient vanishing, accelerate convergence and enhance stability. We provide theoretical guarantees for convergence and establish a lower bound on policy improvement. Finally, through extensive experiments in a 2D navigation environment, we demonstrate the effectiveness of LPPG-RL, showing that it outperforms existing state-of-the-art continuous LMORL methods.
\end{abstract}

\begin{links}
    \link{Code}{https://github.com/qiuruiyu/LPPG-RL}
\end{links}

\section{Introduction}
Reinforcement learning (RL) has achieved remarkable success in diverse domains, including games \cite{silver2014, mnih2015a}, robotics \cite{tang2025}, and autonomous driving \cite{coelho2024}. In real-world applications, complex tasks are naturally decomposed into multiple subtasks, often organized according to explicit, predefined priorities. For example, in autonomous driving, the lane-changing task can be divided into subtasks with descending priorities: collision avoidance, lane keeping, and speed regulation. These subtasks may conflict with each other, making the overall task challenging, even though each subtask is individually simple.

In such scenarios, safety is typically assigned the highest priority. Prior work on Safe RL formulates constraints within the framework of Constrained Markov Decision Processes (CMDPs) \cite{altman2021}, where the agent maximize reward while minimizing cost. However, existing approaches exhibit significant limitations. First, policy optimization is generally conducted over an unordered set of constraints, which results in infeasibility when constraints are conflicting. Second, traditional Safe RL methods are not designed to optimize multiple objectives concurrently.

If the safety cost is treated as an additional objective, Safe RL can be regarded as a special case of Multi-Objective RL (MORL). MORL seeks to identify optimal policies that balance trade-offs among multiple objectives \cite{qiu2025}, typically by approximating the Pareto front. Under this perspective, Safe RL corresponds to selecting a Pareto-optimal policy that satisfies constraints. Although this concept is appealing, MORL methods usually aggregate objectives into a scalar, disregarding explicit priorities. Moreover, extending MORL approaches to densely approximate the Pareto front is computationally expensive, as it requires repeated training with varying weights \cite{parisi2014}. When priorities are predefined, practitioners must manually assign extreme weights and search through the Pareto front, resulting in sample inefficiency and offering no guarantees that higher-priority objectives are maintained \cite{liu2025}.

To address these limitations, we focus on directly finding a feasible policy by leveraging Lexicographic Multi-Objective Reinforcement Learning (LMORL). Building upon the framework introduced in \cite{skalse2022}, we extend it from a policy gradient perspective. Specifically, we propose the Lexicographically Projected Policy Gradient RL \textbf{(LPPG-RL)}, which sequentially searches for a feasible policy update by identifying the intersection of high-priority gradients, thereby enforcing the priority structure. Furthermore, we introduce several key improvements to enhance both practicality and efficiency. First, recognizing the computational complexity associated with large network parameters, we formulate an optimization problem and employ a problem-specific iterative projection method rather than relying on generic solvers. This approach significantly accelerates gradient projection, particularly in small- to medium-scale problems. Second, to avoid excessive hyperparameters and gradient vanishing, we encourage Subproblem Exploration (SE), which expedites the training and convergence of individual subtasks. Importantly, our method is naturally applicable to continuous spaces with standard policy gradient RL algorithms such as Proximal Policy Optimization (PPO) \cite{schulman2017} and Soft Actor-Critic (SAC) \cite{haarnoja2018}, whereas most previous methods are restricted to discrete domains. Finally, we provide theoretical analysis of the convergence and the policy improvement lower bound for our algorithm.

We evaluate our algorithm using PPO (LPPG-PPO) on variants of a 2D navigation environment. We compare the computational efficiency of our projection method with generic solvers within CVXPY \cite{diamond2016cvxpy}, and benchmark our performance against two state-of-the-art LMORL methods for continuous spaces: Lexicographic Projection Algorithm (LPA) \cite{tercan2024} and Lexicographic PPO (LPPO) \cite{skalse2022}. Experiments show that our method achieves higher efficiency with equivalent results when the number of objectives is below 100, and LPPG-RL outperforms baseline methods in both performance and stability. Ablation studies further demonstrate the effectiveness of our proposed enhancements.

Our contributions of this work are three-fold:
\begin{enumerate}
    \item We propose LPPG-RL, a novel LMORL algorithm for continuous spaces that guarantees strict priority ordering while training a policy end-to-end. With an ultra-light Dykstra's projection core, we achieve up to 20$\times$ greater computational efficiency compared to generic solvers.
    \item We introduce Subproblem Exploration (SE), a uniform rollout scheduler that dynamically focuses on the active priority layer, requiring no handcrafted weights or prior knowledge. SE achieves superior performance and training stability compared to existing baseline methods.
    \item We provide theoretical guarantees of convergence, including a principled stepsize bound that ensures monotonic policy improvement, and establish a lower bound on performance improvement at each update.
\end{enumerate}

\begin{figure*}[!t]
\centering
\includegraphics[width=0.9\linewidth]{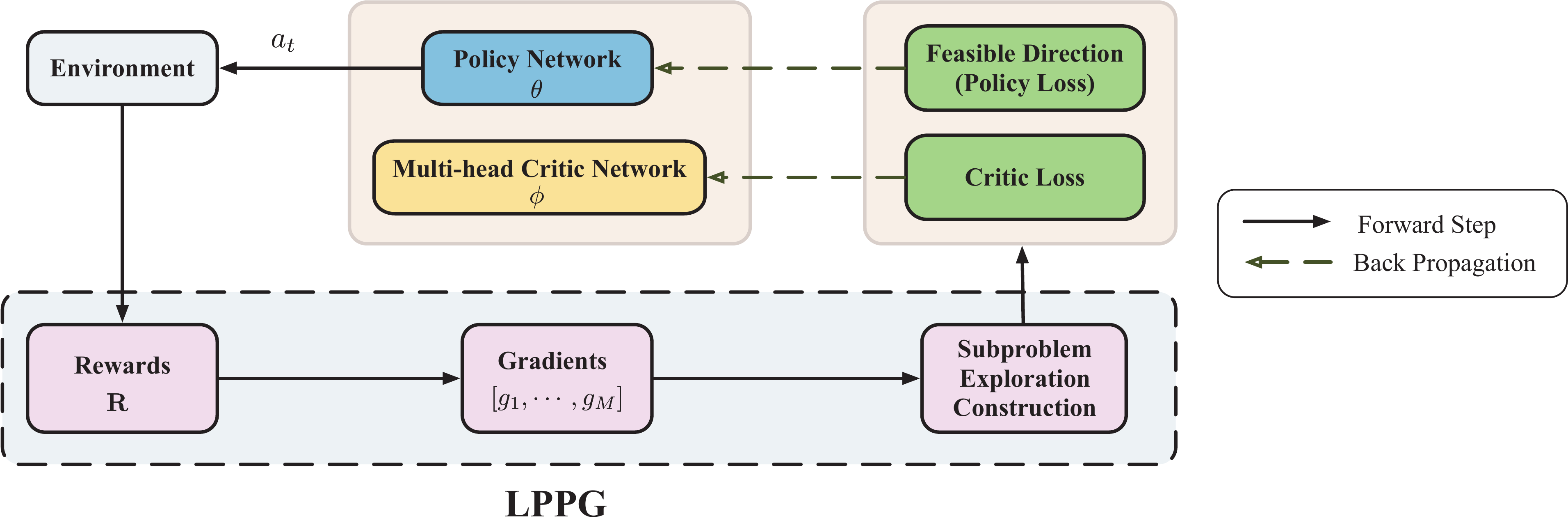} 
\caption{The overall workflow of LPPG-RL. The agent consists of a policy network and a multi-head critic network, where the policy network learns the global optimal lexicographic action. The lower part is LPPG. Policy gradients are calculated each time with rollouts, and a subproblem is drawn to get a feasible update direction so that each subtask can be trained uniformly.}
\label{fig:workflow}
\end{figure*}

\section{Related Works}
\subsection{Safe RL} 
Safe RL is typically formulated as a constrained optimization problem, and various techniques have been developed to handle safety constraints. Primal-dual (Lagrangian) methods are widely used by introducing penalty terms with tunable coefficients \cite{tessler2018, stooke2020}. However, selecting appropriate coefficients remains challenging in practice. Constrained Policy Optimization (CPO) \cite{achiam2017} addresses this issue by solving a trust-region problem at each step, but relies on accurate cost estimation and brings high computational complexity. To reduce this burden, Projection-based CPO (PCPO) \cite{yang2020c} simplifies the process by directly projecting policy gradients onto the feasible constraint set. In addition, explicit Lyapunov functions \cite{chow2018, chow2019} and control barrier functions \cite{dawson2023, deng2024} can be employed to provide formal safety guarantees. Despite the significant progress in Safe RL, explicitly ordered constraints remain largely unsolved. In such cases, constraints must be handled sequentially, which substantially increases computational costs.

\subsection{Multi-Objective RL}
MORL methods can be categorized into single-policy and multi-policy approaches. Single-policy methods combine multiple objectives into a single scalar reward, using weighted sums \cite{roijers2014, abels2018}, which require significant domain knowledge and extensive tuning \cite{vanmoffaert2013a}. Multi-policy methods approximate the Pareto front by repeatedly applying single-policy methods with different weight configurations \cite{mossalam2016, zuluaga2016}. To improve efficiency, evolutionary algorithms \cite{xu2020a} and Pareto Set Learning \cite{liu2025} have also been explored to help approximate the Pareto front. However, when explicit priorities are required, these methods often fall short. Lexicographic optimality demands filtering policies along a densely sampled and accurate Pareto front, which leads to computational and scalability issues as the number of objectives increases \cite{parisi2014, pirotta2015}. Therefore, existing MORL methods excel at balancing objectives but remain inefficient for strictly prioritized problems. 

\subsection{Lexicographic Multi-Objective RL}
LMORL was first explored through lexicographic optimization by \cite{gabor1998}. Later, a unified thresholded framework with theoretical analysis was proposed for both value-based and policy-based RL methods \cite{skalse2022}. However, the value-based framework—followed by works in autonomous driving\cite{li2019a, zhang2023}—filters actions by enumerating all candidates in each subtask, which restricts applications to discrete action spaces. \cite{rietz2024} extended the policy-based framework to Q-value, but this approach can fail when subtask Q-values differ in scale, and actually relies on a costly grid search to estimate continuous values. Recently, \cite{tercan2024} introduced Lexicographic Projection Algorithm (LPA) to address continuous problems. However, to our understanding, both LPA and the original policy-based framework requires setting  optimization and criteria thresholds according to the reward scale, demanding prior knowledge and repeated tuning, which makes methods quite sensitive and less practical. In contrast, we overcome these limitations by recasting lexicographic learning as a sequence of constrained gradient projections solved with an efficient Dykstra–style update, and by introducing SE to automatically balance subtask convergence instead of tuning thresholds with prior knowledge. 
 
\section{Problem Definition and Preliminaries}
Suppose there are $M$ subtasks, including objectives and constraints, defined in a lexicographic order and denoted by the task set $\mathcal{K} = \{K_1, \cdots, K_M\}$, where the subscript indicates priority (with $K_1$ the highest priority).

An LMORL problem can be formulated with Multi-Objective Markov Decision Processes (MOMDPs) \cite{skalse2022}, represented as $\mathcal{M} := \langle S, A, \mathbf{R}, P, \bm{\gamma} \rangle$. Here, $S$ and $A$ denote the state and action spaces, respectively, and $P$ is the state transition probability. The reward function $\mathbf{R}: S \times A \times S \rightarrow \mathbb{R}^M$ produces a rewards vector $[r_1, \cdots, r_M]^T$ of $\mathcal{K}$. $\bm{\gamma}$ is the vector of discount factors.

Following the definition, lexicographic relationships between subtasks are enforced through constraints. Formally, a policy $\pi$ is said to be lexicographically $\epsilon$-optimal within the policy set $\Pi_i$, if subtasks are finished and it satisfies:
\begin{equation} \label{eq:lexico pi def}
\Pi_{i+1} := \left\{ \pi \in \Pi_{i} \mid \max_{\pi' \in \Pi_{i}} J_i(\pi') - J_i(\pi) \leq \epsilon_i \right\}
\end{equation}
where $\Pi_0$ is the initial policy set and $\Pi_i$ is the feasible policy set for subtasks $\{K_1, \ldots, K_i \}$. When optimizing a subtask $K_{i+1}$, all higher priorities $\{K_1, \cdots, K_i\}$ must remain sufficiently close to their previous optima within a threshold $\epsilon_i$. For any subtask index $i$ and policy $\pi$, $J_i(\pi)$ is the performance criterion, such as the value function. It is important to note that, directly computing $\Pi_{i+1}$ via Equation~\ref{eq:lexico pi def} is generally intractable, especially in continuous domains.

\section{LPPG-RL}
In this section, we present LPPG-RL, an efficient end-to-end framework for solving LMORL problems in continuous spaces. The overall workflow of LPPG-RL is illustrated in Figure~\ref{fig:workflow}. Given an LMORL problem with an ordered subtask set $\mathcal{K}$, our framework proceeds as follows:  

\begin{itemize}
    \item \textbf{Actor-Critic Architecture}: We adopt the actor-critic structure, where a policy network generates the final feasible action and a multi-head critic network estimates the value for each subtask.
    \item \textbf{Data Collection}: Rollouts are collected following standard RL procedures, with the reward for each subtask stored separately.
    \item \textbf{Subtask Gradients}: The policy gradient for each individual subtask is computed separately.
    \item \textbf{Subproblem Extraction}: To promote systematic exploration and expedite learning, SE is introduced as a uniform rollout scheduler to extract a subproblem and allocate trajectories evenly across priority levels, reducing the chance of local optima.
    \item \textbf{Feasible Direction and Update}: A feasible policy update direction $d^*$ is computed based on the SE scheduler to update the policy. Instead of invoking generic solvers,  we employ a Dykstra projection loop, an ultra-light iterative algorithm that provably converges to the optimum with significantly reduced computational burden.
\end{itemize}
In the following, we first construct the LMORL problem with policy gradients. Next, we discuss critical challenges during training and introduce techniques to address issues and enhance performance. Finally, we instantiate our framework with LPPG-PPO and provide theoretical analysis.

\subsection{Projected Gradient for Lexicographic Tasks}
Policy gradient methods in standard RL aim to maximize the expected cumulative reward by repeatedly estimating the gradient, which can be formulated as \cite{schulman2015}:
\begin{equation} \label{eq:policy gradient form}
g = \mathbb{E}\left[
\sum_{t=0}^\infty \Psi_t \nabla_\theta \log \pi_{\theta}(a_t \mid s_t)
\right]
\end{equation}
where $\Psi_t$ can be the reward, Q-value, advantage, etc. A key advantage of using gradients is that it requires only the evaluation and differentiation of the policy log-density rather than enumerating all possible actions. This property facilitates a natural extension to multi-objective settings by directly operating on subtask-specific gradients. Since all gradients share a same parameter space, lexicographic order over the objectives can be straightforwardly imposed.

For each subtask $K_i \in \mathcal{K}$, let $J_i$ denote its optimization objective and $g_i = \nabla_\theta J_i$ the corresponding gradient. At each iteration, the policy parameters update as follows:
\begin{equation} \label{eq:network update}
\theta_{k+1} = \theta_{k} + \alpha d
\end{equation}
where $\alpha > 0$ is the learning rate and $d$ is the chosen update direction. Then, interpreting Equation~\ref{eq:lexico pi def} from a policy gradient perspective, we require that the objective $J_i$ does not decrease when optimizing for lower-priority subtasks. This can be captured by a first-order (linear) approximation:
\begin{equation} \label{eq:gd condition}
\begin{aligned}
& && J_i(\theta_{k+1}) \approx J_i(\theta_k) + \alpha g_i^Td \geq J_i(\theta_k) - \alpha\epsilon_i \\ 
& \Rightarrow && g_i^T d \geq -\epsilon_i \quad (\epsilon_i=0 \text{ by default)}\
\end{aligned}
\end{equation}
where $\theta_k$ is the network parameter at iteration $k$, and $\epsilon_i \geq 0$ serves as a small threshold for allowable degradation and helping convergence in the $i$-th subtask \cite{rietz2024}. Clearly, if the chosen update direction $d$ has a negative component along the original gradient $g_i$, the corresponding performance may deteriorate. Therefore, to achieve a lexicographically optimal update, we seek a direction $d$ that satisfies as many conditions in Equation~\ref{eq:gd condition} as possible. Note that, we set $\epsilon_i=0$ by default and rely on other techniques to maintain convergence. The threshold $\epsilon_i$ is only adjusted in specific cases where relaxing the constraint to address practical considerations.

Let $\mathcal{C}_M$ denote the intersection of half-spaces defined by Equation~\ref{eq:gd condition} for $i = \{1, \cdots, M \}$:
\begin{equation} \label{eq:direction set def}
\mathcal{C}_M = \bigcap_{i=1}^{M} \left\{ d \mid g_i^T d \ge -\epsilon_i \right\}
\end{equation}
Here, $\mathcal{C}_M$ forms a (polyhedral) cone, and any direction $d$ within this cone is considered feasible. As long as the intersection set $\mathcal{C}_M$ is not reduced to the trivial space $\{\bm{0}\}$, there always exists $d \in \mathcal{C}_M$ serving as a feasible update direction.

In practice, instead of simply selecting an arbitrary feasible direction, we formulate the following convex optimization problem to determine the optimal direction $d^*$. Specifically, we seek the direction closest to the gradient of the lowest-priority subtask, $g_M$, with all higher-priority subtasks within their prescribed thresholds:
\begin{equation}
    \begin{aligned}
        &\min_{d}          && \| d - g_M \|^2 \\
        &\text{ s.t.} && d \in \mathcal{C}_M
    \end{aligned}
\label{eq:min d g}
\end{equation}

An illustrative example of the optimal solution is shown in Figure~\ref{fig:optimal projection}.

\subsection{Existing Problems and Enhancements}

\begin{figure}[!t]
\centering
\includegraphics[width=0.9\linewidth]{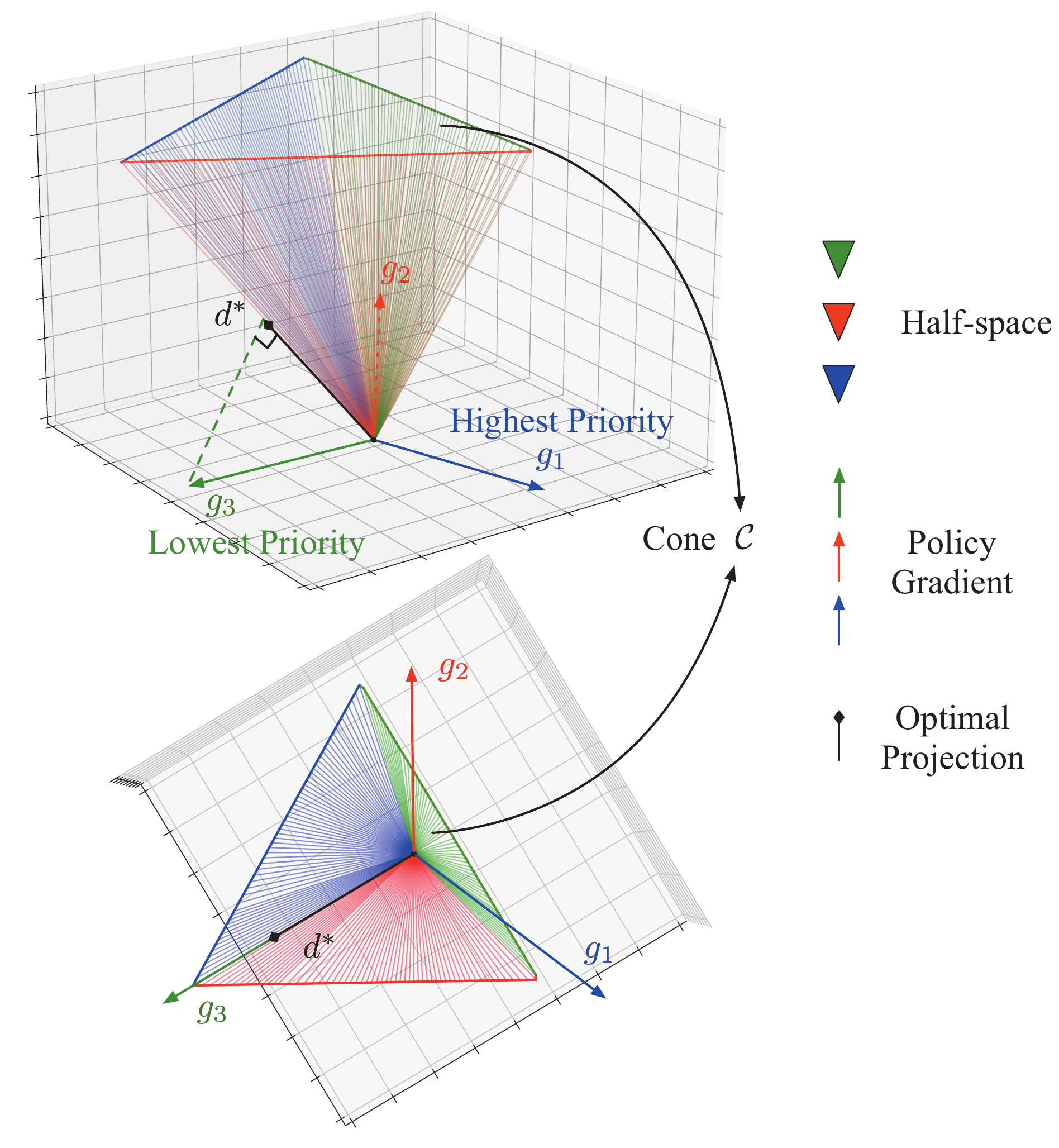}
\caption{Illustration of the optimal solution of Equation~\ref{eq:min d g} with three gradients. The blue, red and green vectors represent the policy gradients $g_1,g_2,g_3$, ordered from high to low priority, serving as the normal vectors of the respective half-spaces. The intersection of these half-spaces forms a cone. The optimal solution, shown as the black arrow $d^*$, is the vector within the intersection $\mathcal{C}$ that is closest to $g_3$.}
\label{fig:optimal projection}
\end{figure}

\subsubsection{Optimization Efficiency}
One common approach to directly solve problem~\ref{eq:min d g} is to employ standard convex optimization solvers, such as those provided in CVXPY. However, repeatedly invoking these solvers during update steps can incur significant computational burden.

To address this issue, we propose replacing general-purpose solvers with \textbf{Dykstra’s projection algorithm} \cite{boyle1986}, an efficient iterative method that, to our knowledge, has not been previously applied in LMORL. Dykstra’s projection extends the classical Sequential Orthogonal Projection (SOP) method. SOP is typically employed to find a feasible $d \in \mathcal{C}$ by iteratively removing components that violate the constraints, as follows:
\begin{equation} \label{eq:sop}
    d_{i+1} = 
        \begin{cases}
            d_i - \dfrac{g_i^Td_i}{\|g_i\|^2}g_i \quad & \text{if} \quad  g_i^Td_i\leq -\epsilon_i\\ 
            d_i & \text{else}
        \end{cases}
\end{equation}
In contrast to SOP, Dykstra's projection algorithm iteratively projects the gradient $g_M$ onto the intersection set, introducing auxiliary variables $r_i$ to track the residuals from previous projections. This mechanism ensures the final convergence to the optimal solution of Proposition~\ref{eq:min d g}. An iterative update proceeds as follows:
\begin{equation} \label{eq:dykstra iteration}
    \begin{aligned}
        & d_{i+1}^{(t)} = d_i^{(t)} - \dfrac{g_i^T(d_i^{(t)}+r_i^{(t)})}{\|g_i\|^2}g_i \\ 
        & r_{i}^{(t)} = d_i^{(t)} + r_{i}^{(t)} - d_{i+1}^{(t)}
    \end{aligned}
\end{equation}
where $t$ is the iteration number. It has been proven that the sequence $\{ d_i^{(t)} \}$ generated by Dykstra’s projection converges to the solution of Equation~\ref{eq:min d g} as long as the intersection set is non-empty \cite{gaffke1989}. As noted previously, $\bm{0}$ constitutes a trivial solution in our context, which means the convergence of Dykstra's projection is always guaranteed for our optimization.

\subsubsection{Gradient Vanishing}
Another challenge during training is gradient vanishing, which arises under two different conditions. First, as illustrated in Figure~\ref{fig:optimal projection}, when the optimal direction $d^*$ is on the boundary of $\mathcal{C}$, there exists at least one higher-priority policy gradient $g_i$ such that $g_i^T d^* = 0$, causing some subtasks trapped in the local optima. Second, when $\mathcal{C}$ contains only the trivial solution $\bm{0}$, $d^*$ makes no contribution to any subtask, and training stagnates.

To address gradient vanishing, we introduce \textbf{SE} as a rollout scheduler during training. Specifically, we extract a subproblem containing the top-$N$ subtasks:
\begin{equation}
    \begin{aligned}
        &\min_{d} && \| d - g_N \|^2 \\
        &\text{ s.t. } && d \in \mathcal{C}_N, \quad 1\leq N\leq M \\ 
    \end{aligned}
    \label{eq:sub min d g}
\end{equation}
where $N \sim \text{Uniform}(\{1,\cdots, M\})$ is sampled uniformly. For any $N$, if $\|d^*\| = 0$, we recursively set $N \leftarrow N-1$ and solve the higher-level subproblem, until a non-zero feasible solution is found. This approach ensures each subtask is selected and trained equally often, thereby reducing the risk of local optima caused by gradient vanishing. Besides, a major advantage over previous methods is that SE does not require accurate prior knowledge or additional hyperparameters.

Finally, the general LPPG framework for policy gradient RL algorithms is summarized in Algorithm~\ref{alg:lexicographic_policy_gradient}. Specific instantiations of LPPG-PPO and LPPG-SAC, are provided in Appendix A. 

\subsection{Theoretical Analysis}
In this subsection, we provide a theoretical analysis of the convergence properties of LPPG-RL and establish an exact lower bound for each subtask during a policy update.

\begin{theorem} \label{thm:lexico convergence}
    Consider an LMORL problem with subtask set $\mathcal{K} = \{K_1,\cdots, K_M\}$ and LPPG-RL algorithm in a general actor-critic framework. Let $\theta_t$ be the actor parameters and $\phi_t$ be the multi–head critic parameters. Assume, 
    \begin{enumerate}
    \item Two–timescale stepsizes.\;
          The actor and critic learning rates $\alpha_{\theta,t},\alpha_{\phi,t}>0$
          satisfy  
          $\sum_t\alpha_{\theta,t}=\sum_t\alpha_{\phi,t}=\infty,\;
            \sum_t(\alpha_{\theta,t}^2+\alpha_{\phi,t}^2)<\infty,\;
            \lim_{t\rightarrow\infty}\alpha_{\theta,t}/\alpha_{\phi,t}=0$
    \item $L$–smooth objectives.\;
          Each sub-objective $J_i(\theta)$ is twice differentiable w.r.t $\theta$, with Hessian $L_i$-Lipschitz continuous.
    \end{enumerate}
    Then, we have our parameters converge to a local or global lexicographic optimum $(\theta^*, \phi^*(\theta^*))$ with a stepsize $\alpha_\theta \leq \min\left\{2g_i^Td_i/(L_i\|d\|^2),i \in \{1,\cdots,M\}\right\}$
\end{theorem}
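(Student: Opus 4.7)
The plan is to decompose the argument into three stages: (i) a two-timescale stochastic approximation step that lets us treat the critic as already converged from the actor's viewpoint; (ii) a per-iteration descent-style inequality for every sub-objective that justifies the claimed stepsize bound; and (iii) an asymptotic argument that shows the iterates settle at a point which is lexicographically optimal in the sense of Equation~\ref{eq:lexico pi def}. The two-timescale assumption $\alpha_{\theta,t}/\alpha_{\phi,t}\to 0$ with the standard Robbins-Monro summability conditions is exactly the setting of Borkar's two-timescale SA theorem, so I would first cite that result to argue that $\phi_t$ tracks $\phi^*(\theta_t)$ on the fast timescale. Consequently, the subtask gradient estimates $\hat g_i$ used inside LPPG become unbiased for $g_i(\theta_t)=\nabla_\theta J_i(\theta_t)$ up to a martingale-difference noise with bounded second moment, which is what we need to reduce the analysis to a noisy gradient method on $\theta$ alone.

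Next I would establish the stepsize bound. By assumption 2, each $J_i$ has an $L_i$-Lipschitz Hessian, hence is $L_i$-smooth, so the descent lemma gives
\begin{equation*}
J_i(\theta_{k+1}) \;\geq\; J_i(\theta_k) + \alpha_\theta\, g_i^T d \;-\; \tfrac{L_i\alpha_\theta^{2}}{2}\|d\|^{2}.
\end{equation*}
By construction of the projection problem (Equation~\ref{eq:min d g} and its SE variant Equation~\ref{eq:sub min d g}), the update direction $d$ returned by the Dykstra loop lies in $\mathcal{C}_N\subseteq\mathcal{C}_i$ for every higher-priority index $i<N$, so $g_i^T d\geq 0$ (taking $\epsilon_i=0$). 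Solving the quadratic lower bound above for monotone non-degradation then yields precisely $\alpha_\theta\leq 2 g_i^T d/(L_i\|d\|^{2})$, and taking the minimum over the active indices gives the stated bound. This step simultaneously produces the per-iteration improvement inequality that will serve as the Lyapunov increment.

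The third stage lifts this per-step inequality into an asymptotic statement. Define $V(\theta)=J_1(\theta)$ as the top-level Lyapunov function; by the previous step $V(\theta_k)$ is non-decreasing and bounded above, so it converges. Summing the descent inequality across iterations and using $\sum_t\alpha_{\theta,t}=\infty$, $\sum_t\alpha_{\theta,t}^{2}<\infty$ forces $g_1^T d_k\to 0$ along the trajectory. I would then argue inductively down the priority ladder: conditional on $J_1,\ldots,J_{i-1}$ having reached their stationary plateaus, the SE scheduler samples $N=i$ with constant probability, so the same Robbins-Siegmund argument applied to $J_i$ restricted to $\Pi_{i-1}$ forces $g_i$ to lie in the normal cone generated by $\{g_1,\ldots,g_{i-1}\}$ at the limit, which is precisely the first-order lexicographic stationarity condition in Equation~\ref{eq:lexico pi def}. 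The recursion in SE that decrements $N$ whenever $\|d^*\|=0$ guarantees that this gradient-vanishing fixed point coincides with the lexicographic optimum rather than an artificial stall.

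The main obstacle I expect is the interaction between the discrete SE randomization, the implicit Dykstra inner loop, and the descent analysis. Because $d$ depends nonlinearly on the realized $N$ and on all subtask gradients through the projection, the resulting process is not a standard smooth gradient flow, and the improvement constant $g_i^T d/\|d\|^{2}$ could in principle be arbitrarily small near the boundary of $\mathcal{C}_N$. I would handle this by (a) invoking the exact convergence of Dykstra's iterates to the unique solution of Equation~\ref{eq:min d g} established in \cite{gaffke1989} to treat the projection as exact on the slow timescale, and (b) partitioning iterations according to which $N$ was drawn and applying the descent lemma separately on each subsequence, using the uniform sampling probability $1/M$ to lower bound the expected improvement. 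A supporting Lemma on the Lipschitz continuity of the Dykstra projection map in the $g_i$'s would make this quantitative and would also be the natural bridge to the improvement lower bound used in the companion result.
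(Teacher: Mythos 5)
Your proposal follows essentially the same route as the paper's own proof: a two-timescale stochastic-approximation step (citing Borkar) so the critic tracks $\phi^*(\theta_t)$, followed by a second-order Taylor/descent-lemma argument combined with $g_i^T d \ge 0$ from the projection to obtain exactly the stepsize bound $\alpha_\theta \le \min_i 2 g_i^T d/(L_i\|d\|^2)$. Your third stage (the Lyapunov/Robbins--Siegmund induction down the priority ladder) is actually more detailed than the paper's rather terse conclusion, which simply combines the two lemmas and asserts convergence to $(\theta^*,\phi^*(\theta^*))$; note also that you inherit the paper's own conflation of ``Hessian $L_i$-Lipschitz'' with the bounded-Hessian condition $\|H_i\|\le L_i$ that the descent lemma actually requires.
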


The proof of Theorem~\ref{thm:lexico convergence} is provided in Appendix E. 

\begin{theorem} \label{thm:update lower bound}
    Consider an LMORL problem, with the policy update direction $d$ by LPPG, and the stepsize $\alpha_\theta$. Then, for each policy update from $\pi_\theta$ to $\pi_{\theta'}$, we have the lower bound for the improvement of each performance $J_i$,
    \begin{equation} \notag
        J_i(\pi') - J_i(\pi) \geq \frac{\alpha_\theta \delta_i}{1-\gamma} - \frac{2\gamma C_i^{\pi',\pi}}{(1-\gamma)^2}\sqrt{\frac{\eta}{2}}
    \end{equation}
\end{theorem}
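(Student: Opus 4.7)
The plan is to adapt the performance-difference / trust-region analysis of Kakade--Langford, TRPO, and CPO to each individual subtask $J_i$, and then to exploit the LPPG feasibility guarantee $g_i^T d \ge -\epsilon_i$ to sign the linearized improvement term. The proof naturally splits into (i) relating the true gap $J_i(\pi')-J_i(\pi)$ to a tractable surrogate built on the old state distribution, and (ii) lower-bounding that surrogate using the structure of the LPPG update.

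First, I would invoke the performance difference lemma for subtask $i$,
\begin{equation*}
J_i(\pi')-J_i(\pi) = \frac{1}{1-\gamma}\,\mathbb{E}_{s\sim d^{\pi'},\,a\sim \pi'}\bigl[A_i^\pi(s,a)\bigr],
\end{equation*}
and replace the intractable state distribution $d^{\pi'}$ by $d^\pi$ to form the standard surrogate
\begin{equation*}
L_i^\pi(\pi') = \frac{1}{1-\gamma}\,\mathbb{E}_{s\sim d^\pi,\,a\sim \pi'}\bigl[A_i^\pi(s,a)\bigr].
\end{equation*}
A coupling argument (Theorem~1 of Achiam et al., specialized to subtask $i$) then controls the mismatch,
\begin{equation*}
\bigl|J_i(\pi')-J_i(\pi)-L_i^\pi(\pi')\bigr| \le \frac{2\gamma C_i^{\pi',\pi}}{(1-\gamma)^2}\, D_{\mathrm{TV}}(\pi'\|\pi),
\end{equation*}
with $C_i^{\pi',\pi}=\max_s|\mathbb{E}_{a\sim \pi'}[A_i^\pi(s,a)]|$. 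Pinsker's inequality together with the trust-region budget $D_{\mathrm{KL}}(\pi'\|\pi)\le\eta$ immediately yields $D_{\mathrm{TV}}(\pi'\|\pi)\le\sqrt{\eta/2}$, which reproduces the negative penalty term appearing in the theorem.

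Second, I would lower-bound $L_i^\pi(\pi')$ itself. A first-order Taylor expansion of $L_i^\pi(\pi_{\theta'})$ around $\theta'=\theta$, combined with the standard identity $\nabla_{\theta'}L_i^\pi(\pi_{\theta'})|_{\theta'=\theta}=g_i$ and the LPPG update $\theta'=\theta+\alpha_\theta d$, gives $L_i^\pi(\pi')\ge \alpha_\theta g_i^T d/(1-\gamma)$ up to a higher-order remainder. Identifying $\delta_i$ with this advantage-weighted inner product between $g_i$ and $d$, the LPPG projection guarantee $d\in\mathcal{C}_M$ enforces $\delta_i\ge -\epsilon_i$, and $\delta_i\ge 0$ under the default $\epsilon_i=0$. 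Combining this linear lower bound with the TV/KL penalty from Step~1 delivers exactly the claimed inequality.

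The main obstacle is making the first-order step genuinely tight: the Taylor residue in the linearization of $L_i^\pi$ must be either negligible or absorbable into the existing penalty. I plan to handle this by invoking the $L_i$-smoothness of $J_i$ from Theorem~\ref{thm:lexico convergence} and the stepsize restriction $\alpha_\theta\le \min\{2 g_i^T d_i/(L_i\|d\|^2)\}$ there, which ensures the quadratic remainder is dominated by the linear improvement along $d$. A minor technical check is verifying that $C_i^{\pi',\pi}$ is finite, which follows from the standard bounded-reward assumption on the MOMDP. The remaining ingredients--performance difference lemma, coupling inequality, and Pinsker's inequality--are routine once assembled in this order.
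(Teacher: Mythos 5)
Your proposal follows essentially the same route as the paper: the paper's Appendix F likewise starts from the Kakade--Langford/Achiam-style policy update bound (stated there as a single lemma rather than re-derived from the performance difference lemma plus a coupling argument), linearizes the surrogate via importance sampling to identify the first term with $\alpha_\theta g_i^T d = \alpha_\theta\delta_i$, and applies Pinsker's inequality under the KL budget $\eta$ to obtain the $\sqrt{\eta/2}$ penalty. Your explicit treatment of the Taylor remainder via the smoothness and stepsize bound is slightly more careful than the paper, which simply drops the $\mathcal{O}(\|\Delta\theta\|^2)$ term for small $\alpha_\theta$, but the argument is the same.
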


The proof of Theorem~\ref{thm:update lower bound} is provided in Appendix F. 

\begin{figure*}[!t]
    \centering
    \subfigure[Simulation trajectories]{
        \begin{minipage}{0.65\columnwidth} \label{fig:single goal traj}
            \centering
            \includegraphics[width=\linewidth]{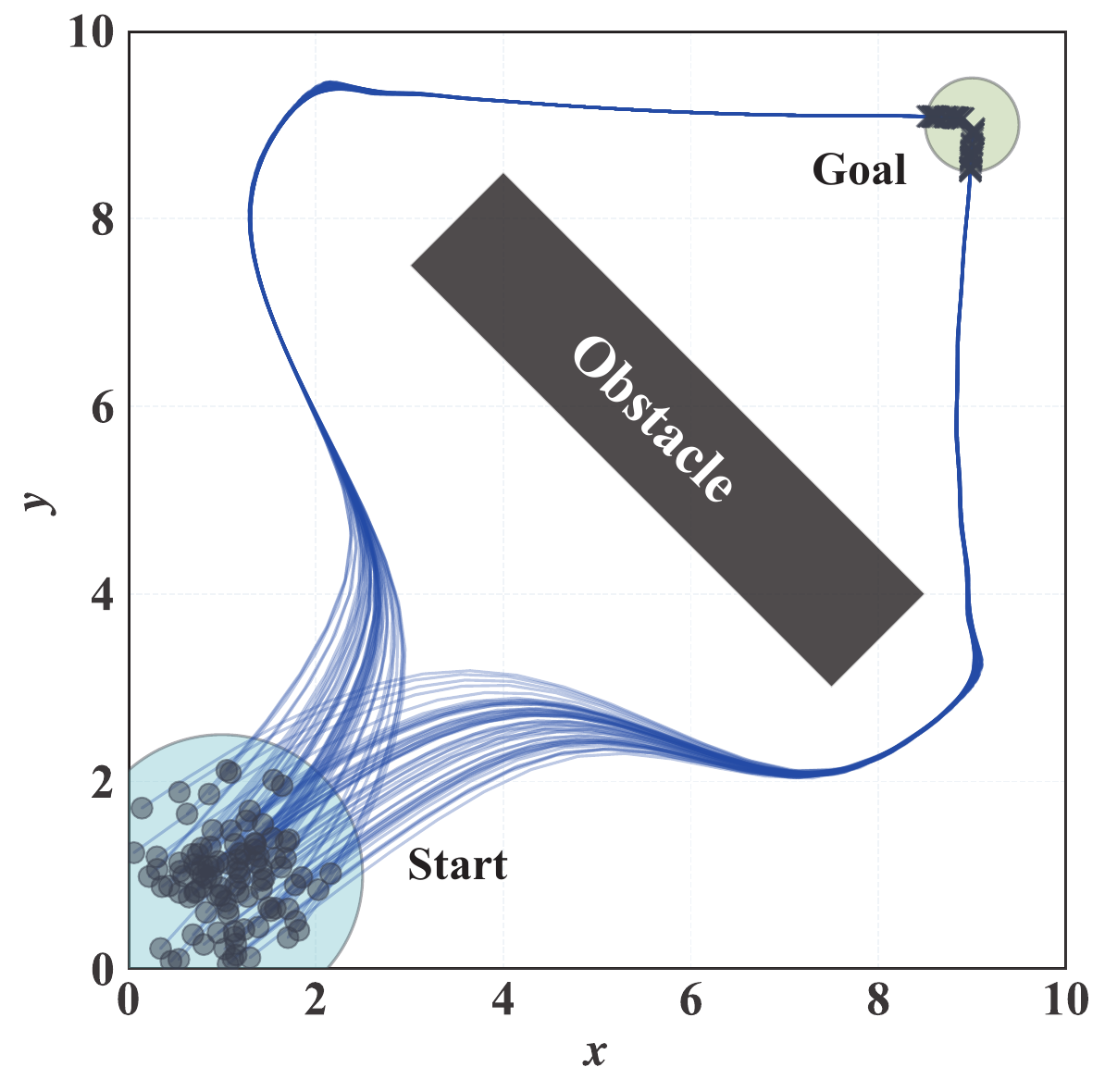}
        \end{minipage}
    }
    \subfigure[95\% Confidence corridor with noise]{
        \begin{minipage}{0.65\columnwidth} \label{fig:single goal traj 95 estimated}
            \centering
            \includegraphics[width=\linewidth]{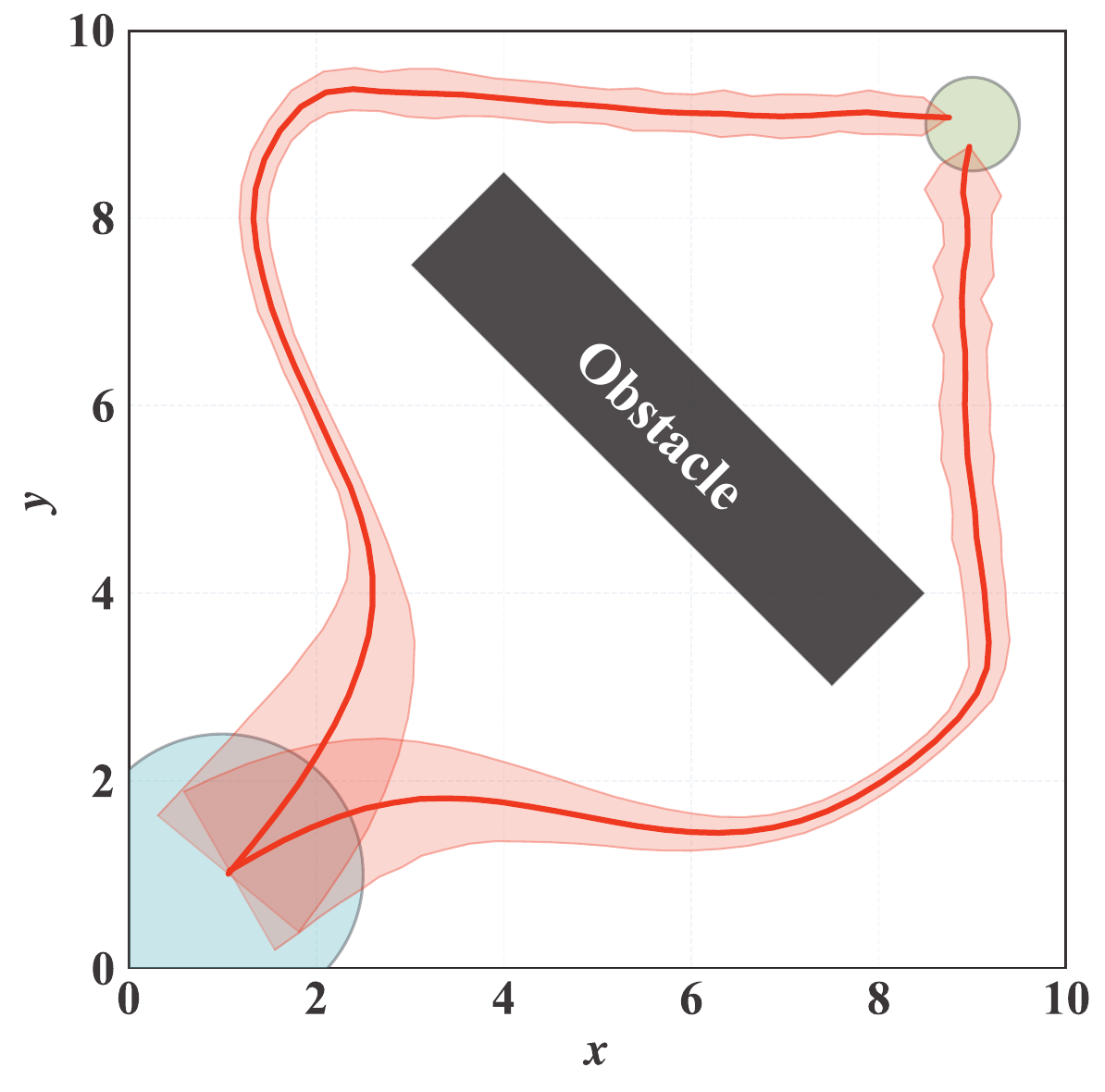}
        \end{minipage}
    }
    \subfigure[$\pi^*(a\mid s)$]{
        \begin{minipage}{0.65\columnwidth} \label{fig:single goal a distribution}
            \centering
            \includegraphics[width=\linewidth]{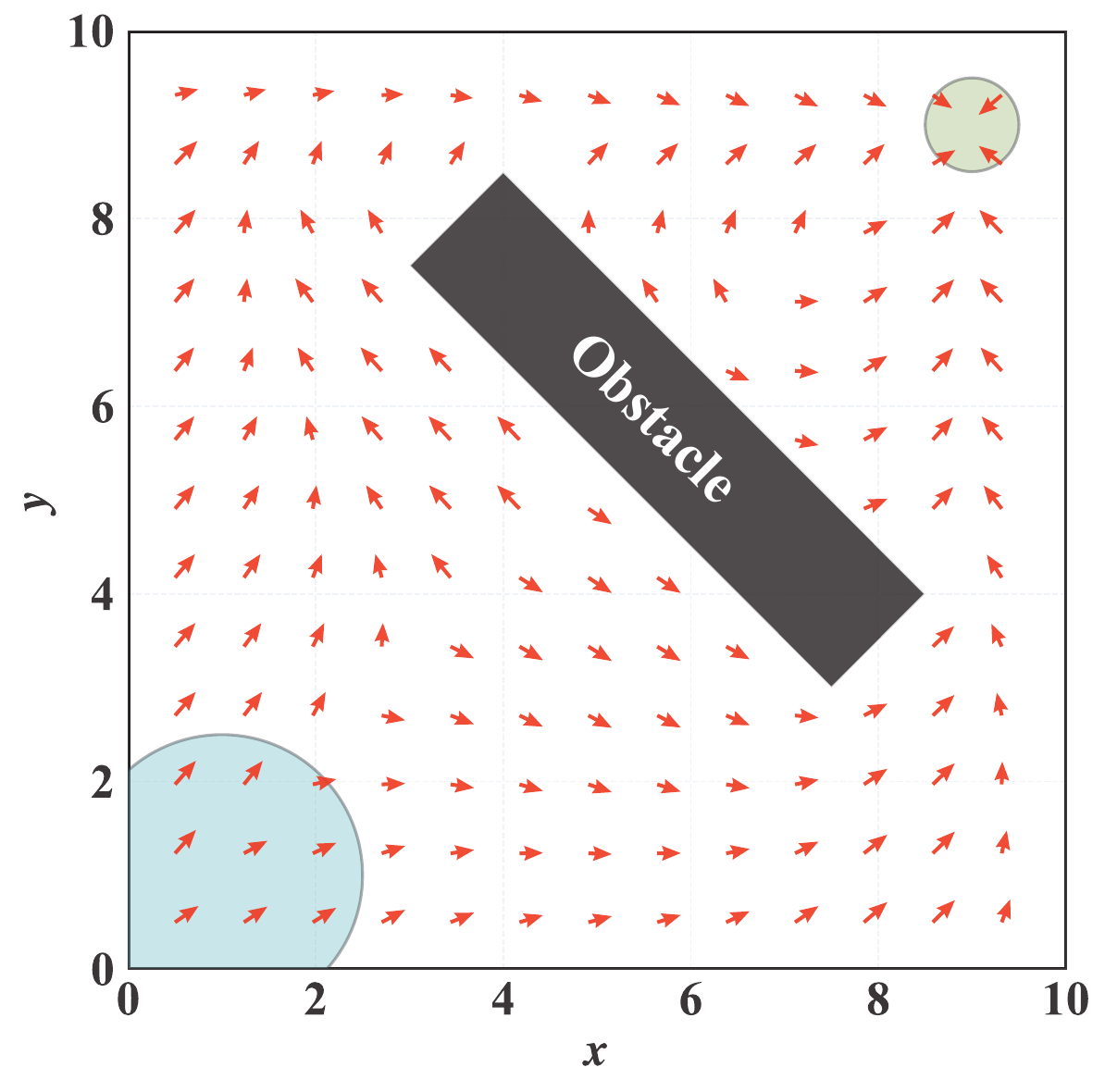}
        \end{minipage}
    }
    \caption{\textbf{1 Goal experiment} in the 2D navigation environment. \ref{fig:single goal traj} shows the map and agent trajectories under 50 Monte-Carlo simulations with different seeds. \ref{fig:single goal traj 95 estimated} displays the 95\% statistical confidence corridor of trajectories under $\mathcal{N}(0, 0.1)$ Gaussian noise applied to the state. \ref{fig:single goal a distribution} illustrates of the deterministic policy direction for different agent locations in the map, where the length of arrows corresponds to the action magnitude.}    
    \label{fig:single goal sim result}
\end{figure*}

\begin{figure*}[!t]
    \centering
    \subfigure[Green$\rightarrow$Red]{
        \begin{minipage}{0.65\columnwidth} \label{fig:double_goal}
            \centering
            \includegraphics[width=\linewidth]{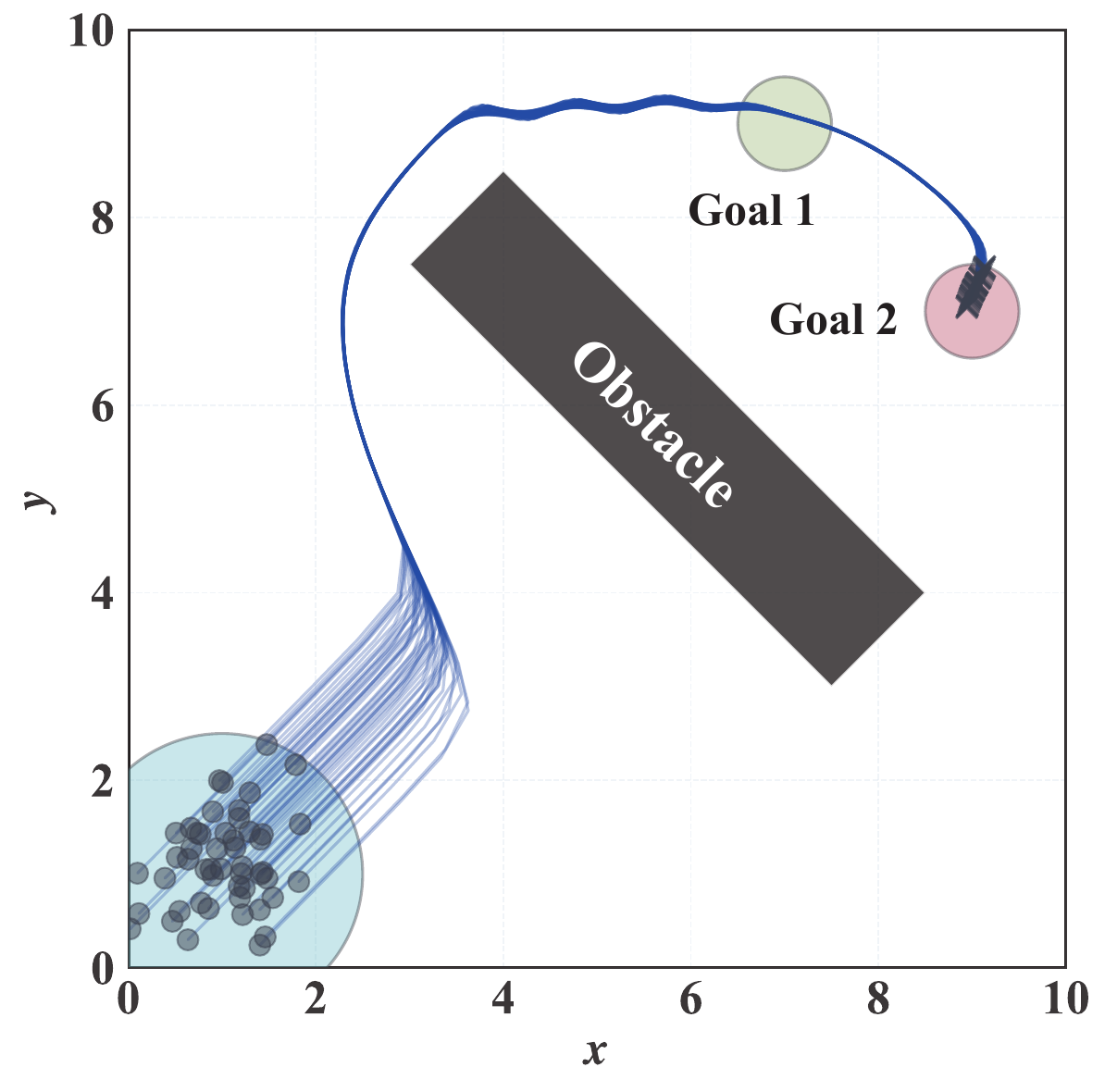}
        \end{minipage}
    }
    \subfigure[Red$\rightarrow$Green]{
        \begin{minipage}{0.65\columnwidth} \label{fig:double_goal_reversed}
            \centering
            \includegraphics[width=\linewidth]{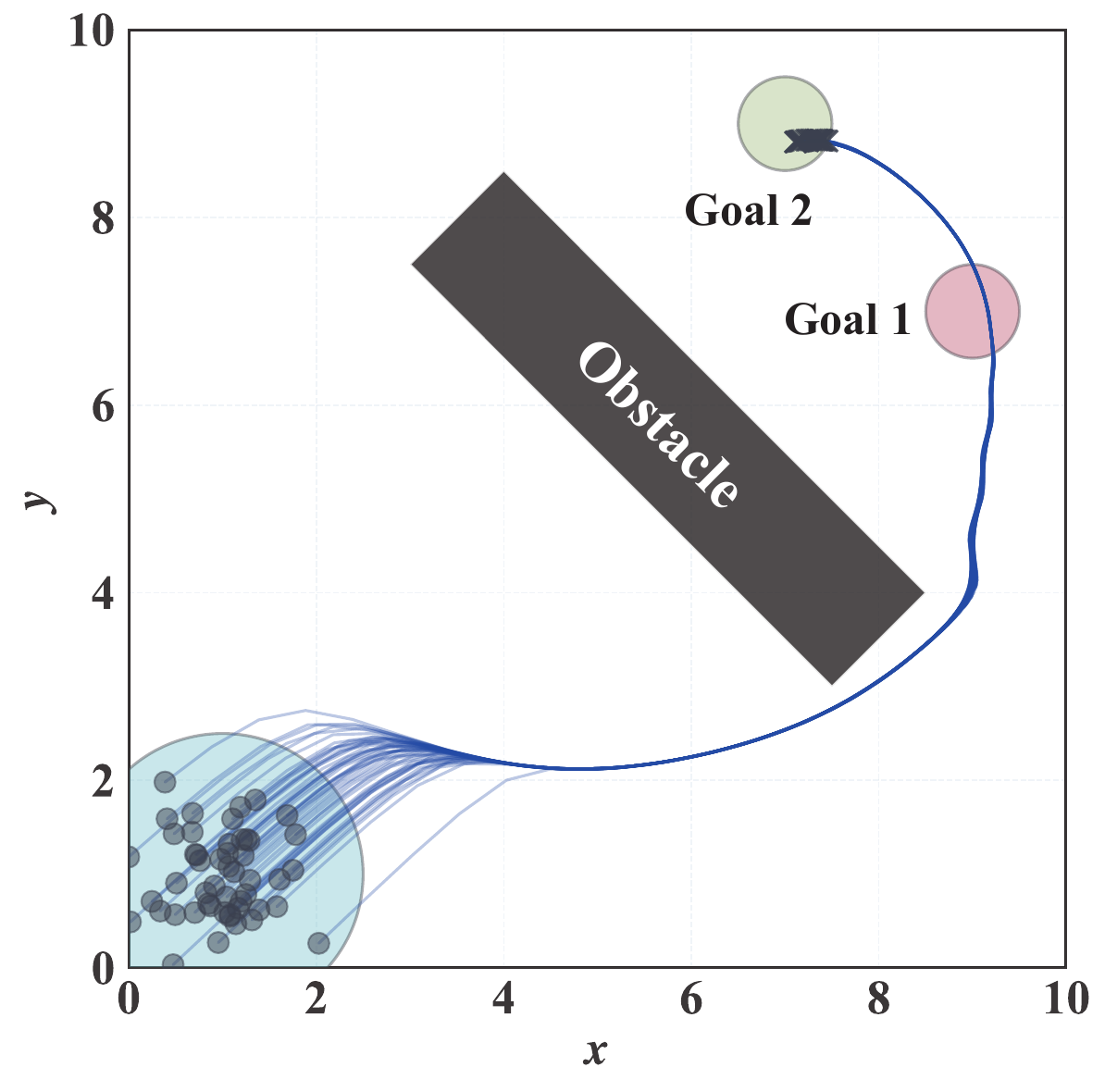}
        \end{minipage}
    }
    \subfigure[Returns comparison]{
        \begin{minipage}{0.65\columnwidth} \label{fig:double_goal_returns_compare}
            \centering
            \includegraphics[width=\linewidth]{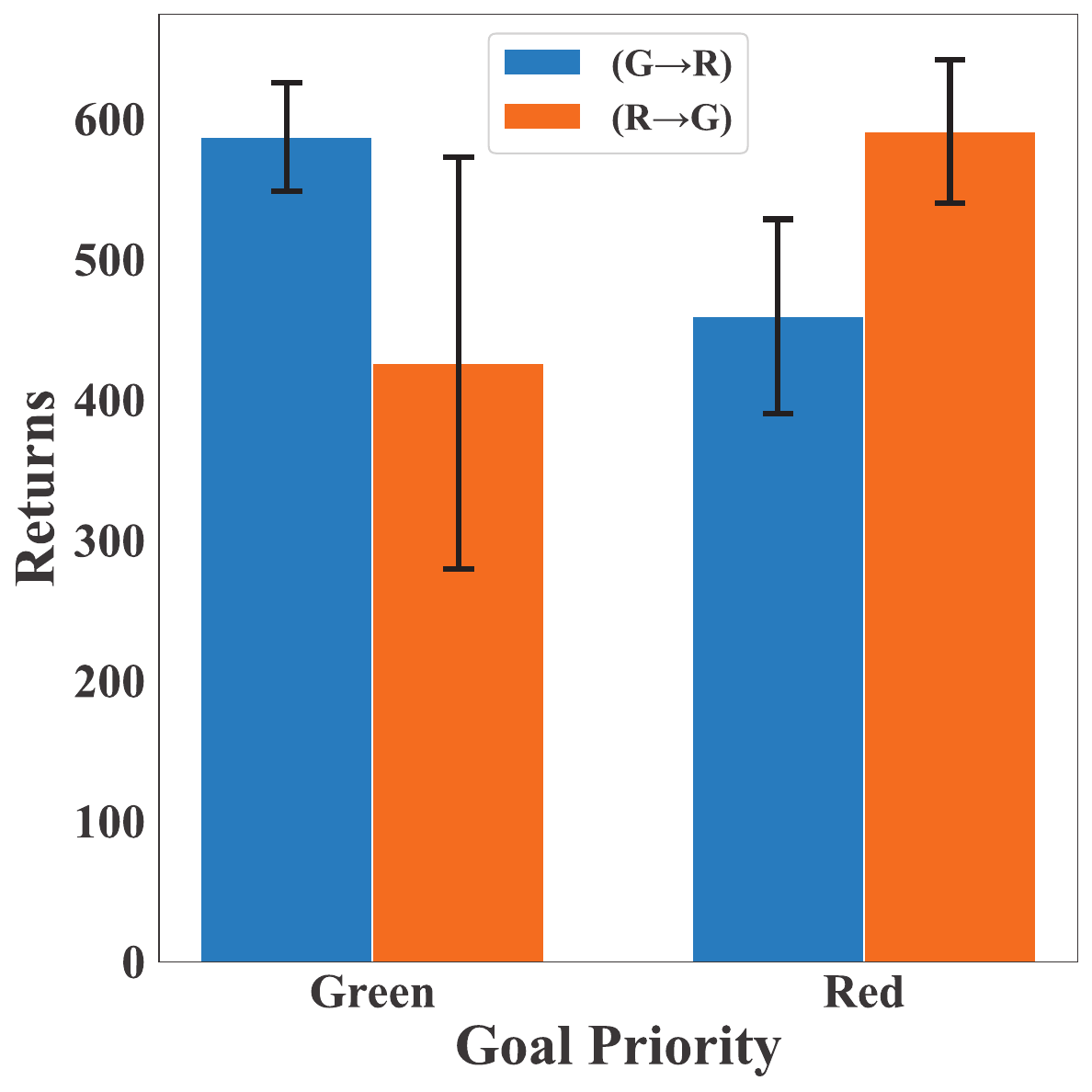}
        \end{minipage}
    }
    \caption{\textbf{2 Goal experiment} in the 2D navigation environment. The start region is initialized as before and two symmetric goal regions are designated as different priority subtasks. Figure~\ref{fig:double_goal} and Figure~\ref{fig:double_goal_reversed} show the trajectories under 50 Monte-Carlo simulations with two different subtask priority configurations. The green goal region has higher priority in Figure~\ref{fig:double_goal} while it has lower priority in Figure~\ref{fig:double_goal_reversed}. As shown in Figure~\ref{fig:double_goal_returns_compare}, a comparison of the returns for the two goal regions under two different priority configurations is presented, demonstrating that our method strictly preserve lexicographic priorities. The legend "G$\rightarrow$R" and "R$\rightarrow$G" correspond to Figure~\ref{fig:double_goal} and Figure~\ref{fig:double_goal_reversed}, respectively.}
    \label{fig:double goal sim result}
\end{figure*}

\begin{algorithm}[tb]
\caption{LPPG}
\label{alg:lexicographic_policy_gradient}
\begin{algorithmic}[1]
\REQUIRE: policy network $\pi(\cdot|s)$, prioritized subtasks set $\mathcal{K}$ \\
\STATE Compute gradient $g_i$ for each subtask
\STATE Sample subtask index $N \sim \mathrm{Uniform}(\{1,\ldots,|\mathcal{K}|\})$
\FOR{$n = N,\cdots,1$}
    \STATE $\mathcal{C}_n \leftarrow \bigcap\limits_{i=1}^{n} 
        \left\{\, d \,\middle|\, g_i^\top d \ge -\epsilon_i \,\right\}$
    \IF{$\mathcal{C}_n = \{\bm{0}\}$}
        \STATE \textbf{continue}
    \ELSE
        \STATE \textbf{break}
    \ENDIF
\ENDFOR
\STATE Solve $d \in \mathcal{C}_n$ closest to $g_n$,
\RETURN Optimal gradient $d^*$
\end{algorithmic}
\end{algorithm}

\section{Experiments}

\begin{table*}[!t]
    \centering
    \begin{tabular}{lccccc}
        \toprule
        Algorithm & \textbf{Nav2D-1G (ms)} & \textbf{Nav2D-10G (ms)} & \textbf{Nav2D-20G (ms)} & \textbf{Nav2D-50G (ms)} & \textbf{Nav2D-100G (ms)} \\
        \midrule
        \textbf{OSQP} & 18.30$\pm$1.61 & 41.46$\pm$3.56 \ \ & \ \ 87.37$\pm$14.66 & 701.61$\pm$158.83 & \textbf{2907.73$\pm$801.74} \\
        \textbf{SCS} & 21.13$\pm$2.31 & 44.47$\pm$12.04 & 113.97$\pm$25.54 & 803.13$\pm$255.26  & 3910.14$\pm$926.18 \\ 
        \textbf{CLARABEL} & 26.79$\pm$1.41 & 79.17$\pm$3.91 \ \  & 154.44$\pm$10.83 & 994.35$\pm$120.79 & 4162.98$\pm$554.81 \\
        \textbf{Dykstra(Ours)} & \textbf{\ \ 0.93$\pm$0.34} & \textbf{5.33$\pm$2.38} & \textbf{\ \ 22.15$\pm$10.46} & \textbf{685.61$\pm$146.68} & 2983.20$\pm$370.53 \\
        \bottomrule
    \end{tabular}
    \caption{Ablation study of Dykstra's projection method with varying numbers of goal regions in the 2D navigation environment. All results are reported in milliseconds (ms). For \textbf{Nav2D-nG} environment, we generate $\textbf{n}$ different goal regions as lexicographic subtasks. Each setting is trained with 100k steps, 5 seeds, starting with the lowest priority subtask $K_M$. We compare the average runtime of policy-gradient search between Dykstra and several commonly used CVXPY solvers. All solvers yields the same projected gradient within a $10^{-6}$ relative error. See Appendix C for additional details.}
    \label{tab:compare sampling}
\end{table*}

\begin{table*}[!t]
\centering
\setlength{\tabcolsep}{1mm}  
\begin{tabular}{
  l
  *{3}{r c}   
  *{4}{r c}   
  *{4}{r c}   
}
\toprule
\multirow{2}{*}{Algorithm} &
\multicolumn{6}{c}{\textbf{Nav2D-1G}} &
\multicolumn{8}{c}{\textbf{Nav2D-2G}} &
\multicolumn{8}{c}{\textbf{Nav2D-2G-rev}} \\
\cmidrule(lr){2-7}\cmidrule(lr){8-15}\cmidrule(lr){16-23}
& \multicolumn{2}{c}{$K_1$} & \multicolumn{2}{c}{$K_2$} & \multicolumn{2}{c}{$K_3$}
& \multicolumn{2}{c}{$K_1$} & \multicolumn{2}{c}{$K_2$} & \multicolumn{2}{c}{$K_3$} & \multicolumn{2}{c}{$K_4$}
& \multicolumn{2}{c}{$K_1$} & \multicolumn{2}{c}{$K_2$} & \multicolumn{2}{c}{$K_3$} & \multicolumn{2}{c}{$K_4$} \\
\midrule
\textbf{LPA} &
100 & \yes & -19 & \no & 714 & \yes &
98 & \yes &   0 & \yes & 631 & \yes & 246 & \no &
97 & \yes &   0 & \yes & 603 & \yes & -39 & \no \\

\textbf{LPPO} &
100 & \yes & -20 & \no & 740 & \yes &
100 & \yes & -10 & \no & 753 & \yes & 382 & \no &
100 & \yes & -10 & \no & 785 & \yes & 197 & \no \\

\textbf{LPPG-PPO w/o SE} &
100 & \yes & -12 & \no & 704 & \yes &
98 & \yes &   0 & \yes & 199 & \no & 604 & \yes &
98 & \yes &   0 & \yes & 258 & \no & 612 & \yes \\

\textbf{LPPG-PPO (Ours)} &
\B{98}  & \B{\yes} & \B{0} & \B{\yes} & \B{427} & \B{\yes} &
\B{96} & \B{\yes} & \B{0} & \B{\yes} & \B{587} & \B{\yes} & \B{459} & \B{\yes} &
\B{97} & \B{\yes} & \B{0} & \B{\yes} & \B{591} & \B{\yes} & \B{426} & \B{\yes} \\
\bottomrule
\end{tabular}
\caption{Comparison of baseline algorithm LPA \cite{tercan2024} and LPPO \cite{skalse2022}, as well as an ablation study of SE (LPPG-PPO w/o SE), on performances in three environments under 10 different seeds. The metric is the average returns after 1M steps, where higher values are better. \yes and \no, indicate whether the subtask is completed. For each environment, the lexicographic order is verified first. The light blue block marks the optimal policy that satisfies the priority order and achieves the highest returns. The completed result table with standard deviations is provided in Appendix C.}
\label{tab:compare rewards}
\end{table*}
In this section, we present our experimental setup and results. We focus on two main aspects: (1) demonstrating that LPPG-RL achieves zero constraint violation with high performance in safety-critical tasks. (2) illustrating explicitly how LPPG-RL distinguishes and handles different priorities when permuting lexicographic orders of subtasks. 

All experiments are conducted using LPPG-PPO in a 2D navigation environment with varying numbers of prioritized subtasks. We consider three scenarios: \textbf{(1) Nav2D-1G}: The agent is required to navigate to a goal located behind a rectangular obstacle. \textbf{(2) Nav2D-2G}: Two goals are placed symmetrically behind the obstacle. The agent must reach the green goal first, followed by the red goal. \textbf{(3) Nav2D-2G-rev}: The priorities of the two goals is reversed. In all scenarios, two common high-priority constraints are imposed: the agent must stay within the map boundary, and must not enter the obstacle area.

\subsection{Nav2D with 1 Goal} \label{sec:nav2d g1}
In the following experiment, a navigation subtask with two strict constraints are given. As shown in Figure~\ref{fig:single goal sim result}, a map is setup, with a start region and a goal region locating on opposite sides of an obstacle. The set $\mathcal{K}$ contains three elements for a time-limited episode. $K_1$: "In Boundary", $K_2$: "Avoid Collision", $K_3$: "Reach Green Goal". The initial location is sampled from a Gaussian distribution, $x \sim \mathcal{N}(1,0.5), y \sim \mathcal{N}(1, 0.5)$. Figure~\ref{fig:single goal traj} displays 50 Monte-Carlo simulation trajectories with two different seeds. Due to the symmetry, the agent explores both sides of the obstacle with equal probability. In Figure~\ref{fig:single goal traj 95 estimated}, we introduce $\mathcal{N}(0, 0.1)$ Gaussian noise to the state, and plot the 95\% confidence corridor, indicating consistent safety performance under perturbations. Figure~\ref{fig:single goal a distribution} illustrates the deterministic policy in the map, where the arrow length denotes action magnitude. The results demonstrate that our policy reliably navigates around the obstacle and towards the goal region with high speed. 

\subsection{Nav2D with 2 Goals} \label{sec:nav2d g2}
In the following experiment, two different goal regions are set symmetrically behind the obstacle, and two comparative conditions are designed to evaluate the ability to distinguish priority orderings. The subtask set $\mathcal{K}$ contains four elements, with $K_1$ and $K_2$ same as before. In the condition 1, the remaining priorities are, $K_3$: "Reach Green Goal", $K_4$: "Reach Red Goal". In condition 2, $K_3$ and $K_4$ are reversed. Figure~\ref{fig:double_goal} shows the simulation trajectories of condition 1 (\textbf{Nav2D-2G}), where the agent tends to navigate around the obstacle from the left side to reach the green goal region more efficiently, as it is relatively closer. In contrast, Figure~\ref{fig:double_goal_reversed} presents the trajectories of condition 2 (\textbf{Nav2D-2G-rev}), where switching the priority causes the agent to prefer the path around the obstacle from the right side. Figure~\ref{fig:double_goal_returns_compare} compares the returns from both goal regions under two conditions. In condition 1 ("G$\rightarrow$R"), the agent achieves higher returns from the green, whereas in condition 2 ("R$\rightarrow$G"), the agent obtains higher returns from the red. These results demonstrate that our trained agents are able to solve the LMORL problem with priorities strictly satisfied.

It is important to note that, to ensure the comparative experiments are not influenced by environmental factors (such as state or reward function design), all things are kept identical for the green and red goal regions.

\subsection{Comparison and Ablation Study}
Experiments above demonstrate the effectiveness of LPPG-RL, which employs Dykstra's projection and SE scheduler. This motivates three key questions: (1) How does Dykstra improve the efficiency of policy-gradient search compared to other solvers? (2) How does the SE scheduler help avoid local optimal? (3) How does LPPG-RL perform relative to baseline methods in LMORL?

We conduct two comparative studies to answer these questions. (1) we generate \textbf{n} different goal regions as lexicographic subtasks in \textbf{Nav2D-nG} environment, and compare the average runtime of policy-gradient search between Dykstra and other solvers. (2) We adapt the same experimental settings as before and compare the final performance of four methods: LPA \cite{tercan2024}, LPPO \cite{skalse2022}, LPPG-PPO w/o SE, and full LPPG-PPO. 

The first set of results is presented in Table~\ref{tab:compare sampling}. where we compare Dykstra with several common solvers in CVXPY, including OSQP, SCS, and CLARABEL. The results show that, for problems of practical size ($\textbf{n} \leq 50$), Dykstra achieves up to $20\times$ faster than generic solvers. The gap gradually diminishes with \textbf{n} increasing. For larger problems, CVXPY solvers begin to match Dykstra's performance when $\textbf{n} \approx 100$. Importantly, all solvers converge to the same solution within a relative error of $10^{-6}$, resulting in indistinguishable final policy performances.

The second set of results, presented in Table~\ref{tab:compare rewards}, includes a comparison with the baseline algorithms LPA and LPPO, as well as an ablation study on SE (LPPG-PPO w/o SE). For each environment, we report the returns and subtask completion status after 1M training steps. 

For Nav2D-1G, we have $K_1-K_3$. All other methods achieves higher returns for $K_3$, but fail to avoid the obstacle, as they primarily focus on $K_3$ during early training and get trapped in suboptimal policies. In contrast, our method ensures uniform sampling, and successfully completes all subtasks. Note that, our method achieves an average score of 98 on $K_1$, slightly below perfect due to the residual exploration noise in limited training steps. This noise introduces an irreducible variance floor without affecting the optimality of the policy. This observation also applies to subsequent results. 

For Nav2D-2G and Nav2D-2G-rev, we have $K_1-K_4$. LPA get stuck in $K_3$, due to the difficulty of setting appropriate thresholds without prior knowledge. LPPO still fails to complete $K_2$, because the performance is sensitive to Lagrange multiplier hyperparameters. LPPG-PPO w/o SE terminates with $K_4$ higher than $K_3$. This is because, in the early stage, rollouts cover only limited states, resulting in unreliable policy gradients due to biased advantage estimations, and rendering practically infeasible gradients feasible. Therefore, the agent collects rollouts primarily near the low-priority region, further reinforcing biased estimations and leaving high-priority areas unexplored. SE addresses this issue by forcing the agent to visit high-priority states, thereby reducing bias and restoring the intended lexicographic order.

\section{Conclusion}
In this paper, we introduce \textbf{LPPG-RL}, which reformulates LMORL problem as a convex optimization of gradients. By finding the intersection of higher-priority half-spaces, LPPG-RL obtains a lexicographically feasible policy gradient direction. We also propose using Dykstra’s projection to accelerate gradient search for small- to medium-scale problems, and employ SE scheduler to avoid local optima and facilitate convergence. Experimental results demonstrate that LPPG-RL outperforms previous methods. Notably, our approach does not require additional hyperparameters or prior knowledge, making it more adaptive and practical. Furthermore, we provide theoretical guarantees on convergence and a lower bound for policy updates. A limitation of our work is that we only consider a fixed priority order among subtasks. In future work, it would be interesting to explore the use of transfer learning or experience replay buffers to handle dynamic or changing priorities.

\bigskip

\section*{Acknowledgements}
This work was supported by the National Natural Science Foundation of China (62120106003, 62173301). We thank the anonymous reviewers for their constructive suggestions that improved this paper. 

\bibliography{aaai2026}

@article{mnih2015a,
  title = {Human-Level Control through Deep Reinforcement Learning},
  author = {Mnih, Volodymyr and Kavukcuoglu, Koray and Silver, David and Rusu, Andrei A. and Veness, Joel and Bellemare, Marc G. and Graves, Alex and Riedmiller, Martin and Fidjeland, Andreas K. and Ostrovski, Georg and Petersen, Stig and Beattie, Charles and Sadik, Amir and Antonoglou, Ioannis and King, Helen and Kumaran, Dharshan and Wierstra, Daan and Legg, Shane and Hassabis, Demis},
  year = {2015},
  month = feb,
  journal = {Nature},
  volume = {518},
  number = {7540},
  pages = {529--533},
  publisher = {Nature Publishing Group},
  issn = {1476-4687},
  doi = {10.1038/nature14236},
  copyright = {2015 Springer Nature Limited},
  lccn = {1}
}

@inproceedings{silver2014,
  title = {Deterministic policy gradient algorithms},
  booktitle = {International conference on machine learning},
  author = {Silver, David and Lever, Guy and Heess, Nicolas and Degris, Thomas and Wierstra, Daan and Riedmiller, Martin},
  year = {2014},
  pages = {387--395},
  publisher = {Pmlr}
}

@inproceedings{skalse2022,
  title = {Lexicographic Multi-Objective Reinforcement Learning},
  booktitle = {Proceedings of the Thirty-First International Joint Conference on Artificial Intelligence},
  author = {Skalse, Joar and Hammond, Lewis and Griffin, Charlie and Abate, Alessandro},
  year = {2022},
  month = jul,
  eprint = {2212.13769},
  primaryclass = {cs},
  pages = {3430--3436},
  doi = {10.24963/ijcai.2022/476},
  archiveprefix = {arXiv}
}

@misc{tercan2024,
  title = {Thresholded Lexicographic Ordered Multiobjective Reinforcement Learning},
  author = {Tercan, Alperen and Prabhu, Vinayak S.},
  year = {2024},
  month = sep,
  number = {arXiv:2408.13493},
  eprint = {2408.13493},
  primaryclass = {cs},
  publisher = {arXiv},
  doi = {10.48550/arXiv.2408.13493},
  archiveprefix = {arXiv}
}

@inproceedings{haarnoja2018,
  title = {Soft Actor-Critic: Off-Policy Maximum Entropy Deep Reinforcement Learning with a Stochastic Actor},
  booktitle = {International Conference on Machine Learning},
  author = {Haarnoja, Tuomas and Zhou, Aurick and Abbeel, Pieter and Levine, Sergey},
  year = {2018},
  pages = {1861--1870},
  publisher = {PMLR}
}

@article{schulman2017,
  title = {Proximal Policy Optimization Algorithms},
  author = {Schulman, John and Wolski, Filip and Dhariwal, Prafulla and Radford, Alec and Klimov, Oleg},
  year = {2017},
  journal = {arXiv preprint arXiv:1707.06347},
  eprint = {1707.06347},
  archiveprefix = {arXiv}
}

@inproceedings{achiam2017,
  title = {Constrained Policy Optimization},
  booktitle = {Proceedings of the 34th International Conference on Machine Learning},
  author = {Achiam, Joshua and Held, David and Tamar, Aviv and Abbeel, Pieter},
  year = {2017},
  month = jul,
  pages = {22--31},
  publisher = {PMLR},
  issn = {2640-3498}
}

@misc{yang2020c,
  title = {Projection-Based Constrained Policy Optimization},
  author = {Yang, Tsung-Yen and Rosca, Justinian and Narasimhan, Karthik and Ramadge, Peter J.},
  year = {2020},
  month = oct,
  number = {arXiv:2010.03152},
  eprint = {2010.03152},
  primaryclass = {cs},
  publisher = {arXiv},
  doi = {10.48550/arXiv.2010.03152},
  archiveprefix = {arXiv}
}

@inproceedings{gabor1998,
  title = {Multi-Criteria Reinforcement Learning.},
  booktitle = {ICML},
  author = {G{\'a}bor, Zolt{\'a}n and Kalm{\'a}r, Zsolt and Szepesv{\'a}ri, Csaba},
  year = {1998},
  volume = {98},
  pages = {197--205}
}

@article{zhang2023,
  title = {Lexicographic Actor-Critic Deep Reinforcement Learning for Urban Autonomous Driving},
  author = {Zhang, Hengrui and Lin, Youfang and Han, Sheng and Lv, Kai},
  year = {2023},
  month = apr,
  journal = {IEEE Transactions on Vehicular Technology},
  volume = {72},
  number = {4},
  pages = {4308--4319},
  issn = {1939-9359},
  doi = {10.1109/TVT.2022.3226579}
}

@misc{li2019a,
  title = {Urban Driving with Multi-Objective Deep Reinforcement Learning},
  author = {Li, Changjian and Czarnecki, Krzysztof},
  year = {2019},
  month = feb,
  number = {arXiv:1811.08586},
  eprint = {1811.08586},
  primaryclass = {cs},
  publisher = {arXiv},
  doi = {10.48550/arXiv.1811.08586},
  archiveprefix = {arXiv}
}

@inproceedings{xu2020a,
  title = {Prediction-Guided Multi-Objective Reinforcement Learning for Continuous Robot Control},
  booktitle = {Proceedings of the 37th International Conference on Machine Learning},
  author = {Xu, Jie and Tian, Yunsheng and Ma, Pingchuan and Rus, Daniela and Sueda, Shinjiro and Matusik, Wojciech},
  year = {2020},
  month = nov,
  pages = {10607--10616},
  publisher = {PMLR},
  issn = {2640-3498}
}

@misc{rietz2024,
  title = {Prioritized Soft Q-Decomposition for Lexicographic Reinforcement Learning},
  author = {Rietz, Finn and Schaffernicht, Erik and Heinrich, Stefan and Stork, Johannes Andreas},
  year = {2024},
  month = may,
  number = {arXiv:2310.02360},
  eprint = {2310.02360},
  primaryclass = {cs},
  publisher = {arXiv},
  doi = {10.48550/arXiv.2310.02360},
  archiveprefix = {arXiv}
}

@incollection{boyle1986,
  title = {A Method for Finding Projections onto the Intersection of Convex Sets in Hilbert Spaces},
  booktitle = {Advances in Order Restricted Statistical Inference},
  author = {Boyle, James P. and Dykstra, Richard L.},
  editor = {Brillinger, D. and Fienberg, S. and Gani, J. and Hartigan, J. and Krickeberg, K. and Dykstra, Richard and Robertson, Tim and Wright, Farroll T.},
  year = {1986},
  volume = {37},
  pages = {28--47},
  publisher = {Springer New York},
  address = {New York, NY},
  doi = {10.1007/978-1-4613-9940-7_3},
  isbn = {978-0-387-96419-5 978-1-4613-9940-7}
}

@article{gaffke1989,
  title = {A Cyclic Projection Algorithm via Duality},
  author = {Gaffke, Norbert and Mathar, Rudolf},
  year = {1989},
  month = dec,
  journal = {Metrika},
  volume = {36},
  number = {1},
  pages = {29--54},
  issn = {0026-1335, 1435-926X},
  doi = {10.1007/BF02614077},
  copyright = {https://www.springer.com/tdm}
}

@misc{chow2018,
  title = {A Lyapunov-Based Approach to Safe Reinforcement Learning},
  author = {Chow, Yinlam and Nachum, Ofir and {Duenez-Guzman}, Edgar and Ghavamzadeh, Mohammad},
  year = {2018},
  month = may,
  number = {arXiv:1805.07708},
  eprint = {1805.07708},
  primaryclass = {cs},
  publisher = {arXiv},
  doi = {10.48550/arXiv.1805.07708},
  archiveprefix = {arXiv}
}

@article{tang2025,
  title = {Deep Reinforcement Learning for Robotics: A Survey of Real-World Successes},
  author = {Tang, Chen and Abbatematteo, Ben and Hu, Jiaheng and Chandra, Rohan and {Mart{\'i}n-Mart{\'i}n}, Roberto and Stone, Peter},
  year = {2025},
  month = apr,
  journal = {Proceedings of the AAAI Conference on Artificial Intelligence},
  volume = {39},
  number = {27},
  pages = {28694--28698},
  issn = {2374-3468},
  doi = {10.1609/aaai.v39i27.35095},
  copyright = {Copyright (c) 2025 Association for the Advancement of Artificial Intelligence}
}

@book{altman2021,
  title = {Constrained Markov Decision Processes},
  author = {Altman, Eitan},
  year = {2021},
  month = dec,
  publisher = {Routledge},
  address = {New York},
  doi = {10.1201/9781315140223},
  isbn = {978-1-315-14022-3}
}

@inproceedings{parisi2014,
  title = {Policy Gradient Approaches for Multi-Objective Sequential Decision Making},
  booktitle = {2014 International Joint Conference on Neural Networks (IJCNN)},
  author = {Parisi, Simone and Pirotta, Matteo and Smacchia, Nicola and Bascetta, Luca and Restelli, Marcello},
  year = {2014},
  month = jul,
  pages = {2323--2330},
  issn = {2161-4407},
  doi = {10.1109/IJCNN.2014.6889738}
}

@misc{tessler2018,
  title = {Reward Constrained Policy Optimization},
  author = {Tessler, Chen and Mankowitz, Daniel J. and Mannor, Shie},
  year = {2018},
  month = may,
  journal = {arXiv.org},
  howpublished = {https://arxiv.org/abs/1805.11074v3}
}

@misc{chow2019,
  title = {Lyapunov-Based Safe Policy Optimization for Continuous Control},
  author = {Chow, Yinlam and Nachum, Ofir and Faust, Aleksandra and {Duenez-Guzman}, Edgar and Ghavamzadeh, Mohammad},
  year = {2019},
  month = feb,
  number = {arXiv:1901.10031},
  eprint = {1901.10031},
  primaryclass = {cs},
  publisher = {arXiv},
  doi = {10.48550/arXiv.1901.10031},
  archiveprefix = {arXiv}
}

@misc{stooke2020,
  title = {Responsive Safety in Reinforcement Learning by PID Lagrangian Methods},
  author = {Stooke, Adam and Achiam, Joshua and Abbeel, Pieter},
  year = {2020},
  month = jul,
  number = {arXiv:2007.03964},
  eprint = {2007.03964},
  primaryclass = {math},
  publisher = {arXiv},
  doi = {10.48550/arXiv.2007.03964},
  archiveprefix = {arXiv}
}

@inproceedings{roijers2014,
  title = {Linear Support for Multi-Objective Coordination Graphs},
  booktitle = {Proceedings of the 2014 International Conference on Autonomous Agents and Multi-Agent Systems},
  author = {Roijers, Diederik M. and Whiteson, Shimon and Oliehoek, Frans A.},
  year = {2014},
  month = may,
  series = {AAMAS '14},
  pages = {1297--1304},
  publisher = {{International Foundation for Autonomous Agents and Multiagent Systems}},
  address = {Richland, SC},
  isbn = {978-1-4503-2738-1}
}

@misc{abels2018,
  title = {Dynamic Weights in Multi-Objective Deep Reinforcement Learning},
  author = {Abels, Axel and Roijers, Diederik M. and Lenaerts, Tom and Now{\'e}, Ann and Steckelmacher, Denis},
  year = {2018},
  month = sep,
  journal = {arXiv.org},
  howpublished = {https://arxiv.org/abs/1809.07803v2}
}

@inproceedings{vanmoffaert2013a,
  title = {Scalarized Multi-Objective Reinforcement Learning: Novel Design Techniques},
  booktitle = {2013 IEEE Symposium on Adaptive Dynamic Programming and Reinforcement Learning (ADPRL)},
  author = {Van Moffaert, Kristof and Drugan, Madalina M. and Now{\'e}, Ann},
  year = {2013},
  month = apr,
  pages = {191--199},
  issn = {2325-1867},
  doi = {10.1109/ADPRL.2013.6615007}
}

@misc{mossalam2016,
  title = {Multi-Objective Deep Reinforcement Learning},
  author = {Mossalam, Hossam and Assael, Yannis M. and Roijers, Diederik M. and Whiteson, Shimon},
  year = {2016},
  month = oct,
  journal = {arXiv.org},
  howpublished = {https://arxiv.org/abs/1610.02707v1}
}

@article{zuluaga2016,
  title = {E-PAL: An Active Learning Approach to the Multi-Objective Optimization Problem},
  author = {Zuluaga, Marcela and Krause, Andreas and P{\"u}schel, Markus},
  year = {2016},
  journal = {Journal of Machine Learning Research},
  volume = {17},
  number = {104},
  pages = {1--32},
  issn = {1533-7928}
}

@article{pirotta2015,
  title = {Multi-Objective Reinforcement Learning with Continuous Pareto Frontier Approximation},
  author = {Pirotta, Matteo and Parisi, Simone and Restelli, Marcello},
  year = {2015},
  month = feb,
  journal = {Proceedings of the AAAI Conference on Artificial Intelligence},
  volume = {29},
  number = {1},
  issn = {2374-3468, 2159-5399},
  doi = {10.1609/aaai.v29i1.9617}
}

@article{schulman2015,
  title = {High-Dimensional Continuous Control Using Generalized Advantage Estimation},
  author = {Schulman, John and Moritz, Philipp and Levine, Sergey and Jordan, Michael and Abbeel, Pieter},
  year = {2015},
  journal = {arXiv preprint arXiv:1506.02438},
  eprint = {1506.02438},
  archiveprefix = {arXiv}
}

@article{diamond2016cvxpy,
  author  = {Steven Diamond and Stephen Boyd},
  title   = {{CVXPY}: {A} {P}ython-embedded modeling language for convex optimization},
  journal = {Journal of Machine Learning Research},
  year    = {2016},
  volume  = {17},
  number  = {83},
  pages   = {1--5},
}

@article{coelho2024,
  title = {RLfOLD: Reinforcement Learning from Online Demonstrations in Urban Autonomous Driving},
  author = {Coelho, Daniel and Oliveira, Miguel and Santos, Vitor},
  year = {2024},
  month = mar,
  journal = {Proceedings of the AAAI Conference on Artificial Intelligence},
  volume = {38},
  number = {10},
  pages = {11660--11668},
  issn = {2374-3468},
  doi = {10.1609/aaai.v38i10.29049},
  copyright = {Copyright (c) 2024 Association for the Advancement of Artificial Intelligence}
}

@article{dawson2023,
  title = {Safe Control With Learned Certificates: A Survey of Neural Lyapunov, Barrier, and Contraction Methods for Robotics and Control},
  author = {Dawson, Charles and Gao, Sicun and Fan, Chuchu},
  year = {2023},
  month = jun,
  journal = {IEEE Transactions on Robotics},
  volume = {39},
  number = {3},
  pages = {1749--1767},
  issn = {1552-3098, 1941-0468},
  doi = {10.1109/TRO.2022.3232542},
  copyright = {https://ieeexplore.ieee.org/Xplorehelp/downloads/license-information/IEEE.html}
}

@misc{deng2024,
  title = {Ensuring Safety in Target Pursuit Control: A CBF-Safe Reinforcement Learning Approach},
  author = {Deng, Yaosheng and Gao, Junjie and Xiao, Jiaping and Feroskhan, Mir},
  year = {2024},
  month = dec,
  number = {arXiv:2411.17552},
  eprint = {2411.17552},
  primaryclass = {eess},
  publisher = {arXiv},
  doi = {10.48550/arXiv.2411.17552},
  archiveprefix = {arXiv}
}

@book{borkar2008,
  title = {Stochastic Approximation},
  author = {Borkar, Vivek S.},
  year = {2008},
  series = {Texts and Readings in Mathematics},
  volume = {48},
  publisher = {Hindustan Book Agency},
  address = {Gurgaon},
  doi = {10.1007/978-93-86279-38-5},
  copyright = {http://www.springer.com/tdm},
  isbn = {978-81-85931-85-2 978-93-86279-38-5}
}

@inproceedings{konda1999,
  title = {Actor-Critic Algorithms},
  booktitle = {Advances in Neural Information Processing Systems},
  author = {Konda, Vijay and Tsitsiklis, John},
  year = {1999},
  volume = {12},
  publisher = {MIT Press}
}

@inproceedings{kakade2002,
  title = {Approximately Optimal Approximate Reinforcement Learning},
  booktitle = {Proceedings of the Nineteenth International Conference on Machine Learning},
  author = {Kakade, Sham and Langford, John},
  year = {2002},
  month = jul,
  series = {ICML '02},
  pages = {267--274},
  publisher = {Morgan Kaufmann Publishers Inc.},
  address = {San Francisco, CA, USA},
  isbn = {978-1-55860-873-3}
}

@misc{liu2025,
  title = {Pareto Set Learning for Multi-Objective Reinforcement Learning},
  author = {Liu, Erlong and Wu, Yu-Chang and Huang, Xiaobin and Gao, Chengrui and Wang, Ren-Jian and Xue, Ke and Qian, Chao},
  year = {2025},
  month = jan,
  number = {arXiv:2501.06773},
  eprint = {2501.06773},
  primaryclass = {cs},
  publisher = {arXiv},
  doi = {10.48550/arXiv.2501.06773},
  archiveprefix = {arXiv}
}

@article{qiu2025,
  title = {Optimizing Weights to Fit Parametric Operation Policies for Generalized Working Conditions in Linear Systems Using Deep Reinforcement Learning},
  author = {Qiu, Ruiyu and Yang, Guanghui and Xu, Zuhua and Shao, Zhijiang},
  year = {2025},
  month = apr,
  journal = {IEEE Transactions on Industrial Informatics},
  volume = {21},
  number = {4},
  pages = {3186--3195},
  issn = {1941-0050},
  doi = {10.1109/TII.2024.3523563}
}

\newpage

\appendix

\twocolumn[
\begin{center}
\textbf{\Large Appendix}
\end{center}
\hfill \break
\hfill \break
]

\section{A. Relevant Algorithms}
In this section, we first provide the pseudo codes of some relevant algorithms in our paper, including Dykstra's projection algorithm and the instantiation of SAC version of our method, LPPG-SAC. 

Dykstra's projection is shown as Algorithm~\ref{app:alg:dykstra}. We start from a initial point $x^{(0)}$, and allocate a residual variable $r_i$ for each set $\mathcal{C}_i$. For each iteration, we project $x+r_i$ to the next convex set, and update the residual part. When the norm of $x$ between iterations is small than the tolerance, Dykstra's projection gets converged, and return the projection of $x^{(0)}$ onto the intersection of closed convex sets. 

LPPG-PPO is presented as Algorithm~\ref{app:algo:LPPG-PPO}. PPO is an on-policy algorithm using actor-critic architecture. First, rollouts are collected, and advantage functions are estimated (using GAE here). Second, gradient of each subtask is calculated, and LPPG is applied to help find the optimal policy gradient $d^*$ satisfying all priorities. Then, we update policy and critic network step by step. 

LPPG-SAC is presented as Algorithm~\ref{app:alg:lppg-sac}. SAC is an off-policy algorithm using twin Q networks with targets. First, LPPG-SAC collects rollouts and stores them into replay buffer $\mathcal{D}$. When training starts, a batch is sampled from the buffer. We compute the gradient for each subtask, using our LPPG algorithm to find the optimal policy gradient $d^*$ satisfying all priorities, and update the policy network. Then, we update Q networks for minimizing the soft Bellman residual, adjust the temperature $\alpha$, and update target networks. 

There can be some extra tricks to accelerate LPPG further. First, when the sample the subtask index $N$, and obtain $\|d^*\|=0$, we can do $N\sim \operatorname{Uniform}(1,\cdots,N-1)$ instead of directly set $N \leftarrow N-1$. This can reduce the extra iterative time of trial and error. Second, when iterative convergence becomes more difficult with the number of subtasks growing, we can relax the threshold of convergence in Dykstra's projection a bit (e.g. $1\times10^{-6} \rightarrow 1\times10^{-4}$). This helps converge faster may at a cost of the final performance. Please note that, the threshold cannot be adjusted too large. Otherwise, it can lead to biased rollouts distribution as shown in our experiments. 

\begin{algorithm}[tb]
\caption{Dykstra's Projection Algorithm}
\label{app:alg:dykstra}
\begin{algorithmic}[1]
    \REQUIRE initial point $x^{(0)}\in \mathbb{R}^d$, closed convex sets $\{\mathcal{C}_i\}_{i=1}^M$, projection method $\operatorname{Proj}()$, tolerance $\epsilon > 0$, maximum iterations $T_{\max}$
    \STATE $x \gets x^{(0)}$ 
    \STATE $r_i \gets 0$ for $i = \{1, \cdots, m \}$
    \FOR{$t = 0, 1, \cdots, T_{\max}-1$}
        \FOR{$i = 1, \cdots, M$}
            \STATE $y \gets x + r_i$
            \STATE $p \gets \operatorname{Proj}_{\mathcal{C}_i}(y)$ 
            \STATE $r_i \gets y - p$
            \STATE $x \gets p$
        \ENDFOR
        \IF{$\|x - x^{(t)}\|_2 \leq \epsilon $}
            \STATE \textbf{break}
        \ELSE
            \STATE $x^{(t+1)} \gets x$
        \ENDIF
    \ENDFOR
    \RETURN $x \approx \operatorname{Proj}_{\cap_i\mathcal{C}_i}\bigl(x^{(0)}\bigr)$
\end{algorithmic}
\end{algorithm}

\begin{algorithm}[tb]
    \caption{LPPG-PPO}
    \label{app:algo:LPPG-PPO}
    \begin{algorithmic}[1]
    \REQUIRE policy network $\pi_{\theta}(\cdot|s)$, critic network $V_{\phi}(s)$, prioritized subtasks set $\mathcal{K}$ 
        \FOR{iteration $k = 1, 2, \cdots, $}
            \STATE Collect rollouts $\{s_t, a_t, \bm{r}_t, \log \pi_{\theta}(a_t|s_t)\}$
            \STATE Compute rewards-to-go $\hat{R}_t$ for $K \in \mathcal{K}$
            \STATE Compute advantage estimates (e.g. using GAE) based on current value function $V_{\phi}$ for $K \in \mathcal{K}$
            \FOR{each update epoch}
                \STATE Find optimal policy gradient $d^*$ with Algorithm~\ref{alg:lexicographic_policy_gradient}
                \STATE Compute critic loss $L_{\phi}$
                \STATE Update policy network
                \STATE Update critic network 
            \ENDFOR
        \ENDFOR
        \RETURN Optimal policy network $\pi_\theta^*$
    \end{algorithmic}
\end{algorithm}

\begin{algorithm}[tb]
\caption{LPPG-SAC}
\label{app:alg:lppg-sac}
\begin{algorithmic}[1]
\REQUIRE policy network $\pi_\theta(\cdot|s)$, twin Q networks $\{Q_{\phi_1},Q_{\phi_2}\}$ with targets $\{Q_{\phi_{\text{targ},1}},Q_{\phi_{\text{targ},2}}\}$, learning start step $t_s$, target entropy $\mathcal{H}$, temperature $\alpha$, target smoothing coefficient $\rho$, replay buffer $\mathcal{D}$, prioritized subtasks set $\mathcal{K}$
\STATE $\phi_{\text{targ},1} \gets \phi_1$, $\phi_{\text{targ},2} \gets \phi_2$
\FOR{environment step $t=1,2,\dots$}
    \STATE Collect rollouts $\{s, a, \bm{r}, s', d\}$
    \STATE Add rollouts in replay buffer $\mathcal{D}$
    \IF{$t \geq t_s$}
        \STATE Sample batch $B = \{(s,a,\bm{r},s', d)\} \sim \mathcal{D}$
        \STATE Compute targets for Q functions
        \STATE Compute policy gradient $g_{\theta,K}$ for $K \in \mathcal{K}$ 
        \STATE Find optimal policy gradient $d^*$ with Algorithm~\ref{alg:lexicographic_policy_gradient}
        \STATE Update Q networks with gradients $g_{\phi_i}$
        \STATE Update temperature with gradient $g_\alpha$
        \STATE Update target networks with 
            \begin{equation} \notag
                \phi_{\text{targ},i} \gets \rho\phi_{\text{targ},i} + (1-\rho)\phi_i, \quad i=\{1,2\}
            \end{equation}
    \ENDIF

\ENDFOR
\RETURN Optimal policy network $\pi^*_\theta$
\end{algorithmic}
\end{algorithm}

\section{B. Experiment Details}
In this section, we will present more details about our experiment and present additional results.

\subsection{B.1 Map Details}
Here, we first provide some specific details about how to establish our map as the environment in experiments. 

In all experiments, The map is constructed with a 10$\times$10 rectangle. An rectangle obstacle is put in the middle of the start region and the goal region. The locations of the obstacle consists of four points, $[(3, 7.5), (4, 8.5), (8.5, 4), (7.5, 3)]$. The Start region is also generated with a Gaussian distribution as the following, 
\begin{equation} \notag \label{app:eq:start region dist}
    \begin{aligned}
        x \sim \mathcal{N}(1, 0.5) \\ 
        y \sim \mathcal{N}(1, 0.5) \\ 
    \end{aligned}
\end{equation}
In \textbf{Nav2D-1G} environment, the green goal region is set as a circle, with center $(9,9)$ and radius 0.5. In \textbf{Nav2D-2G} environments, the centers of the green and red goal region are $(7,9)$ and $(9,7)$, respectively. 

\subsection{B.2 Environment Details}
Here, we present the design of our environment, including the state space, action space, and specific reward design for each subtask. 
\begin{itemize}
    \item \textbf{State} The state is constructed with the agent current location $(x_t, y_t)$ and all goal center location $(x^G_1, y^G_1, \cdots, x^G_n, y^G_n)$. In some more complex environment, frame stack technique can be applied to stack past states together, to make the training more stable. 
        \begin{equation} \notag
            s_t = [x_t, y_t, x^G_1, y^G_1, \cdots, x^G_n, y^G_n]
        \end{equation}

    \item \textbf{Action} The action is constructed with two independent components along the $x$ and $y$ direction. The policy action $a_t$ is $(-1, 1)$ to ensure the training stability, and the actual action to the environment $\tilde{a}_t$ will be rescaled to the max action bound $v_{max}$. In our experiments, $v_{max} = 0.5$.
        \begin{equation} \notag
            \begin{aligned}
                & && a_t = (-1, 1) \\ 
                \Rightarrow & && \tilde{a}_t = (-v_{max}, v_{max})
            \end{aligned}
        \end{equation}

    \item \textbf{Termination Condition} Here, we show the termination conditions of our environment. There are two termination conditions. (1) The agent gets out of the boundary of the map. (2) The agent step exceeds the limit of the episode. The limit episode length is set to be 100 in our experiments.  

    \item \textbf{Reward: In Boundary} This reward is designed to keep the agent not going out of the map boundary. Otherwise, episode is terminated. The reward is designed to be an binary reward with values $\{0, 1\}$ as the following. 
        \begin{equation} \notag
            r^{boundary}_t = 
                \begin{cases}
                    1 \quad & \text{if} \quad  \{x_t \in [0, 10], y \in [0, 10]\} \\ 
                    0 & \text{else}
                \end{cases}
        \end{equation}

    \item \textbf{Reward: Avoid Obstacle} This reward is designed to help agent avoid obstacle areas. Instead of finding a barrier function as the reward to cover the whole map, we only give penalty when the agent is detected inside the obstacle area. Generally, for any polygonal obstacle area, we get all vertices, and calculated the minimal distance from these vertices to the agent current location. Then, a quadratic penalty is given. 
        \begin{equation} \notag
            r^{obstacle}_t = 
            \begin{cases}
                    \min(d_1,\cdots, d_i)^2 \quad & \text{if} \quad  \text{inside} \\ 
                    0 & \text{else}
                \end{cases}
        \end{equation}
    where $\min(d_1, \cdots, d_i)$ means the minimal Euclidean distance to all vertices, and $(x_{obs}, y_{obs})$ refers to the general vertices of the obstacle. 

    \item \textbf{Reward: Goal Tracking} This reward is designed for reaching the goal region. To be more specific, the reward is no relevant with other subtasks such as boundary or collision, and all rewards are post-processed with LPPG to deal with the priority.
        \begin{equation} \notag
            r^{goal}_t = 
            \begin{cases}
                c^{goal} \quad & \text{if} \quad d \leq 0.5 \\
                -\lambda d^2 \quad & \text{else} 
            \end{cases}
        \end{equation}
    where $d$ is the Euclidean distance to the goal center, $\lambda$ is a constant coefficient to control the reward scale, and $c^{goal}$ is a constant reward for reaching the goal. In our experiment, $\lambda = 100$ and $c^{goal} = 10$.

    Specially, when multiple goal regions are involved, we set a reaching flag $\bm{0}_{G}$ for each goal. For each goal region, the flag is set to be $\bm{1}_{G}$ after the agent reach the goal for the first time, to make the agent get continuous reward from the goal. In this way, the agent go for the next goal region rather than stay still. 

    \begin{equation} \notag
            r^{goal}_t = 
            \begin{cases}
                c^{goal} \quad & \text{if} \quad d \leq 0.5 \ \text{or} \ \bm{1}_G \\
                -\lambda d^2 \quad & \text{else} 
            \end{cases}
        \end{equation}
\end{itemize}

\subsection{B.3 Training Details}
The hyperparameter settings of our experiments are presented with Table~\ref{app:tab:hyperparameters of lppg-ppo}. The specific structure of the actor and critic varies between different environment due to the dimension of states and number of subtasks. The specific information of environments, including the state dimension, action dimension and priority settings are presented.

\begin{remark}
    Please note that, for hyperparameter \textbf{Lexicographic relaxation value} \bm{$\epsilon_i$} for each subtask, we set all as 0. This is because we use the SE technique as a rollout scheduler to control a uniform rollout distribution for each subtask. Therefore, compared with past methods like LPA, our proposed LPPG-RL does not require any specific prior environment knowledge or tunable hyperparameters to help regulate the training.  
    
    For LPPG-RL, these hyperparameters are only required to be tuned when subtasks needs to be changed for specific environmental reasons (e.g. some constraints need to be relaxed with a threshold $\epsilon$) rather than computational reasons.
\end{remark}

\begin{table*}[t]
    \centering
    \begin{tabular}{lccc}
        \toprule
         & \textbf{Nav2D-1G} & \textbf{Nav2D-2G} & \textbf{Nav2D-2G-rev} \\
        \midrule
        \textbf{Total number of steps} & $1\times10^6$ & $1\times10^6$ & $1\times10^6$ \\ 
        \textbf{Learning rate for actor} & $5\times10^{-5}$ & $5\times10^{-5}$ & $5\times10^{-5}$ \\
        \textbf{Learning rate for critic} & $1\times10^{-4}$ & $1\times10^{-4}$ & $1\times10^{-4}$ \\
        \textbf{Discount factor} & $0.99$ & $0.99$ & $0.99$ \\ 
        \textbf{GAE discount factor} & $0.95$ & $0.95$ & $0.95$ \\ 
        \textbf{Batch size} & $2048$ & $2048$ & $2048$ \\ 
        \textbf{Mini batch size} & $64$ & $64$ & $64$ \\ 
        \textbf{Update epoch} & $10$ & $10$ & $10$ \\
        \textbf{Actor hidden layer numbers} & $3$ & $3$ & $3$ \\
        \textbf{Actor hidden neuron numbers} & $64$ & $64$ & $64$ \\
        \textbf{Critic hidden layer numbers} & $3$ & $3$ & $3$ \\
        \textbf{Critic hidden neuron numbers} & $64$ & $64$ & $64$ \\
        \textbf{Lexicographic relaxation value }  \bm{$\epsilon_i$} & $[0,0,0]$ & $[0,0,0,0]$ & $[0,0,0,0]$ \\
        \textbf{Dykstra's convergence tolerance} & $1\times10^{-6}$ & $1\times10^{-6}$ & $1\times10^{-6}$ \\ 
        \textbf{Dykstra's maximum iteration} & 500 & 500 & 500 \\
        \bottomrule
    \end{tabular}
    \caption{Hyperparameter settings of LPPG-PPO in different 2D Navigation environments.}
    \label{app:tab:hyperparameters of lppg-ppo}
\end{table*}

\begin{table*}[t]
    \centering
    \begin{tabular}{cccc}
        \toprule
        \textbf{Environment} & \textbf{Dim of $\mathcal{S}$} & \textbf{Dim of $\mathcal{A}$} & \textbf{Objective Priority (High to Low)} \\ 
        \midrule
        Nav2D-1G    &  4                   & 2                    & [In boundary, Avoid collision, Reach green goal] \\ 
        Nav2D-2G    &  6                   & 2                    & [In boundary, Avoid collision, Reach green goal, Reach red goal] \\ 
        Nav2D-2G-rev&  6                   & 2                    & [In boundary, Avoid collision, Reach red goal, Reach green goal] \\ 
        \bottomrule
    \end{tabular}
    \caption{Details of different 2D Navigation environments, including dimensions of state space and action space, and defined objective priorities.}
    \label{app:tab:environment details}
\end{table*}

\section{C. Additional Results}
In this section, we will present detailed additional results for our experiments. 

\subsection{C.1 Training Results}
To illustrate the policy evolution, Figure~\ref{app:fig:training snapshots of nav2d-1g} plots eight training snapshots with the rollouts from 10k to 1M steps in \textbf{Nav2D-1G}. At the initial 10k steps, the agent learns to go towards to goal but across the obstacle and outside the boundary. From 30k-80k steps, the agent starts to find obstacle-free but suboptimal detour. Then, from 150k to 1M steps, the agent learns to leverage the narrow passage around the obstacle and further optimize the trajectory efficiency. Similarly, Figure~\ref{app:fig:training snapshots of nav2d-2g} and Figure~\ref{app:fig:training snapshots of nav2d-2g-rev} plots the training evolution snapshots for \textbf{Nav2D-2G} and \textbf{Nav2D-2G-rev} environments, respectively. At the first 10k steps, the agent also gets out of the boundary. At 30k steps, the agent learns to get close to the first goal from one side. Then, from 80k to 500k steps, the agent successfully reach the first go and get closer and closer to the second goal. Finally, from 750k to 1M steps, the agent learns to reach two goals with priority with a high efficiency trajectory. 

We also present the reward training curves of each subtask of different algorithms in different environments, including two baseline algorithms: LPA \cite{tercan2024} and LPPO \cite{skalse2022}, and another ablation experiment algorithm: LPPO-PPG w/o SE. Figure~\ref{app:fig:reward nav2d-1g} shows the comparison in Nav2D-1G environment. We observe that, all 4 algorithms meets subtask 1 (In boundary). For LPA and LPPO, $r_1$ (Avoid Collision) reaches the threshold $0$ at an early stage, so $r_2$ becomes the main optimized target, which makes $r_1$ get stuck and the agent cannot find the narrow passage around the obstacle. For LPPG-PPO w/o SE, due to the lack of subproblem exploration, the agent also stay in a local optimal, and has a small chance to avoid collision. In comparison, LPPG-PPO balances the training frequency of different subtasks well, allowing the agent to jump out of the local optimum in a timely manner. Figure~\ref{app:fig:reward nav2d-2g} and Figure~\ref{app:fig:reward nav2d-2g-rev} present the training curves in Nav2D-2G and Nav2D-2G-rev environments, respectively. Due to the difficulty of finding a appropriate threshold for each subtask, in most cases, LPA algorithm usually stop optimizing for lower priority subtasks. Similarly, LPPO always stop in a local optimum because too much hyperparameters are given, which sometimes requires grid search to find the best group. For LPPG-PPO w/o SE, it has the best optimization for the lowest priority. However, in our map location, the rollouts distribution is severely biased due to premature and excessive exploration for lower priority subtask. Therefore, there is little chance for high priority subtask to fit the correct output distribution. Besides, because of the policy gradient constraints we have for lexicographic orders, the training will be obviously unstable and the variance is larger for some higher priority subtasks. 

\begin{figure*}[!t]
    \centering
    \subfigure[10000]{
        \begin{minipage}{0.48\columnwidth}
            \centering
            \includegraphics[width=\linewidth]{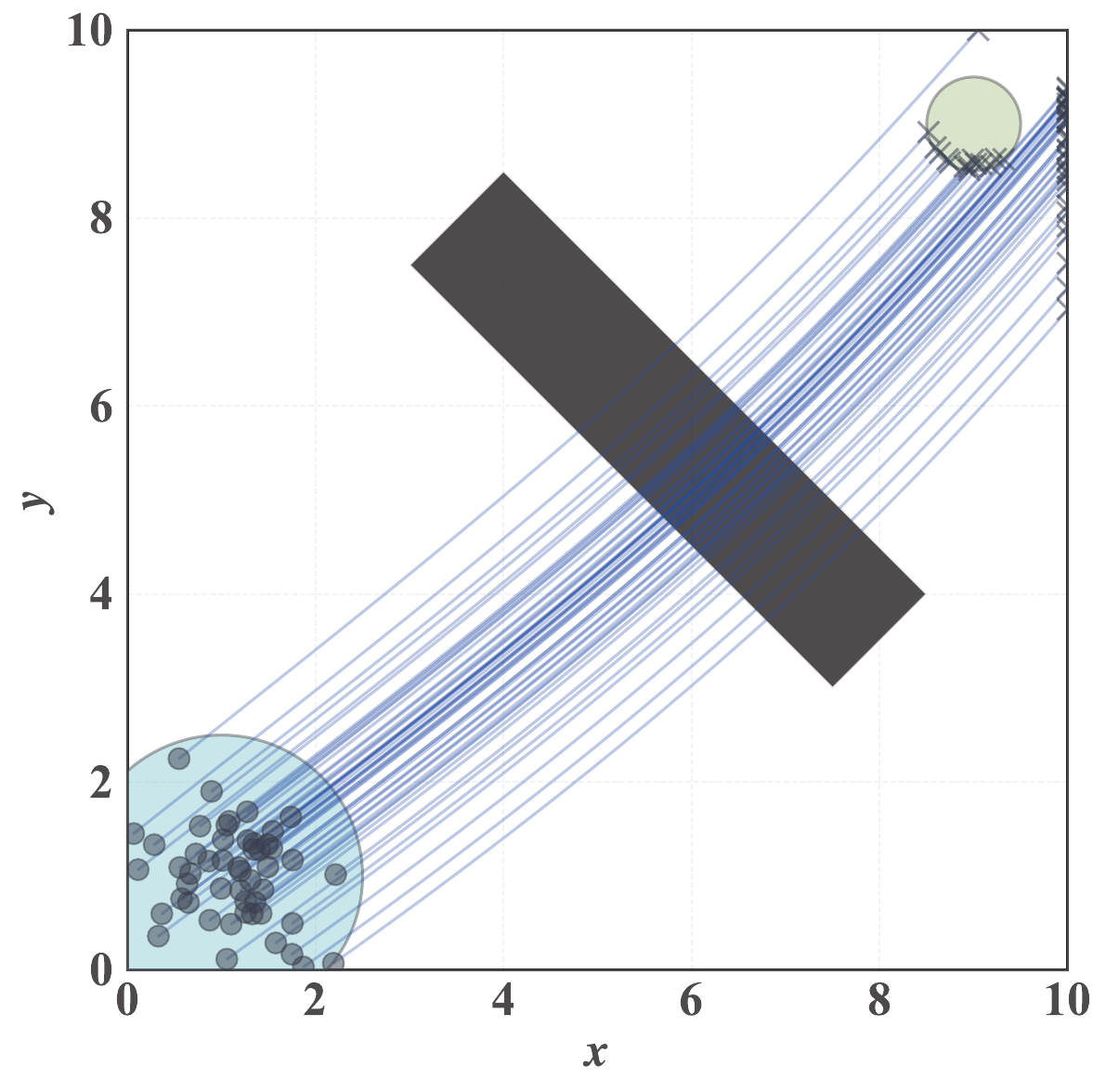}
        \end{minipage}
    }
    \subfigure[30000]{
        \begin{minipage}{0.48\columnwidth}
            \centering
            \includegraphics[width=\linewidth]{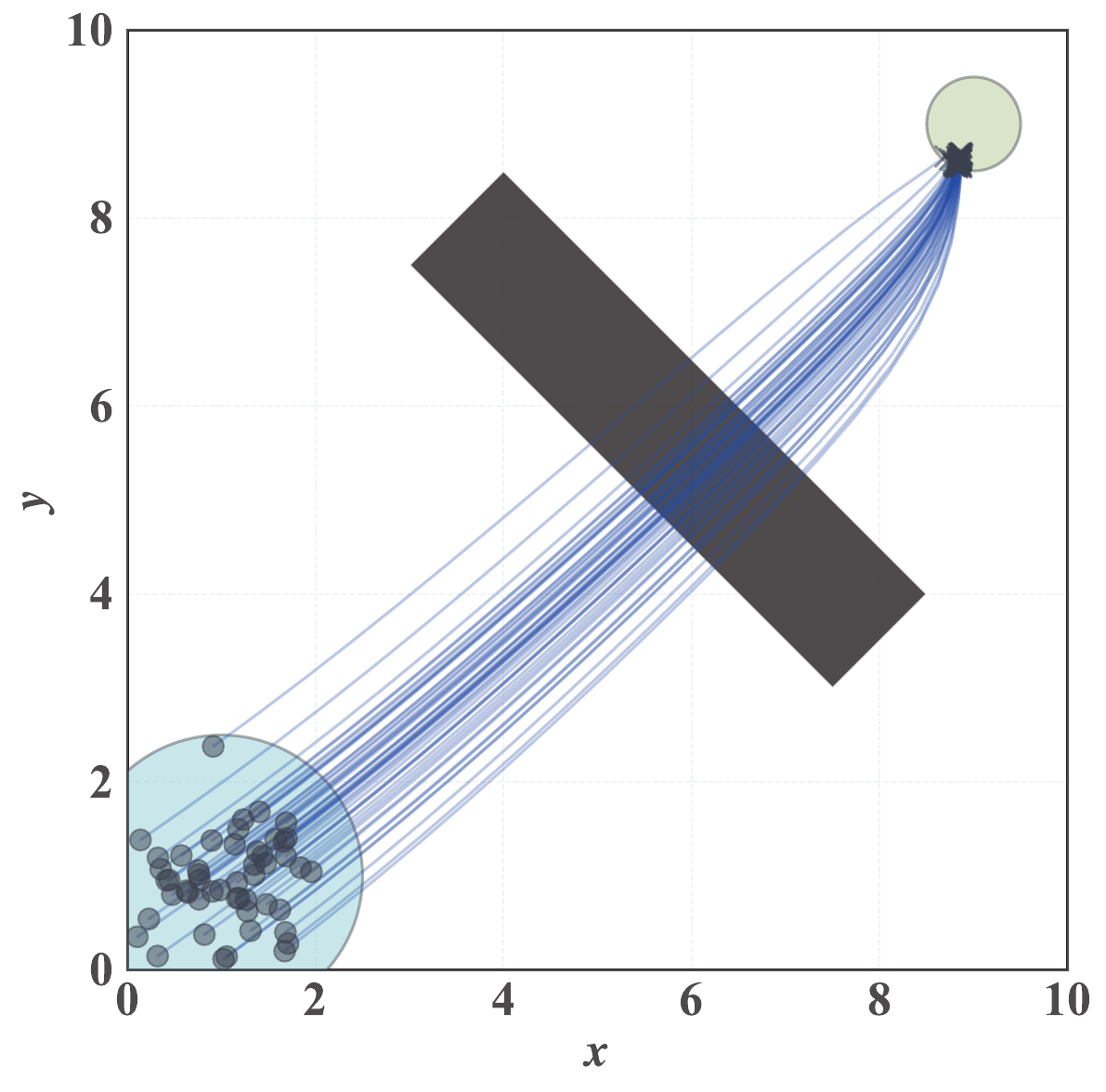}
        \end{minipage}
    }
    \subfigure[80000]{
        \begin{minipage}{0.48\columnwidth}
            \centering
            \includegraphics[width=\linewidth]{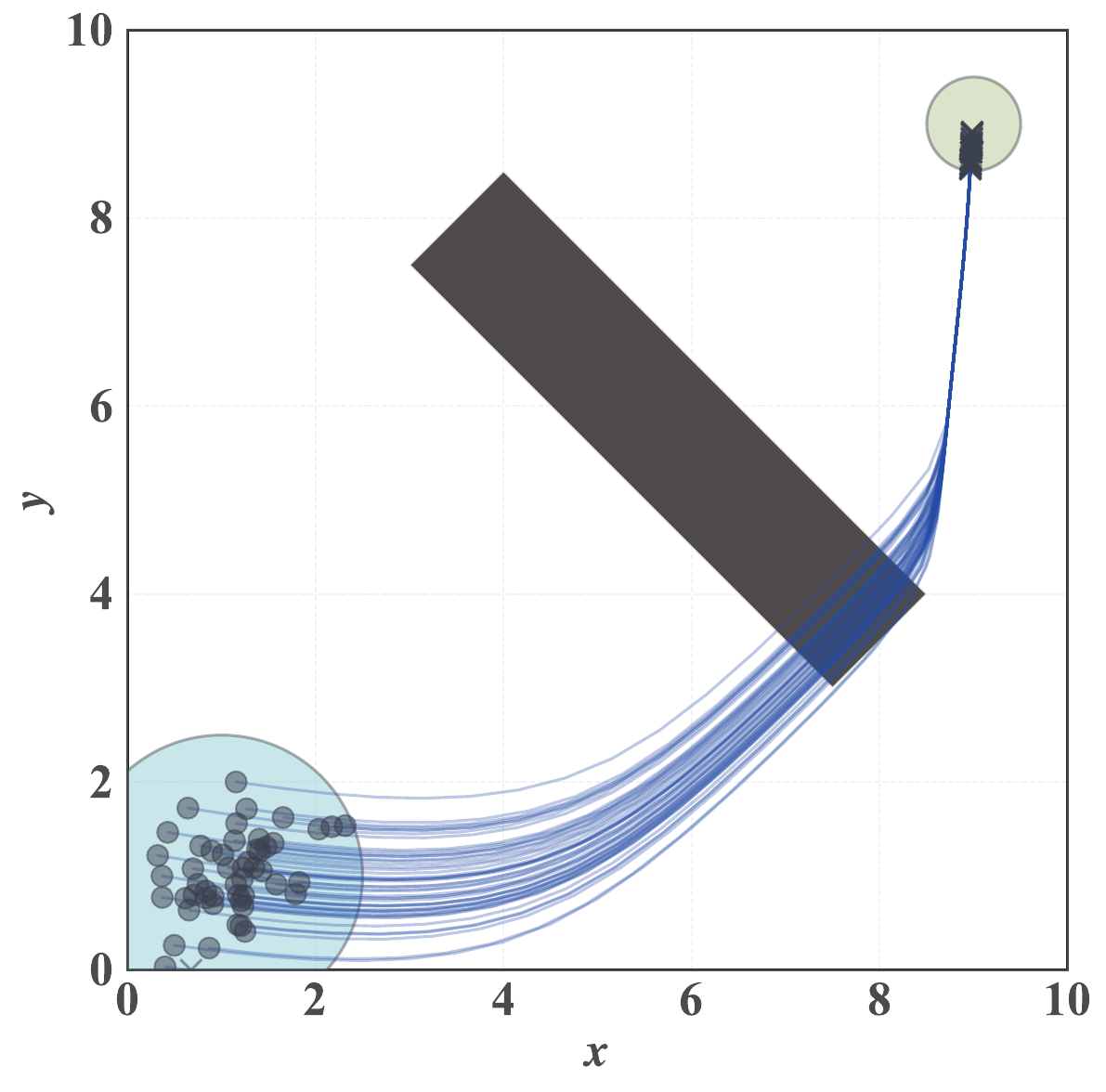}
        \end{minipage}
    }
    \subfigure[150000]{
        \begin{minipage}{0.48\columnwidth}
            \centering
            \includegraphics[width=\linewidth]{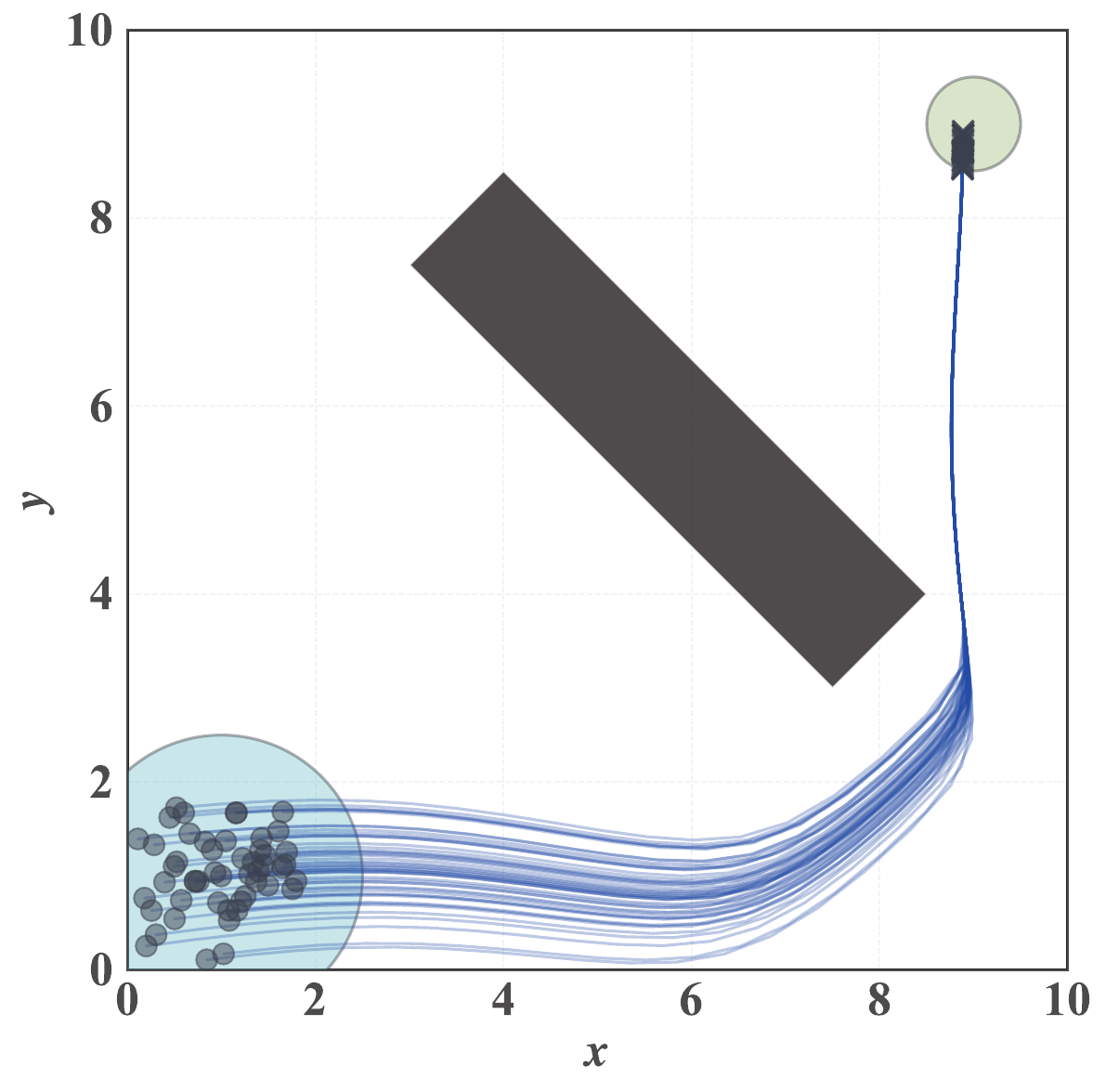}
        \end{minipage}
    }
    \subfigure[300000]{
        \begin{minipage}{0.48\columnwidth}
            \centering
            \includegraphics[width=\linewidth]{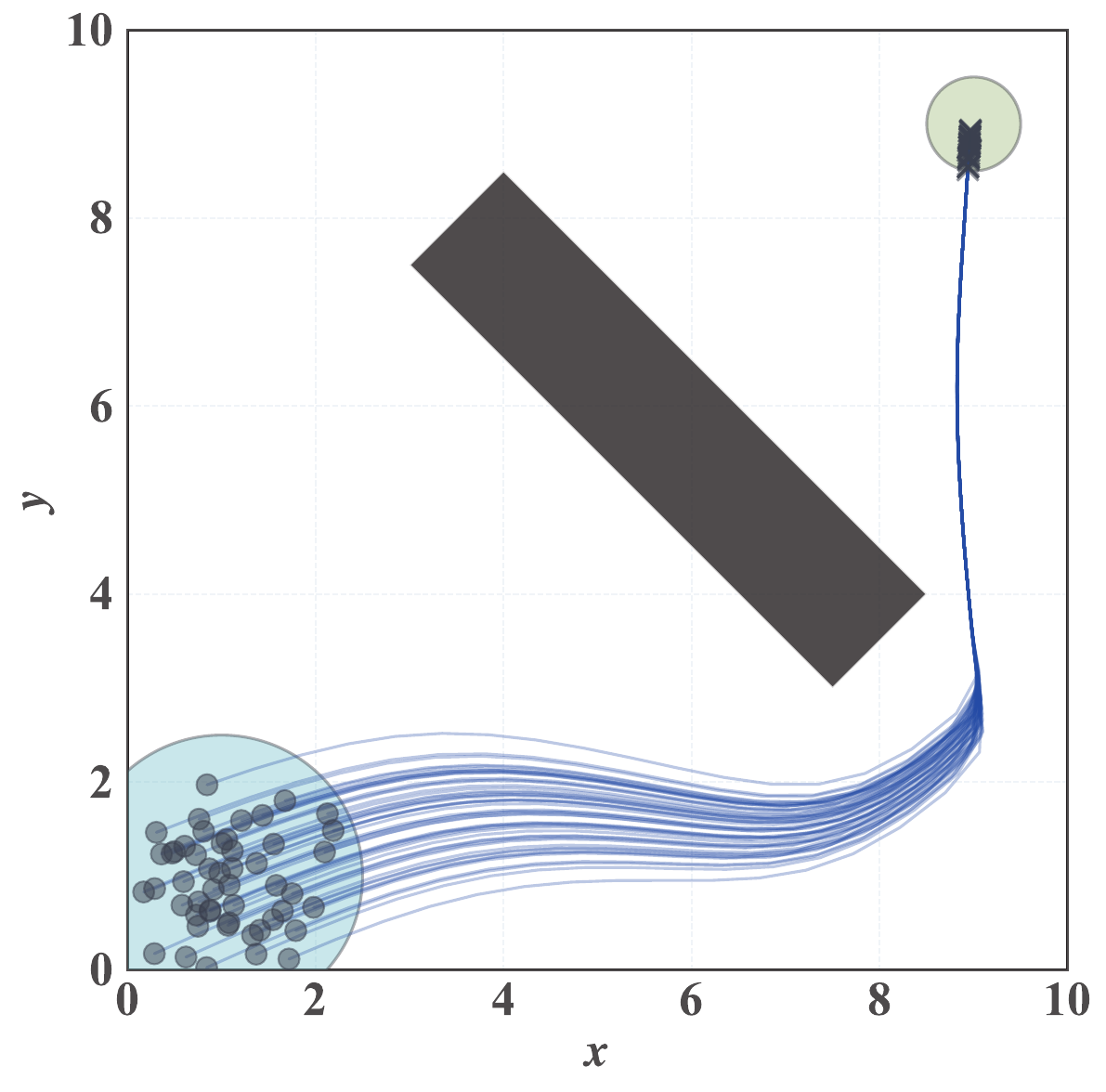}
        \end{minipage}
    }
    \subfigure[500000]{
        \begin{minipage}{0.48\columnwidth}
            \centering
            \includegraphics[width=\linewidth]{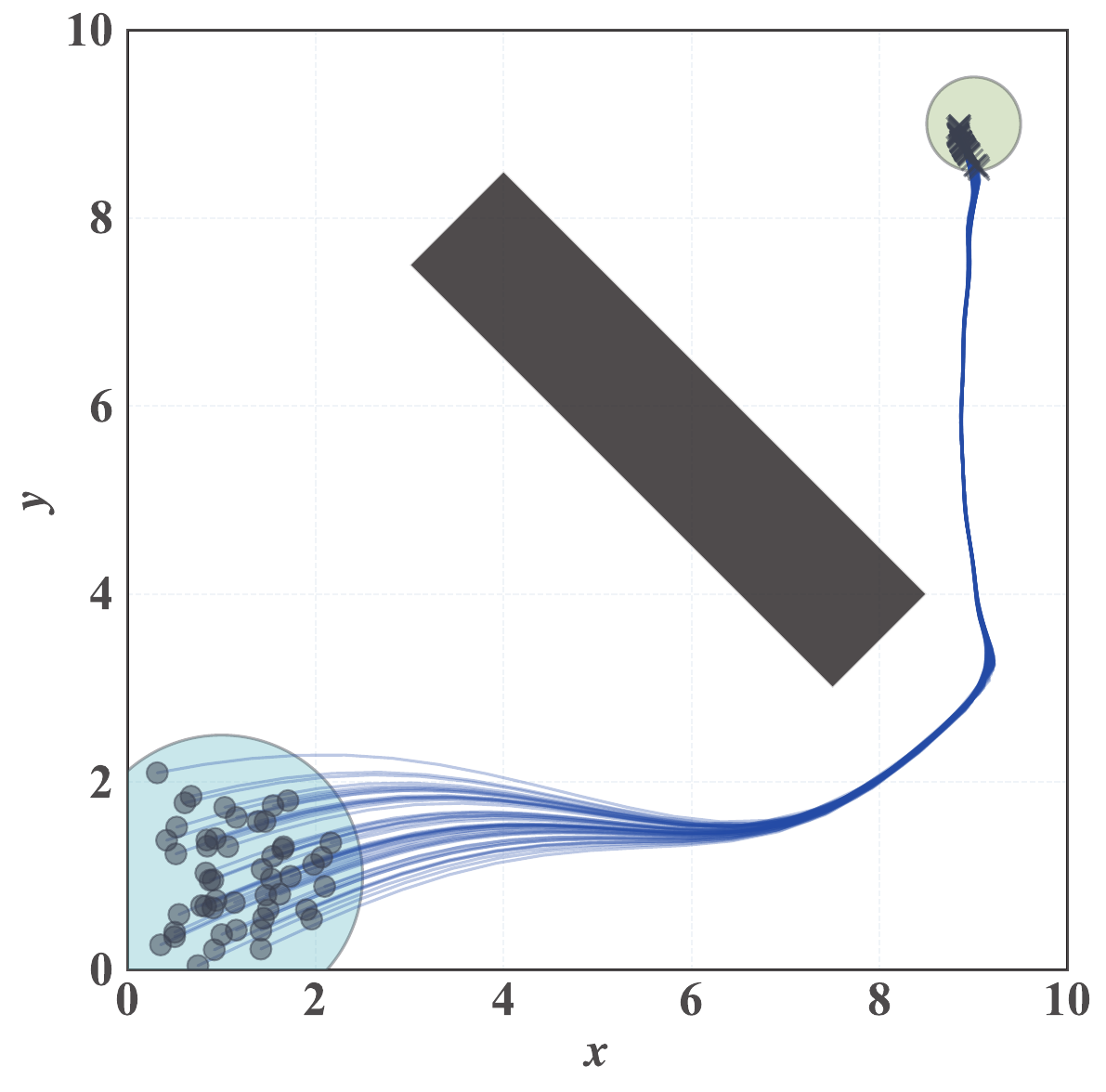}
        \end{minipage}
    }
    \subfigure[750000]{
        \begin{minipage}{0.48\columnwidth}
            \centering
            \includegraphics[width=\linewidth]{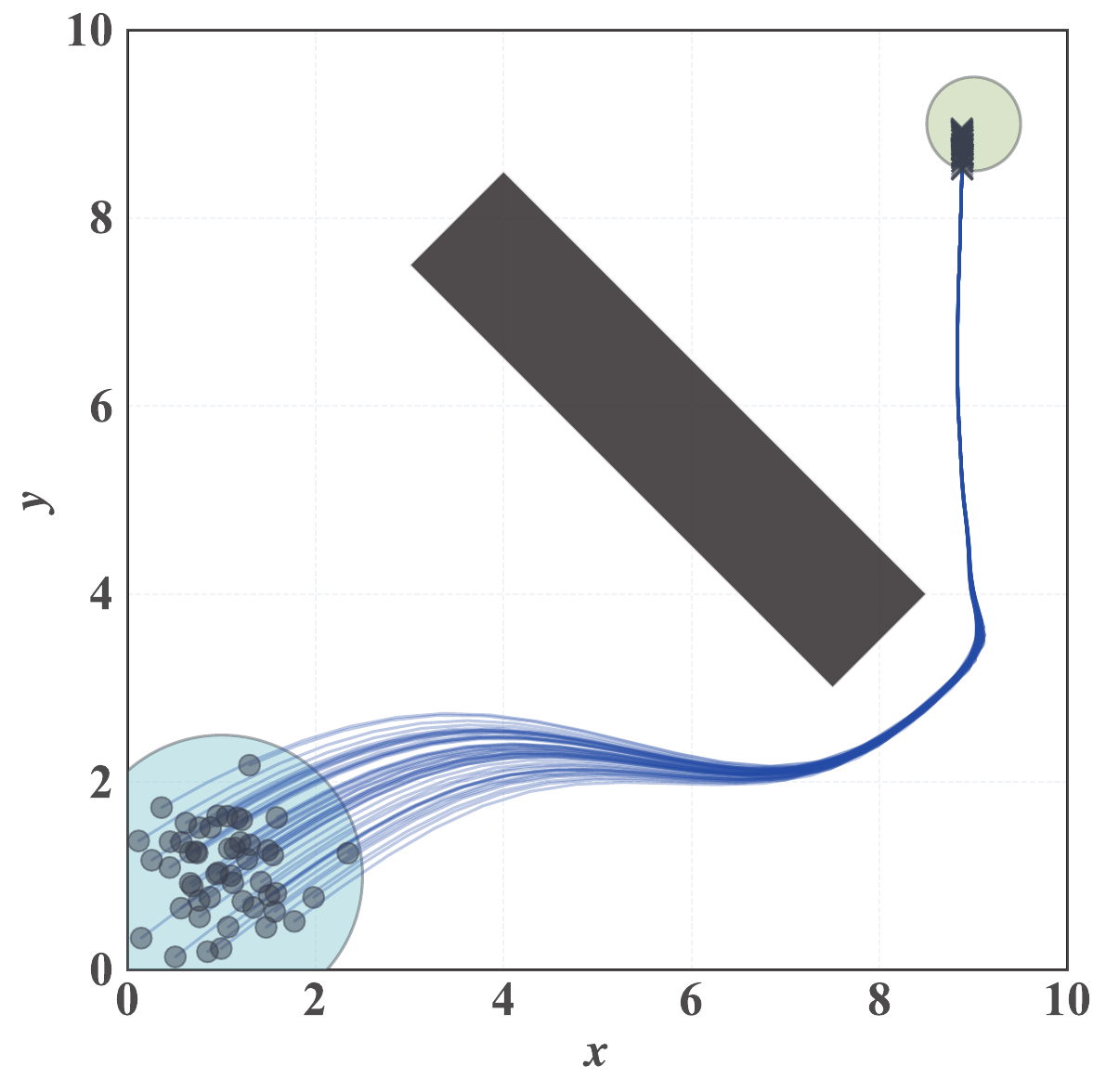}
        \end{minipage}
    }
    \subfigure[1000000]{
        \begin{minipage}{0.48\columnwidth}
            \centering
            \includegraphics[width=\linewidth]{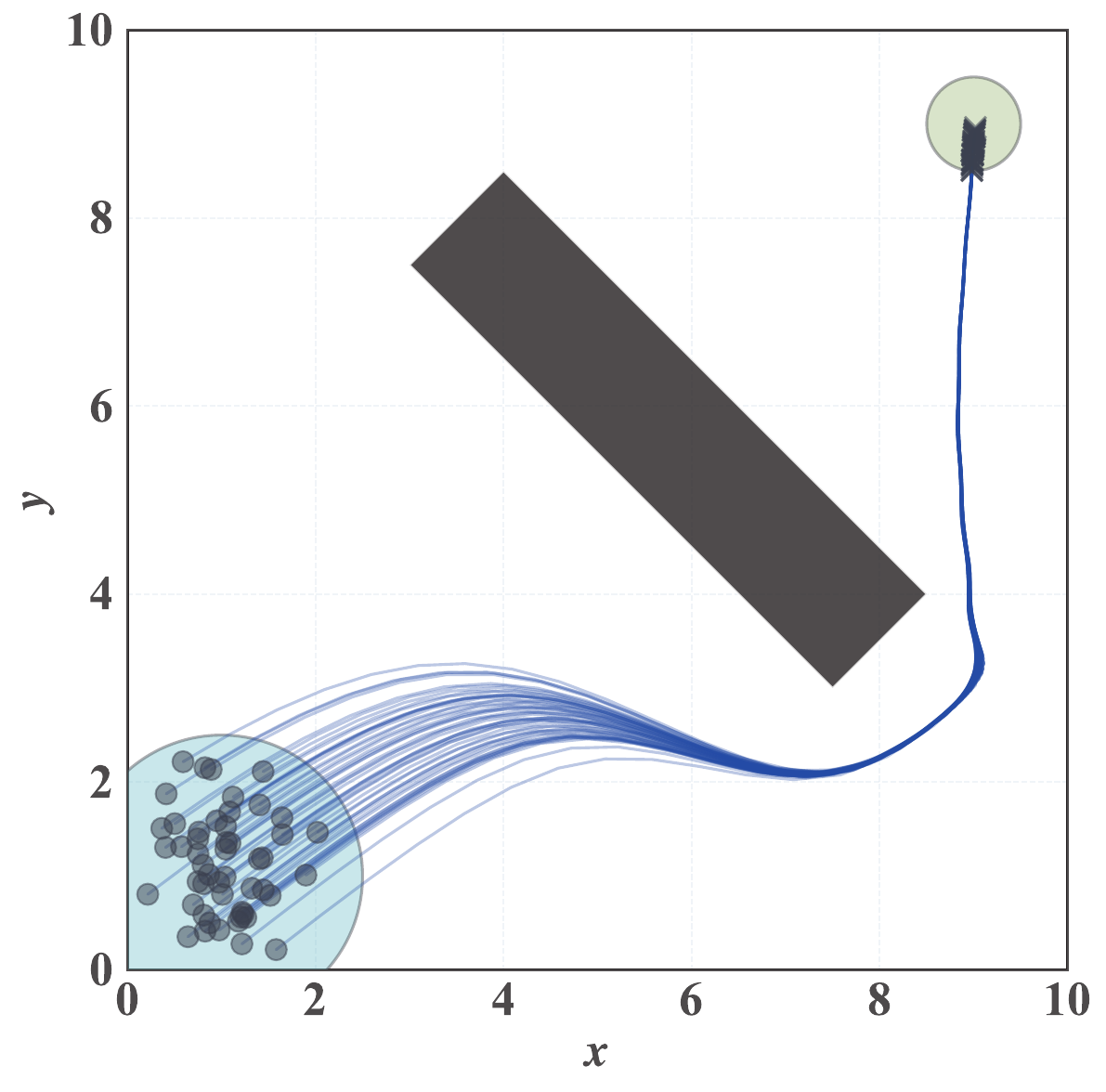}
        \end{minipage}
    }
    \caption{Training snapshots of \textbf{Nav2D-1G} environment from 10k steps to 1M steps. Blue region is the start region, green goal region is the only target.}    
    \label{app:fig:training snapshots of nav2d-1g}
\end{figure*}

\begin{figure*}[!t]
    \centering
    \subfigure[10000]{
        \begin{minipage}{0.48\columnwidth}
            \centering
            \includegraphics[width=\linewidth]{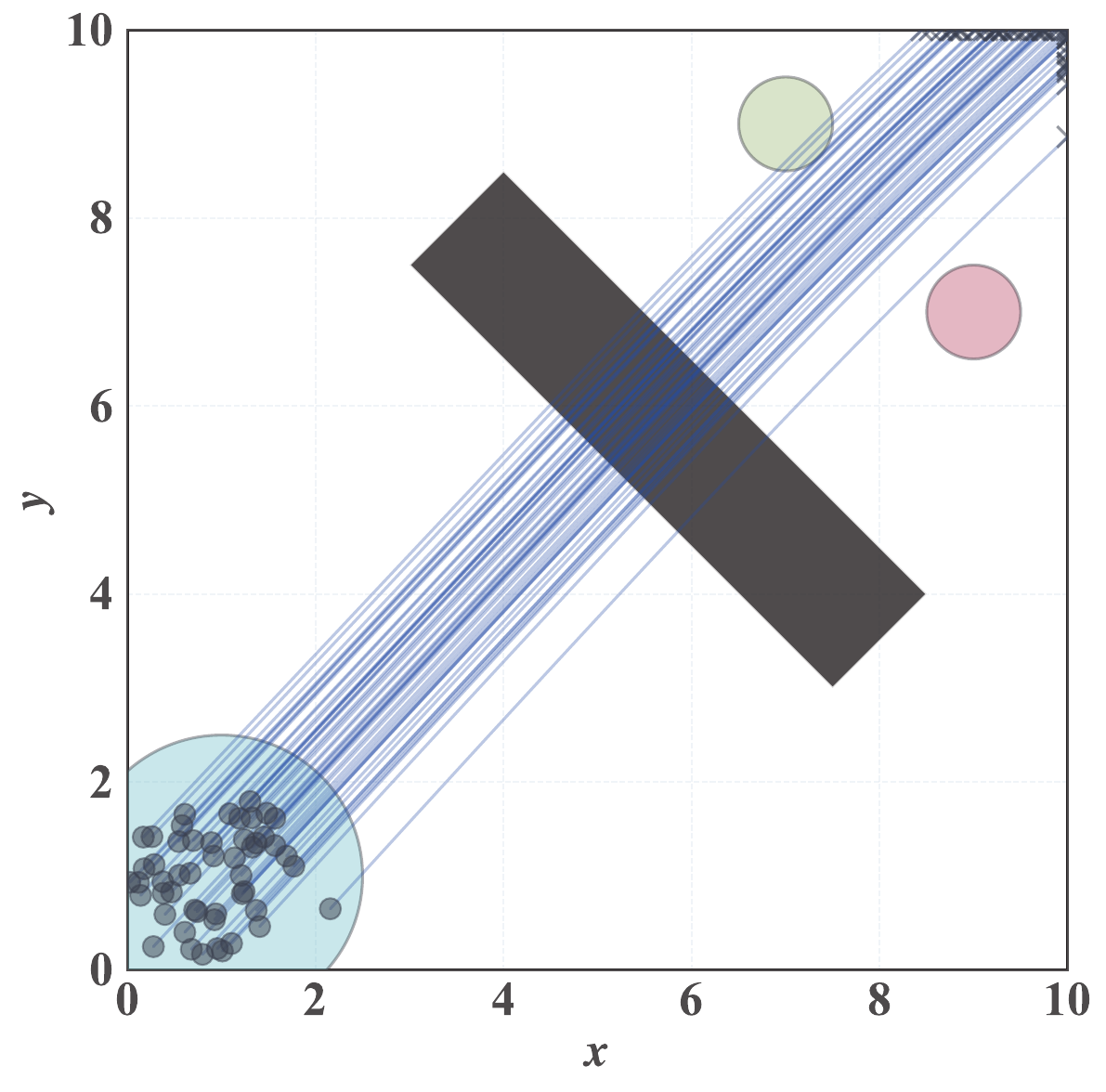}
        \end{minipage}
    }
    \subfigure[30000]{
        \begin{minipage}{0.48\columnwidth}
            \centering
            \includegraphics[width=\linewidth]{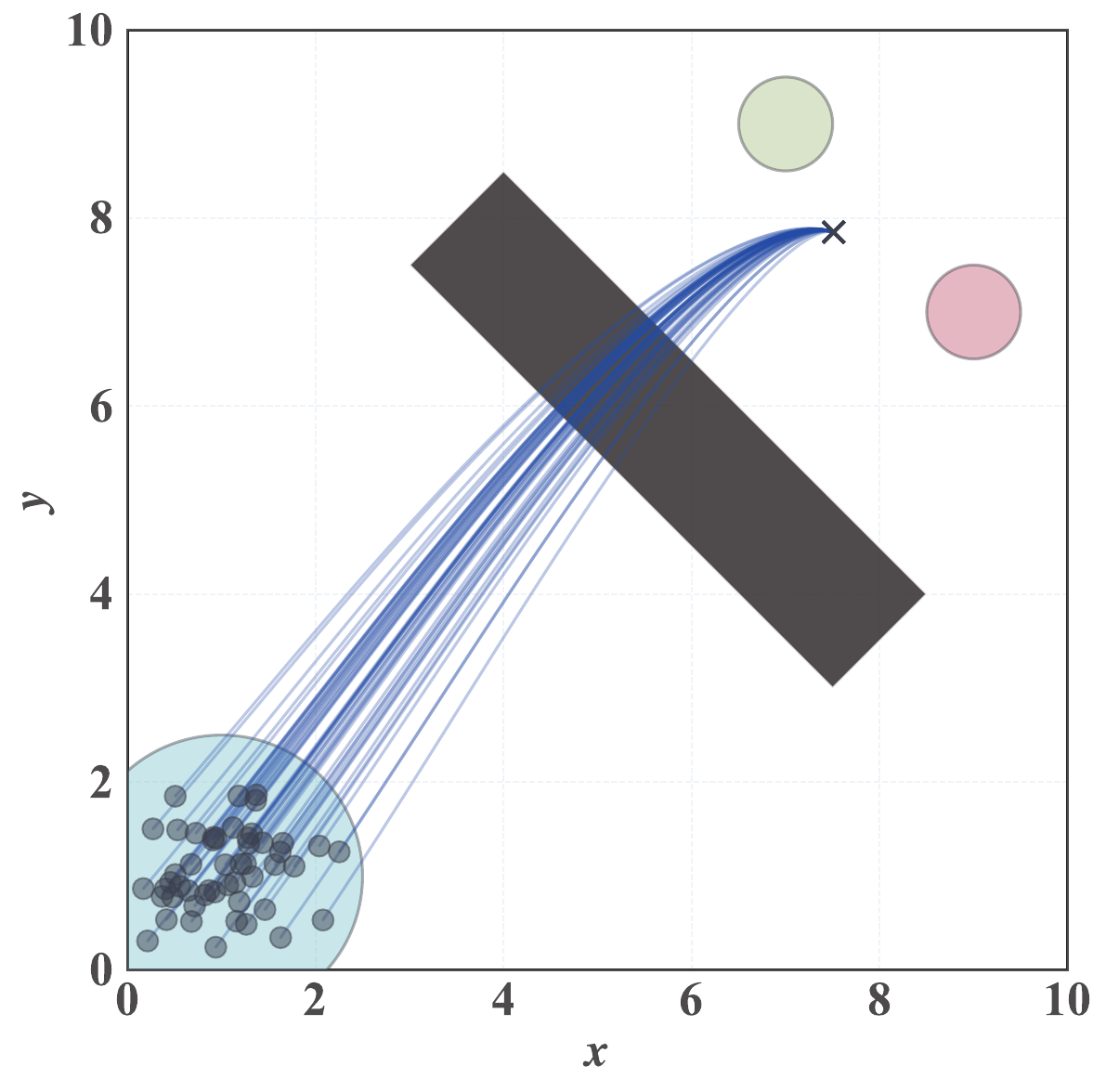}
        \end{minipage}
    }
    \subfigure[80000]{
        \begin{minipage}{0.48\columnwidth}
            \centering
            \includegraphics[width=\linewidth]{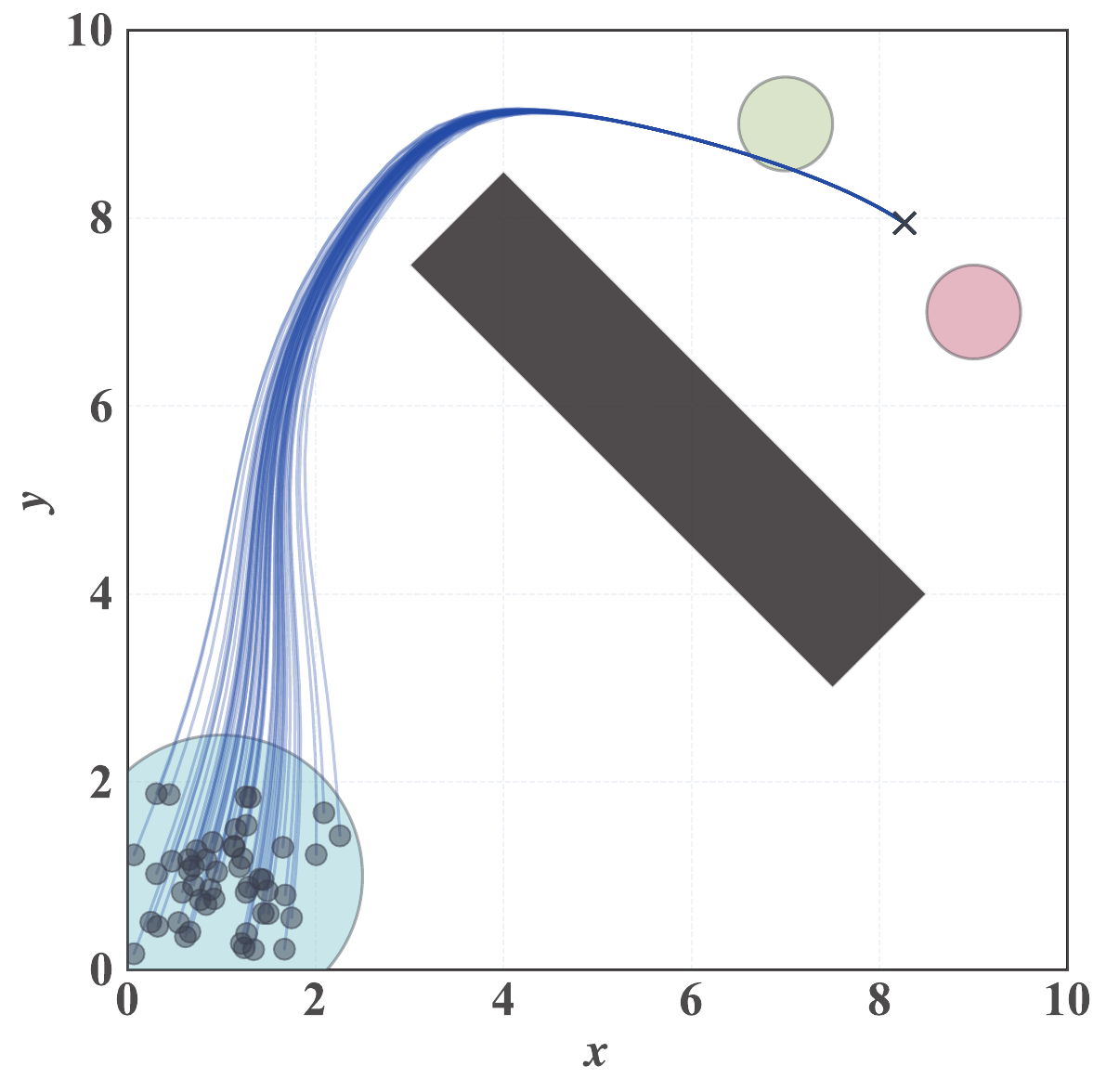}
        \end{minipage}
    }
    \subfigure[150000]{
        \begin{minipage}{0.48\columnwidth}
            \centering
            \includegraphics[width=\linewidth]{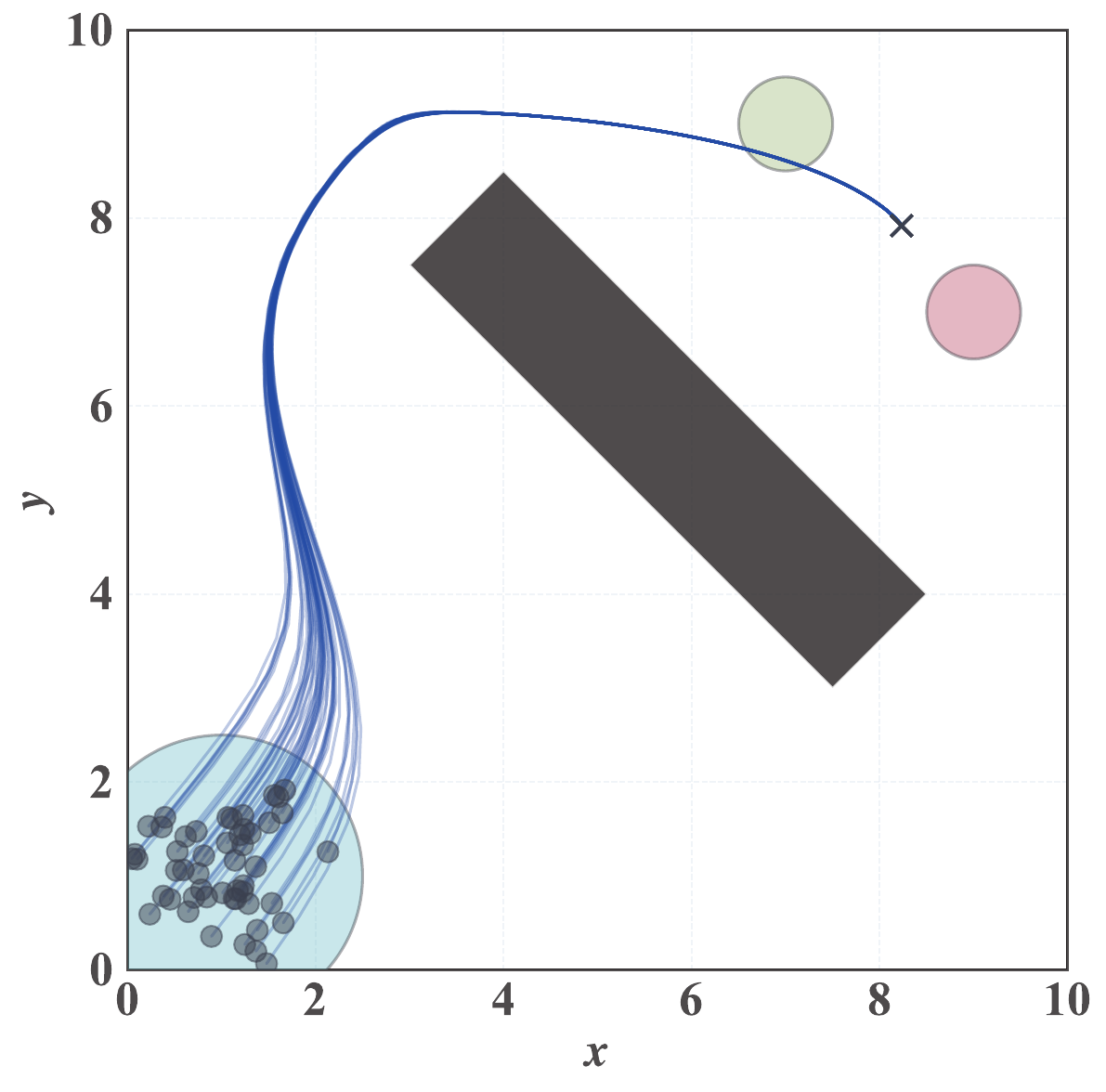}
        \end{minipage}
    }
    \subfigure[300000]{
        \begin{minipage}{0.48\columnwidth}
            \centering
            \includegraphics[width=\linewidth]{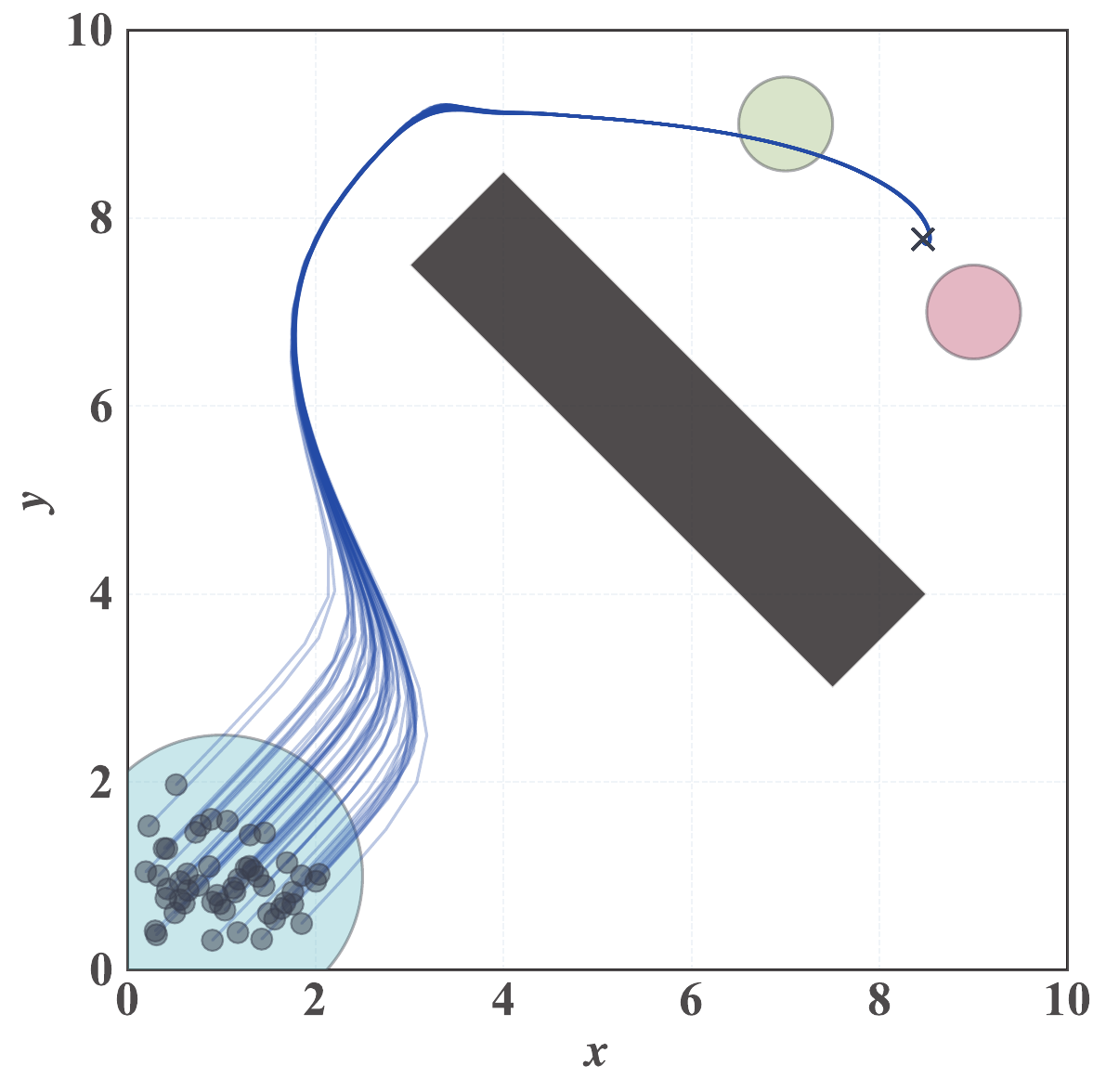}
        \end{minipage}
    }
    \subfigure[500000]{
        \begin{minipage}{0.48\columnwidth}
            \centering
            \includegraphics[width=\linewidth]{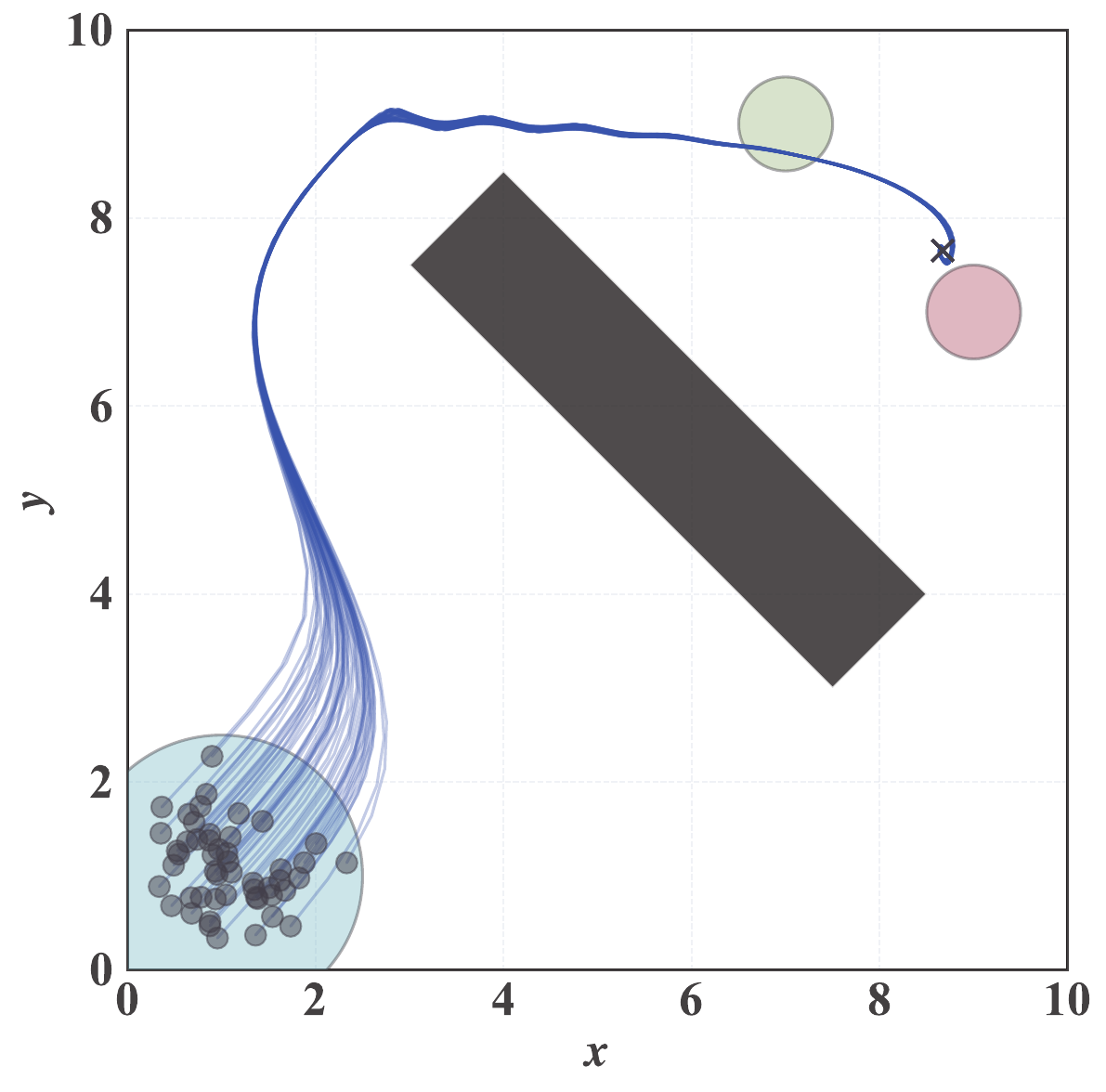}
        \end{minipage}
    }
    \subfigure[750000]{
        \begin{minipage}{0.48\columnwidth}
            \centering
            \includegraphics[width=\linewidth]{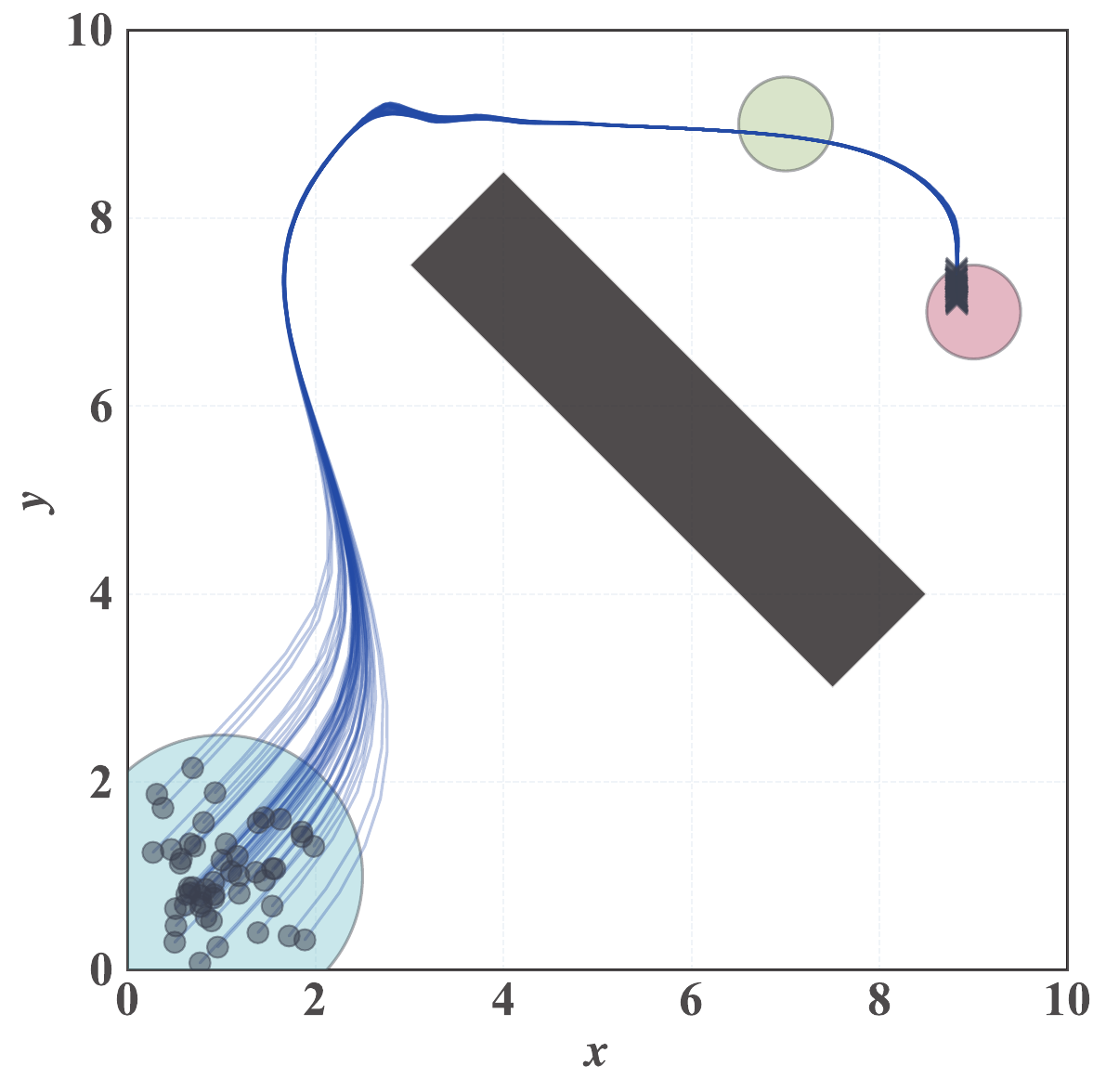}
        \end{minipage}
    }
    \subfigure[1000000]{
        \begin{minipage}{0.48\columnwidth}
            \centering
            \includegraphics[width=\linewidth]{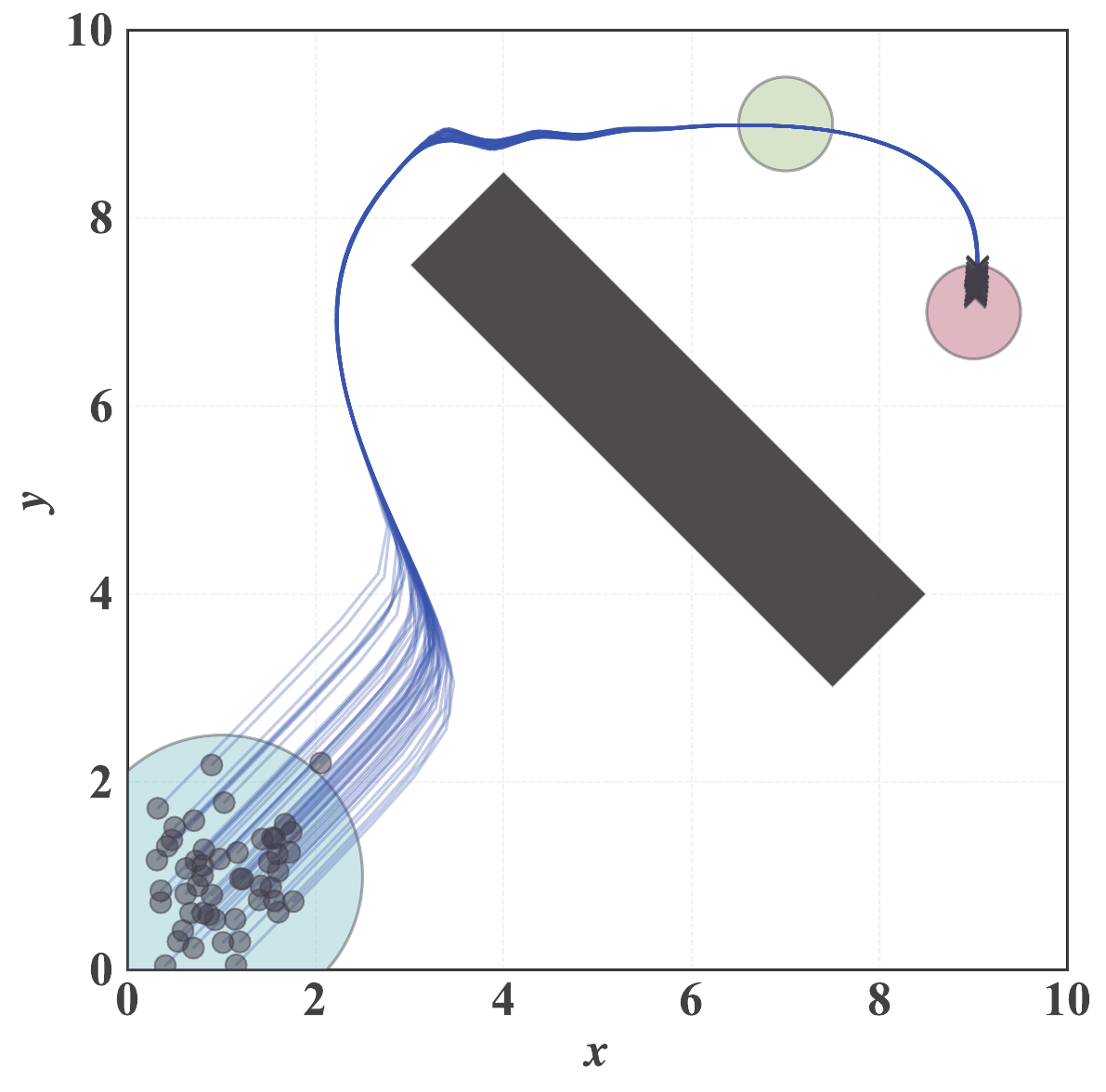}
        \end{minipage}
    }
    \caption{Training snapshots of \textbf{Nav2D-2G} environment from 10k steps to 1M steps. Green goal region is the first target, and red goal region is the second target.}    
    \label{app:fig:training snapshots of nav2d-2g}
\end{figure*}

\begin{figure*}[!t]
    \centering
    \subfigure[10000]{
        \begin{minipage}{0.48\columnwidth}
            \centering
            \includegraphics[width=\linewidth]{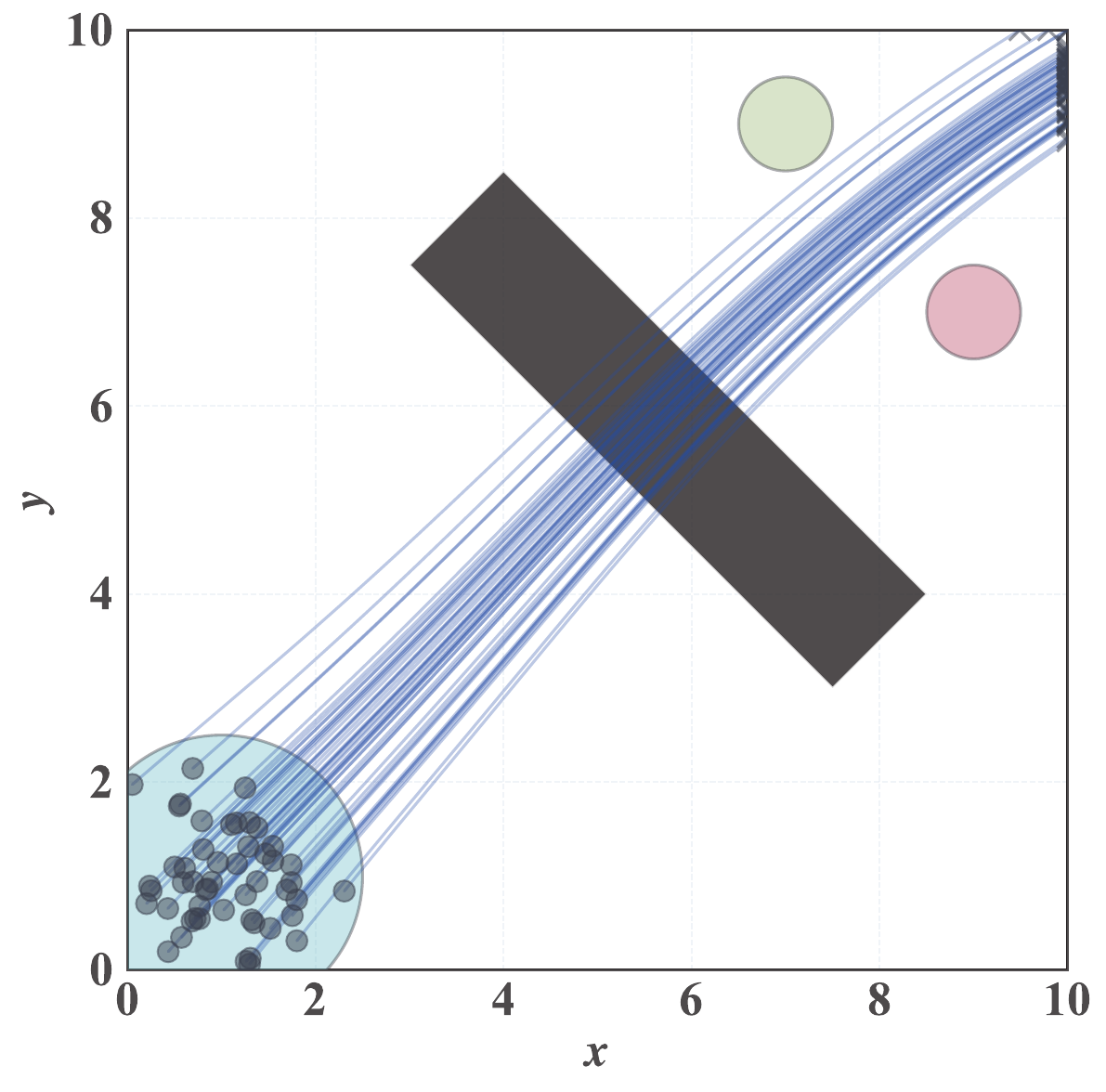}
        \end{minipage}
    }
    \subfigure[30000]{
        \begin{minipage}{0.48\columnwidth}
            \centering
            \includegraphics[width=\linewidth]{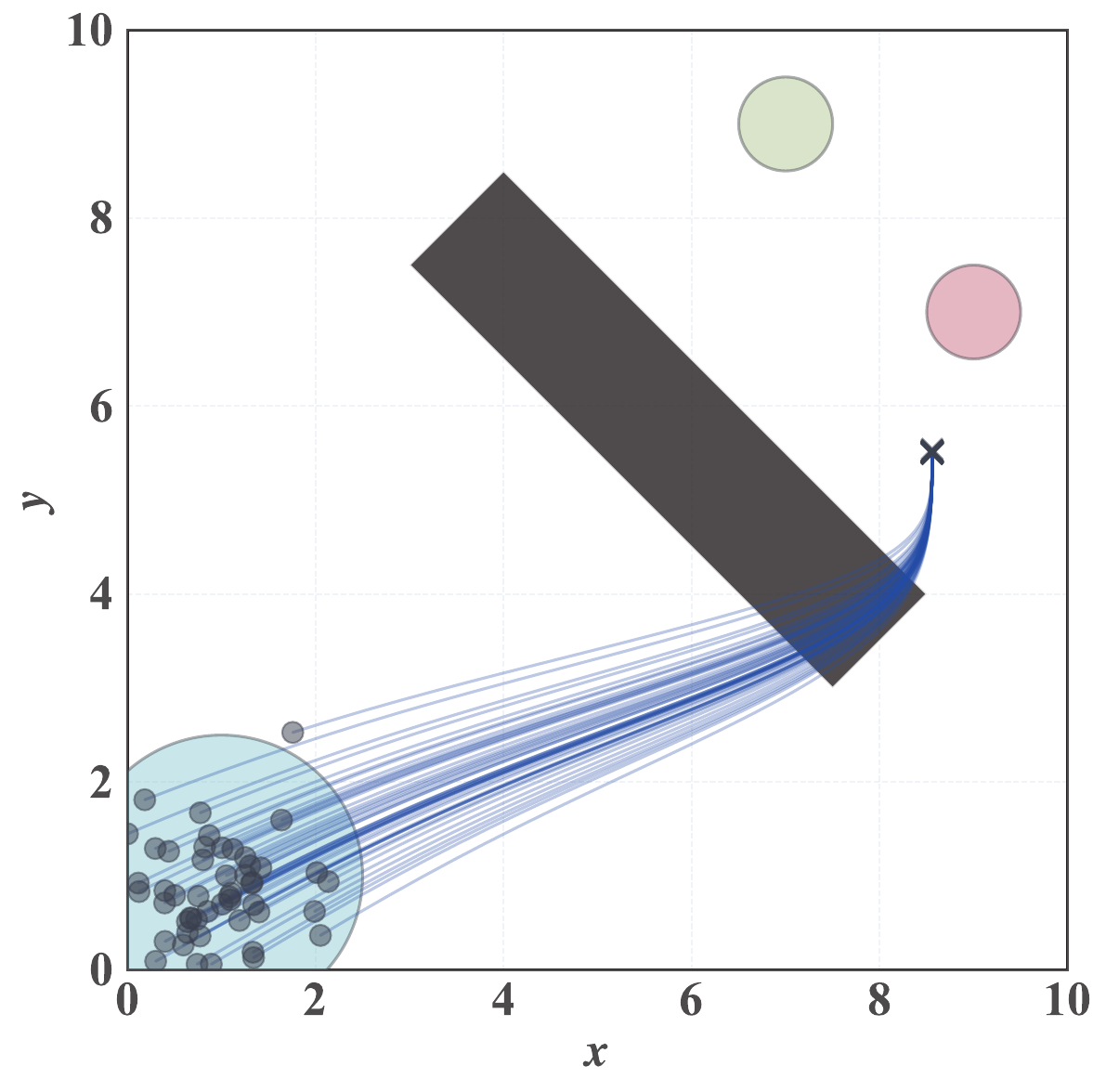}
        \end{minipage}
    }
    \subfigure[80000]{
        \begin{minipage}{0.48\columnwidth}
            \centering
            \includegraphics[width=\linewidth]{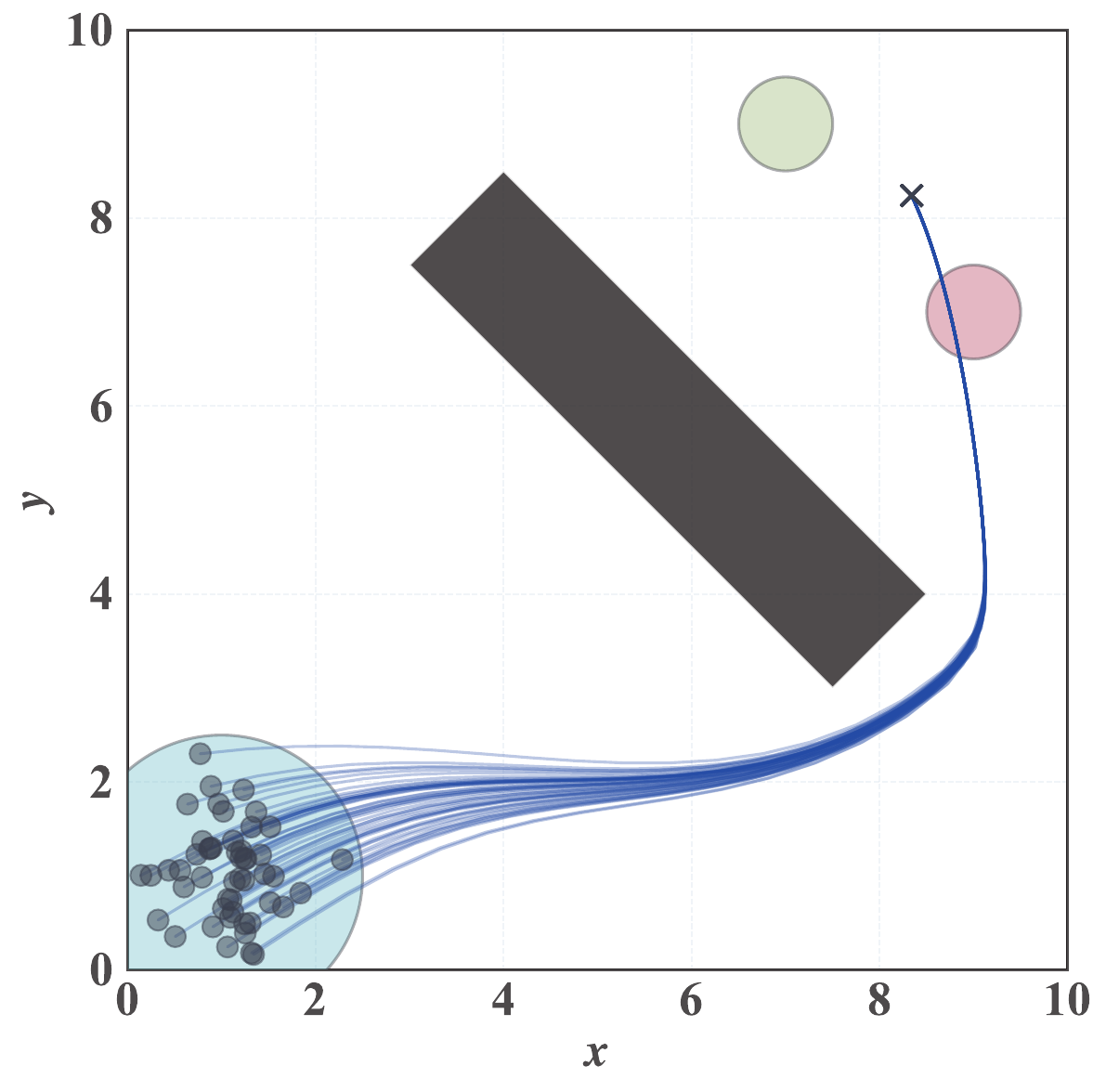}
        \end{minipage}
    }
    \subfigure[150000]{
        \begin{minipage}{0.48\columnwidth}
            \centering
            \includegraphics[width=\linewidth]{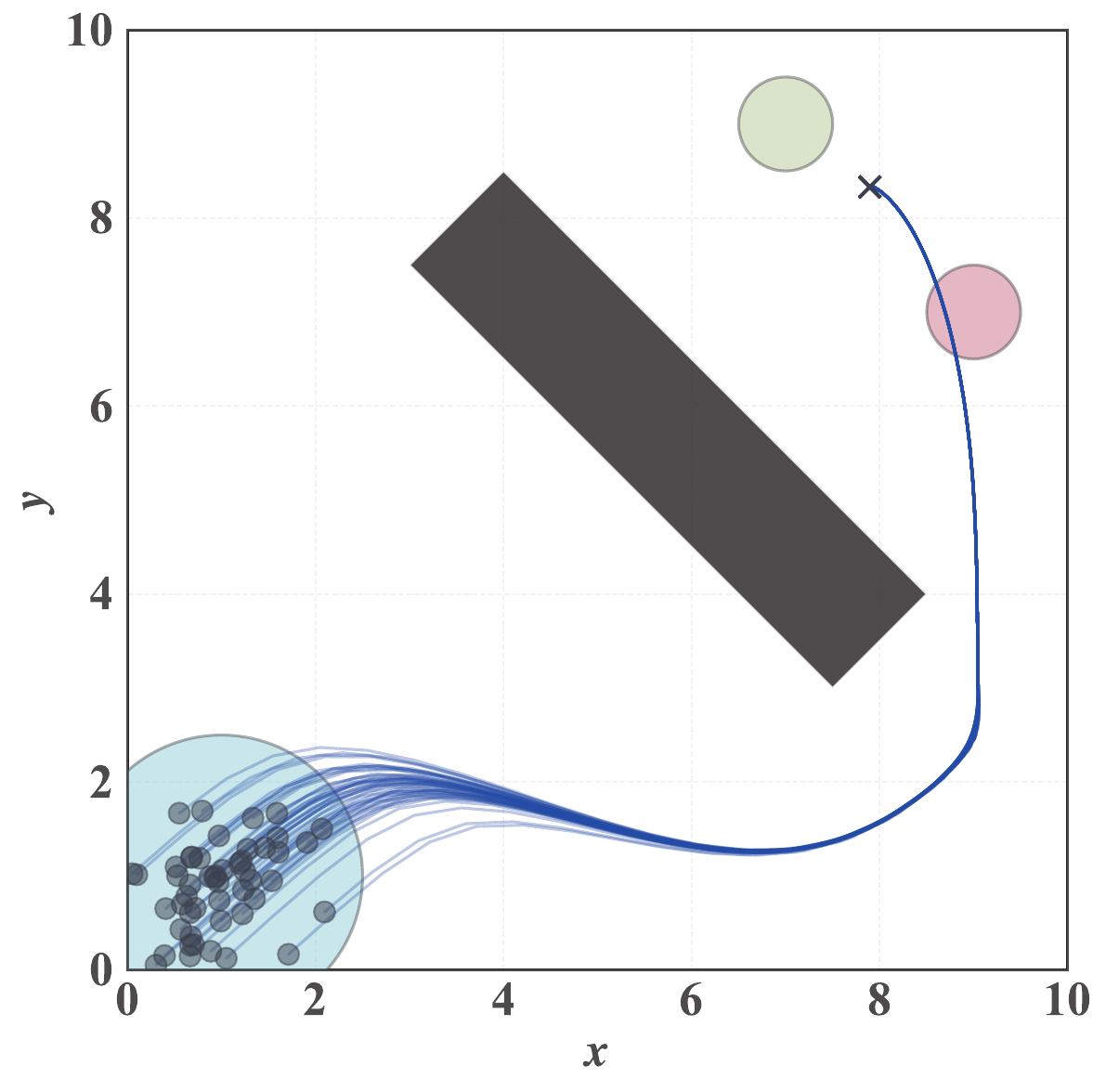}
        \end{minipage}
    }
    \subfigure[300000]{
        \begin{minipage}{0.48\columnwidth}
            \centering
            \includegraphics[width=\linewidth]{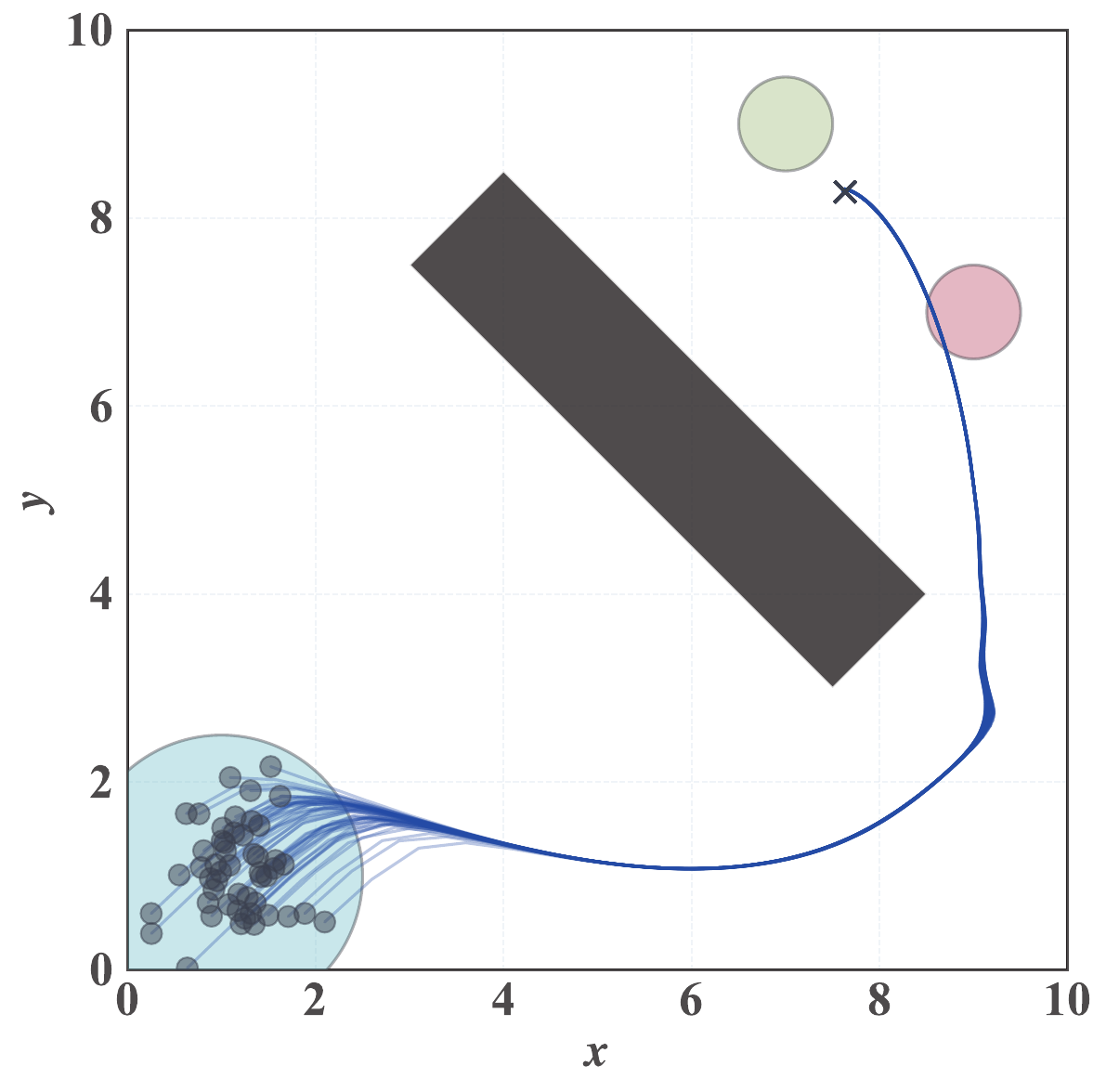}
        \end{minipage}
    }
    \subfigure[500000]{
        \begin{minipage}{0.48\columnwidth}
            \centering
            \includegraphics[width=\linewidth]{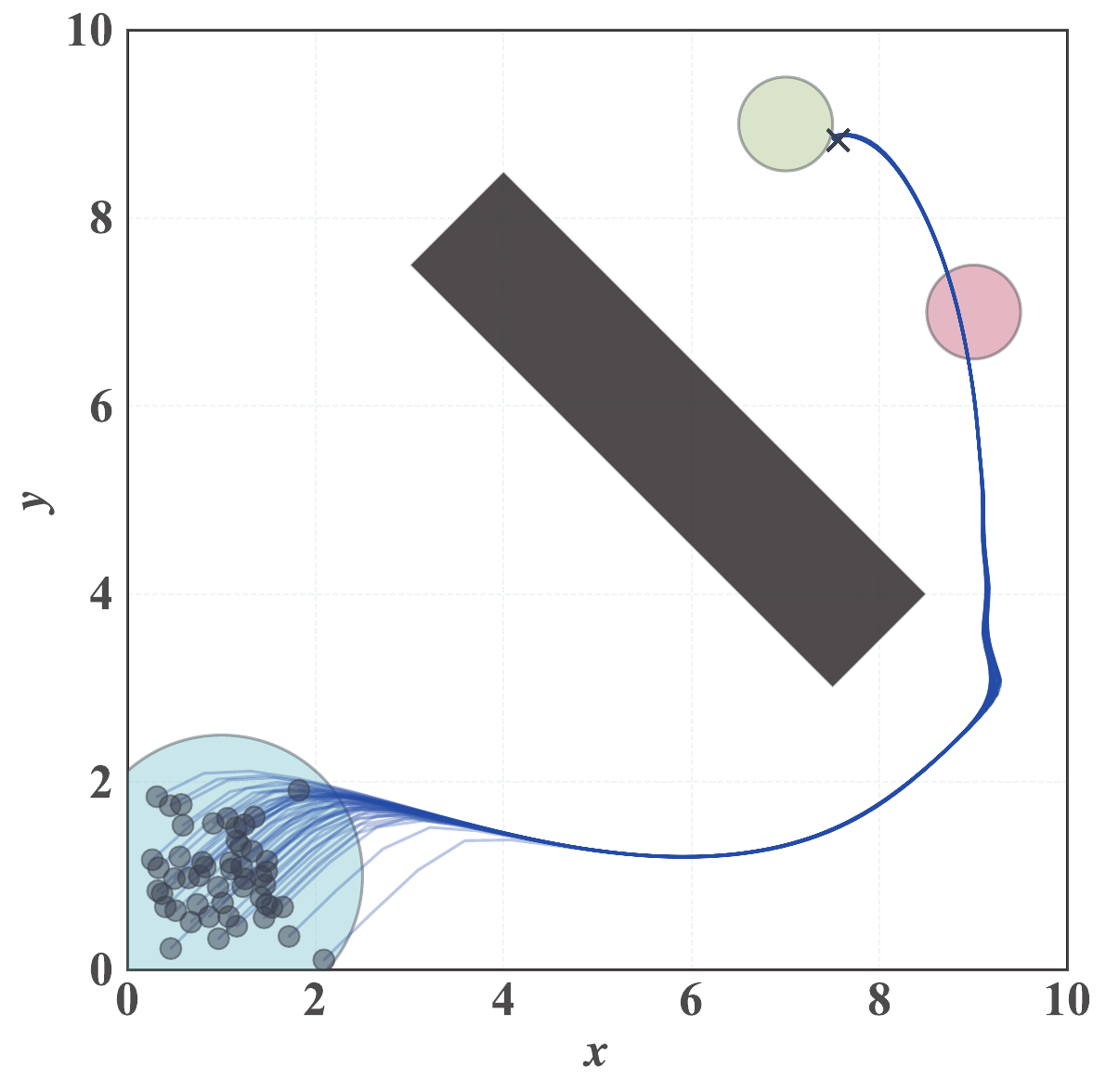}
        \end{minipage}
    }
    \subfigure[750000]{
        \begin{minipage}{0.48\columnwidth}
            \centering
            \includegraphics[width=\linewidth]{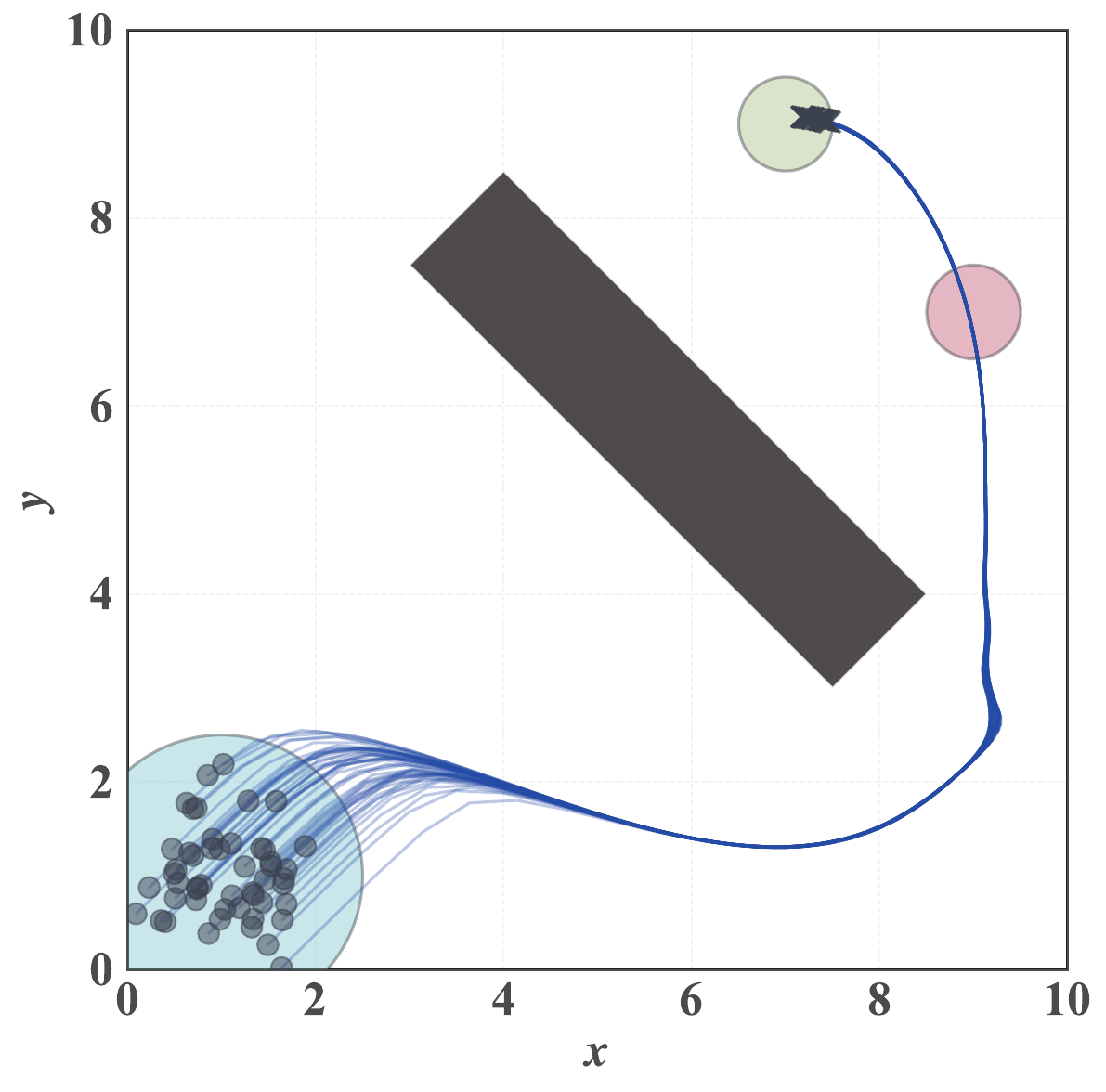}
        \end{minipage}
    }
    \subfigure[1000000]{
        \begin{minipage}{0.48\columnwidth}
            \centering
            \includegraphics[width=\linewidth]{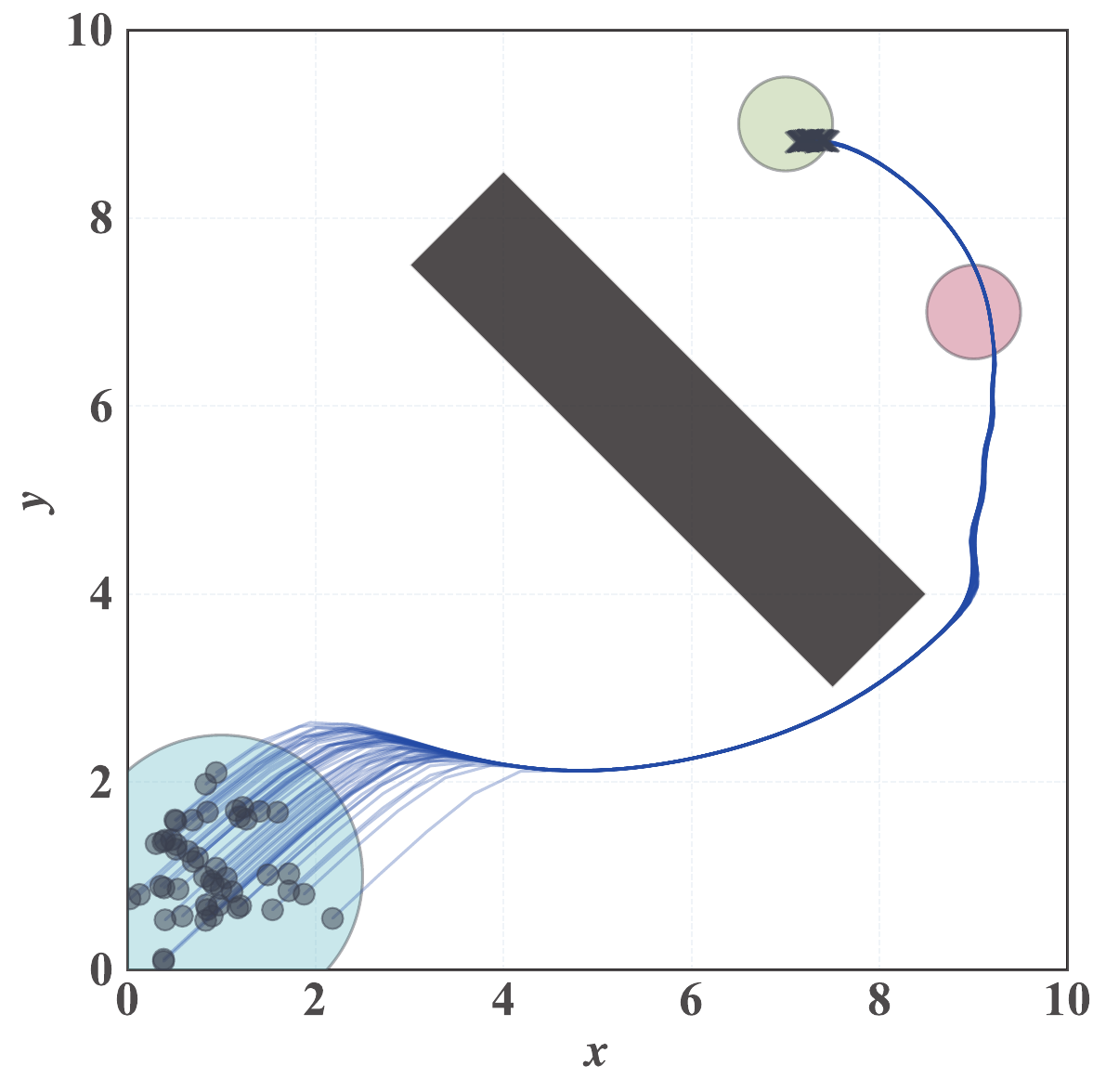}
        \end{minipage}
    }
    \caption{Training snapshots of \textbf{Nav2D-2G-rev} environment from 10k steps to 1M steps. Red goal region is the first target, and green goal region is the second target.}    
    \label{app:fig:training snapshots of nav2d-2g-rev}
\end{figure*}

\begin{figure*}[!t]
    \centering
    \subfigure[$r_0$]{
        \begin{minipage}{0.65\columnwidth}
            \centering
            \includegraphics[width=\linewidth]{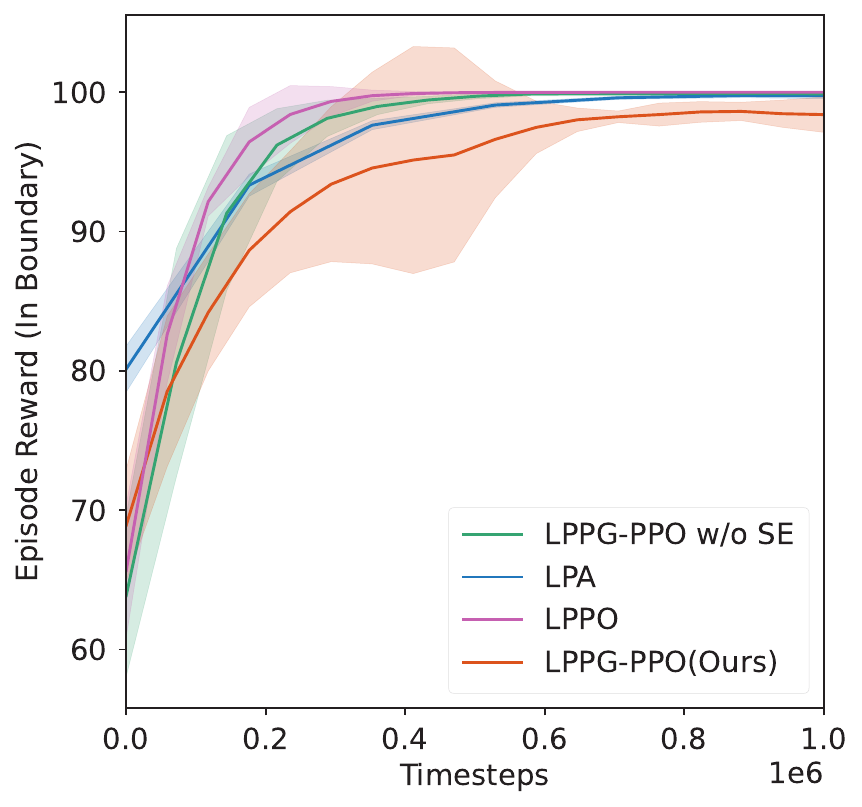}
        \end{minipage}
    }
    \subfigure[$r_1$]{
        \begin{minipage}{0.65\columnwidth}
            \centering
            \includegraphics[width=\linewidth]{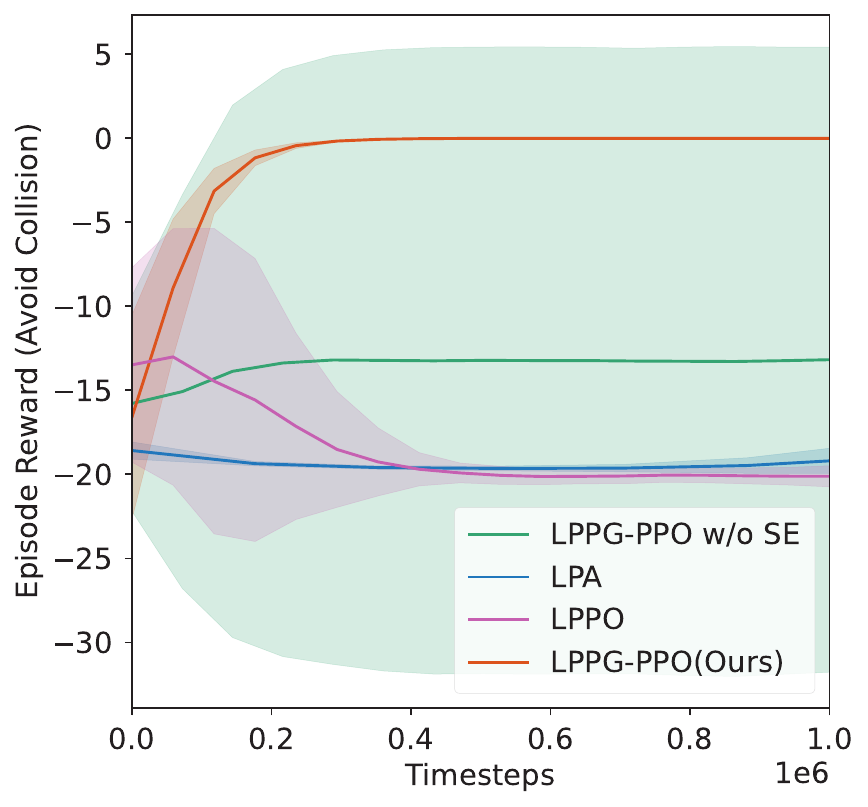}
        \end{minipage}
    }
    \subfigure[$r_2$]{
        \begin{minipage}{0.65\columnwidth}
            \centering
            \includegraphics[width=\linewidth]{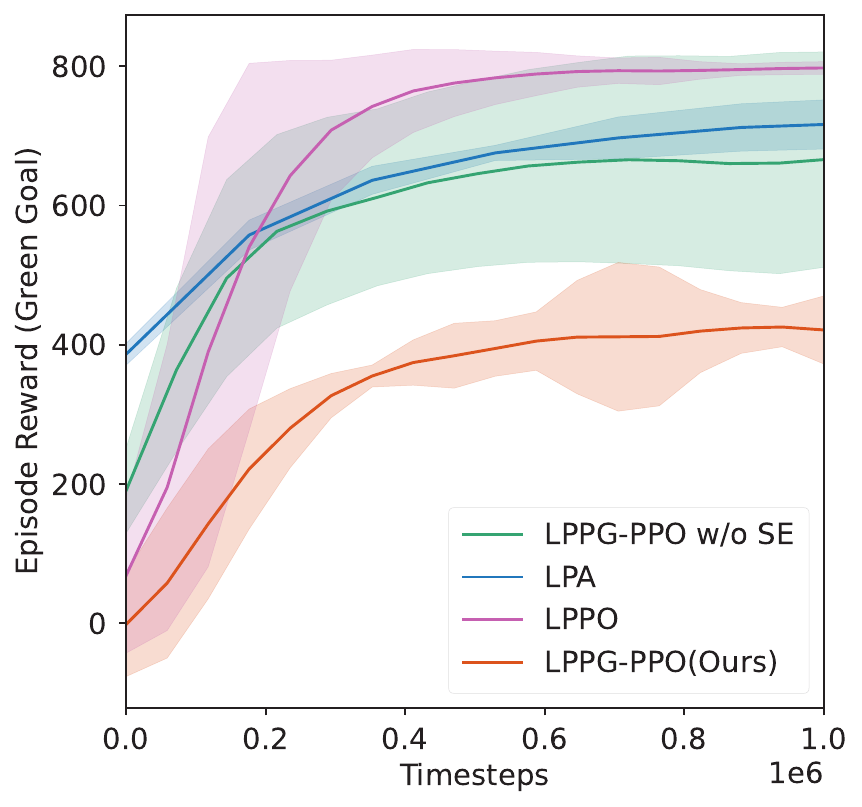}
        \end{minipage}
    }
    \caption{Training curves of different algorithms in Nav2D-1G} 
    \label{app:fig:reward nav2d-1g}
\end{figure*}

\begin{figure*}[!t]
    \centering
    \subfigure[$r_0$]{
        \begin{minipage}{0.48\columnwidth}
            \centering
            \includegraphics[width=\linewidth]{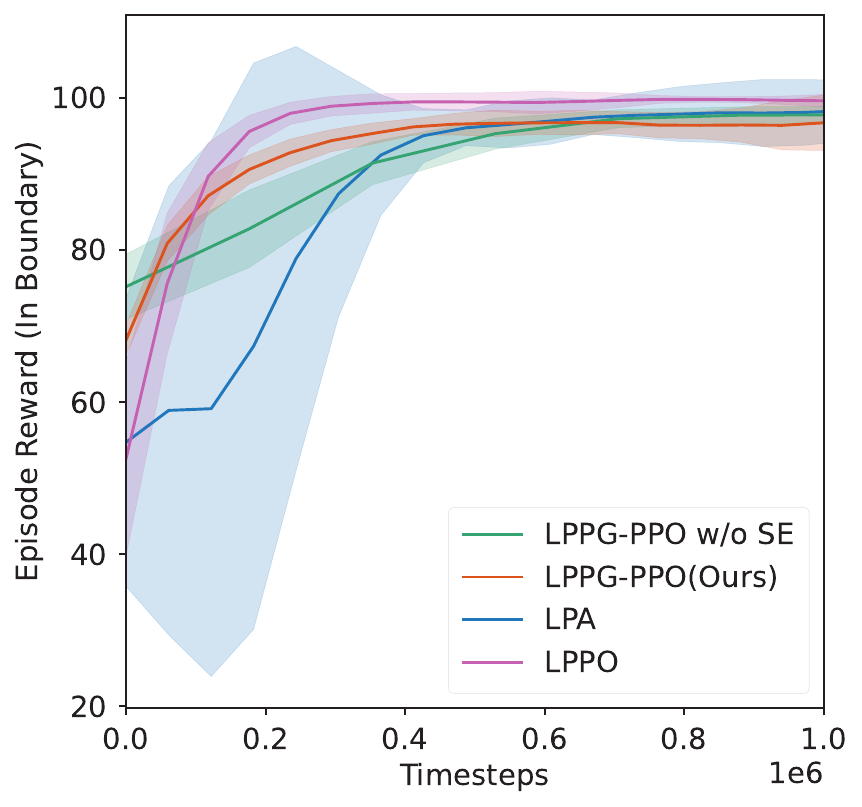}
        \end{minipage}
    }
    \subfigure[$r_1$]{
        \begin{minipage}{0.48\columnwidth}
            \centering
            \includegraphics[width=\linewidth]{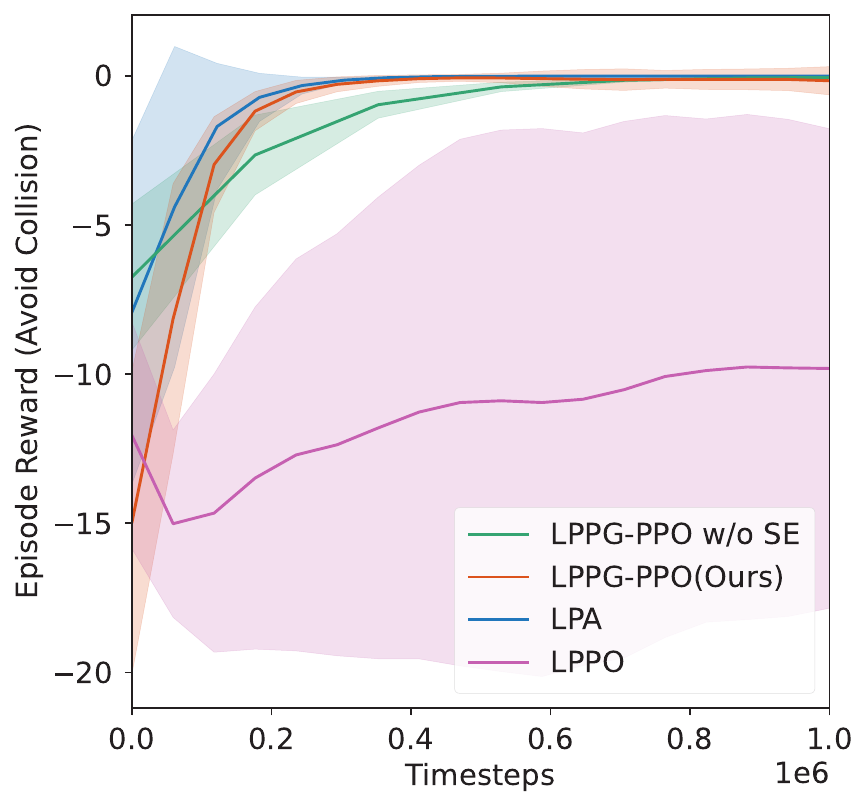}
        \end{minipage}
    }
    \subfigure[$r_2$]{
        \begin{minipage}{0.48\columnwidth}
            \centering
            \includegraphics[width=\linewidth]{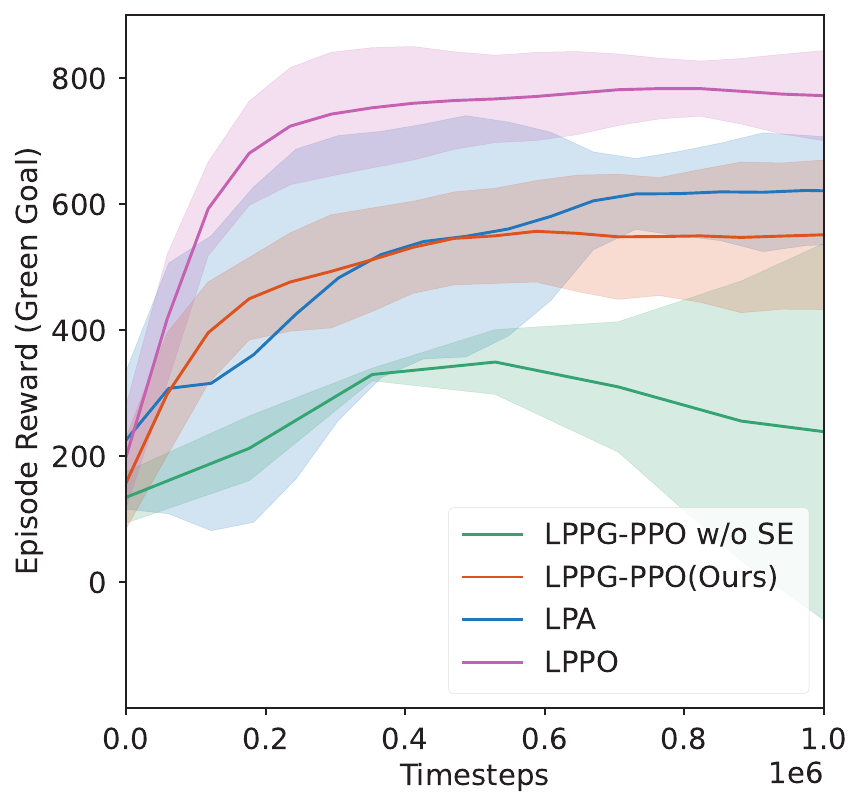}
        \end{minipage}
    }
    \subfigure[$r_3$]{
        \begin{minipage}{0.48\columnwidth}
            \centering
            \includegraphics[width=\linewidth]{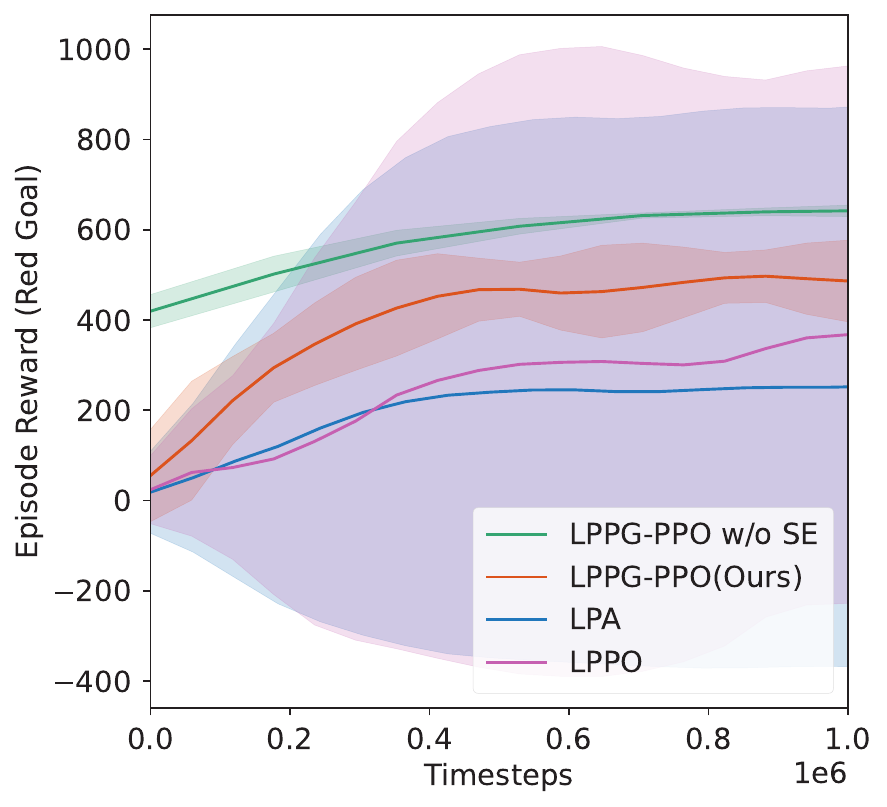}
        \end{minipage}
    }
    \caption{Training curves of different algorithms in Nav2D-2G} 
    \label{app:fig:reward nav2d-2g}
\end{figure*}

\begin{figure*}[!t]
    \centering
    \subfigure[$r_0$]{
        \begin{minipage}{0.48\columnwidth}
            \centering
            \includegraphics[width=\linewidth]{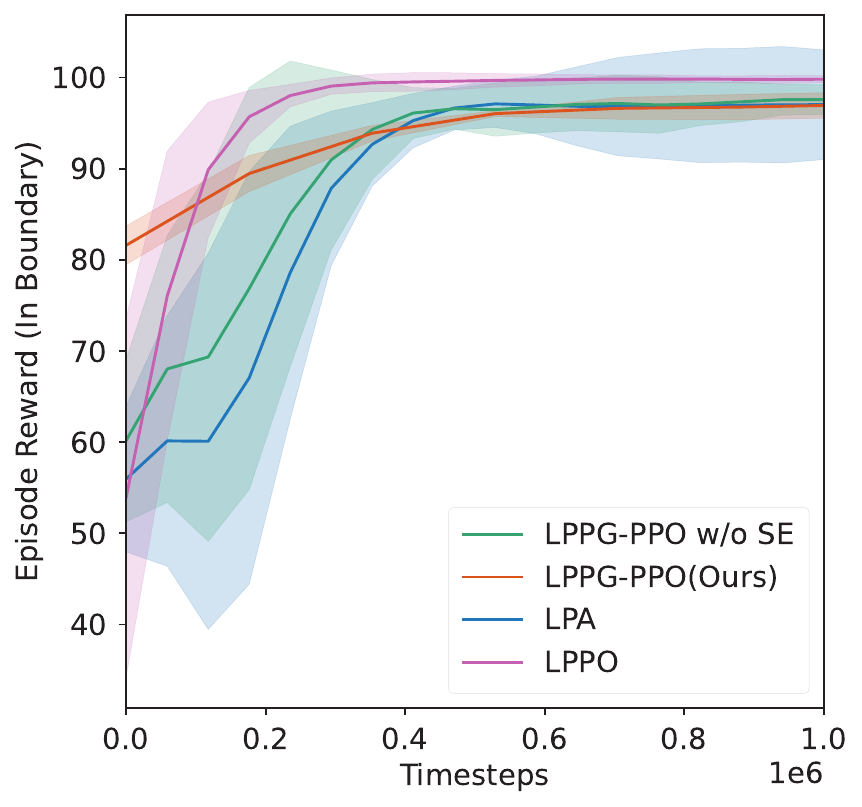}
        \end{minipage}
    }
    \subfigure[$r_1$]{
        \begin{minipage}{0.48\columnwidth}
            \centering
            \includegraphics[width=\linewidth]{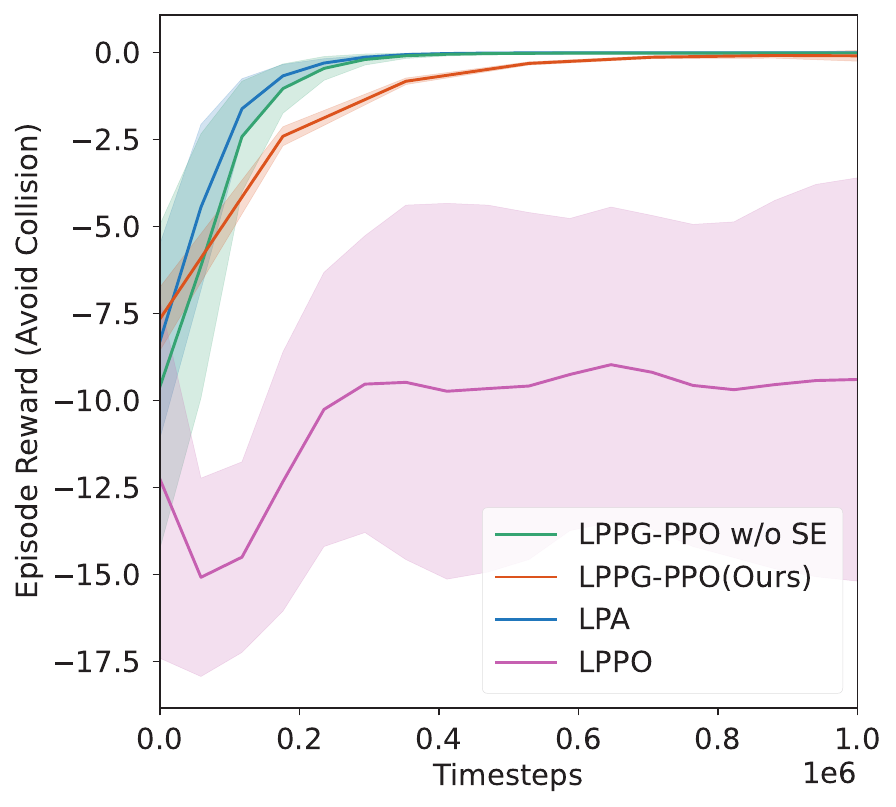}
        \end{minipage}
    }
    \subfigure[$r_2$]{
        \begin{minipage}{0.48\columnwidth}
            \centering
            \includegraphics[width=\linewidth]{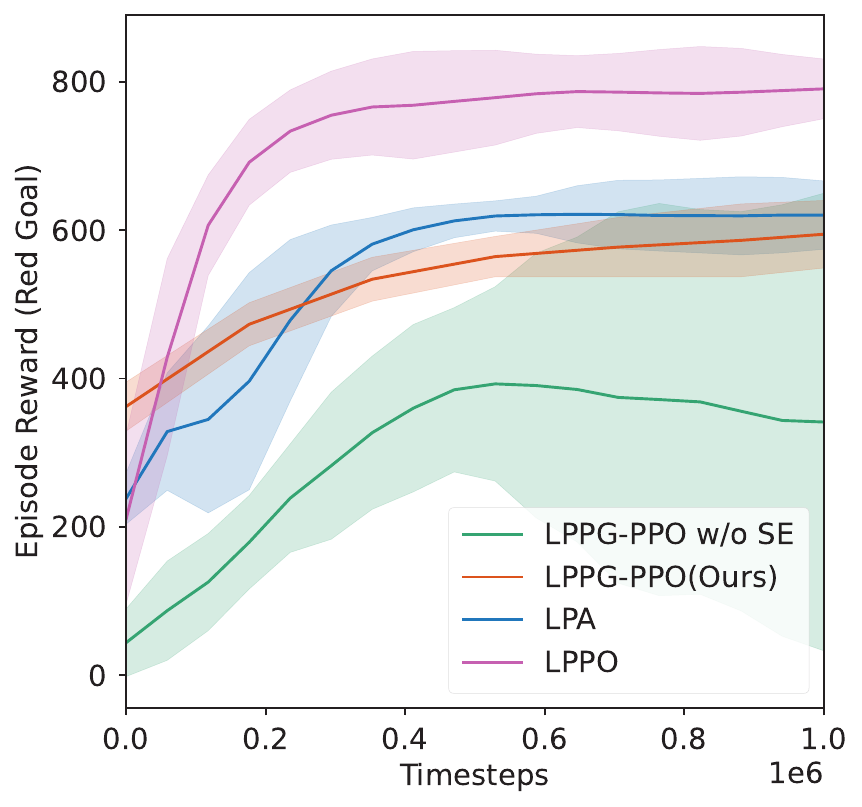}
        \end{minipage}
    }
    \subfigure[$r_3$]{
        \begin{minipage}{0.48\columnwidth}
            \centering
            \includegraphics[width=\linewidth]{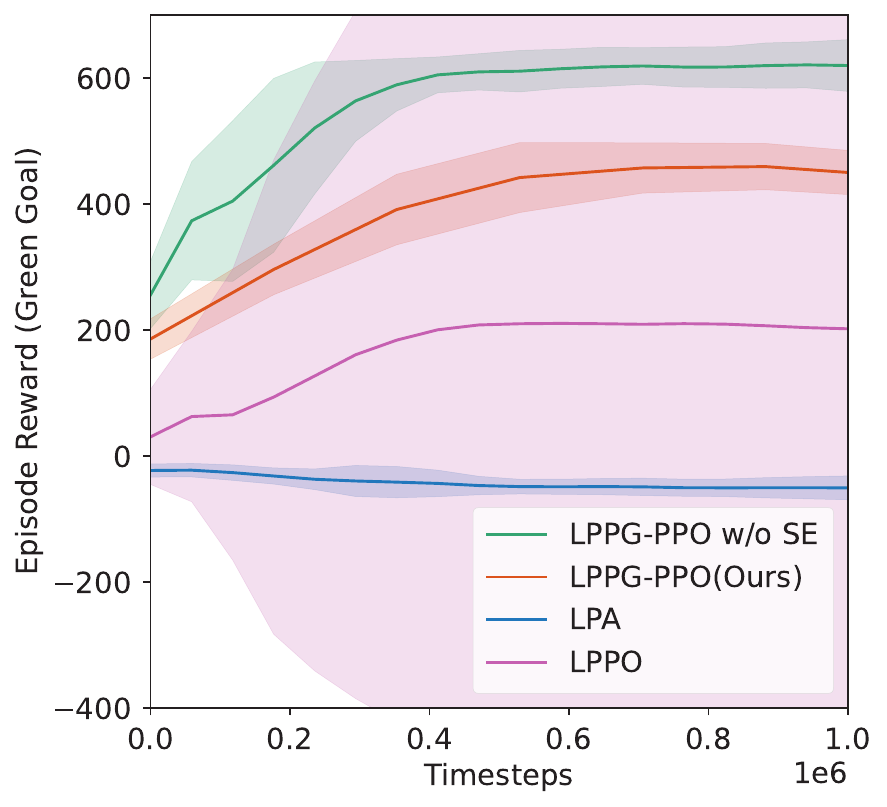}
        \end{minipage}
    }
    \caption{Training curves of different algorithms in Nav2D-2G-rev} 
    \label{app:fig:reward nav2d-2g-rev}
\end{figure*}

\newcolumntype{M}{r@{$\pm$}l}


\begin{table*}[!t]
\centering
\resizebox{\textwidth}{!}{
\begin{tabular}
{
l
*{3}{M c} 
*{4}{M c} 
*{4}{M c} 
}
\toprule
\multirow{2}{*}{Algorithm} &
\multicolumn{9}{c}{\textbf{Nav2D-1G}} &
\multicolumn{12}{c}{\textbf{Nav2D-2G}} &
\multicolumn{12}{c}{\textbf{Nav2D-2G-rev}} \\
\cmidrule(lr){2-10}\cmidrule(lr){11-22}\cmidrule(lr){23-34}
& \multicolumn{3}{c}{$K_1$} & \multicolumn{3}{c}{$K_2$} & \multicolumn{3}{c}{$K_3$}
& \multicolumn{3}{c}{$K_1$} & \multicolumn{3}{c}{$K_2$} & \multicolumn{3}{c}{$K_3$} & \multicolumn{3}{c}{$K_4$}
& \multicolumn{3}{c}{$K_1$} & \multicolumn{3}{c}{$K_2$} & \multicolumn{3}{c}{$K_3$} & \multicolumn{3}{c}{$K_4$} \\
\midrule
\textbf{LPA} &
100 & 0 & \yes & $-19$ & 1 & \no & 714 & 52 & \yes &
98 & 4 & \yes & 0 & 0 & \yes & 631 & 66 & \yes & 246 & 604 & \no &
97 & 10 & \yes & 0 & 0 & \yes & 603 & 48 & \yes & $-39$ & 11 & \no \\

\textbf{LPPO} &
100 & 0 & \yes & $-20$ & 1 & \no & 740 & 13 & \yes &
100 & 1 & \yes & $-10$ & 8 & \no & 753 & 62 & \yes & 382 & 612 & \no &
100 & 0 & \yes & $-10$ & 7 & \no & 785 & 30 & \yes & 197 & 231 & \no \\

\textbf{LPPG-PPO w/o SE} &
100 & 0 & \yes & $-12$ & 17 & \no & 704 & 164 & \yes &
98 & 2 & \yes & 0 & 0 & \yes & 199 & 391 & \no & 604 & 13 & \yes &
98 & 2 & \yes & 0 & 0 & \yes & 258 & 375 & \no & 612 & 22 & \yes \\

\textbf{LPPG-PPO (Ours)} &
98 & 2 & \yes & 0 & 0 & \yes & 427 & 58 & \yes &
96 & 4 & \yes & 0 & 1 & \yes & 587 & 93 & \yes & 459 & 71 & \yes &
97 & 2 & \yes & 0 & 0 & \yes & 591 & 42 & \yes & 426 & 38 & \yes \\
\bottomrule
\end{tabular}}
\caption{Completed comparison of baseline algorithms and ablation study of performances in three environments under 10 different seeds.}
\label{app:tab:full compare rewards}
\end{table*}

\begin{figure*}[!t]
    \centering
    \subfigure[Solver Runtime (log scale)]{
        \begin{minipage}{0.95\columnwidth}
            \centering
            \includegraphics[width=\linewidth]{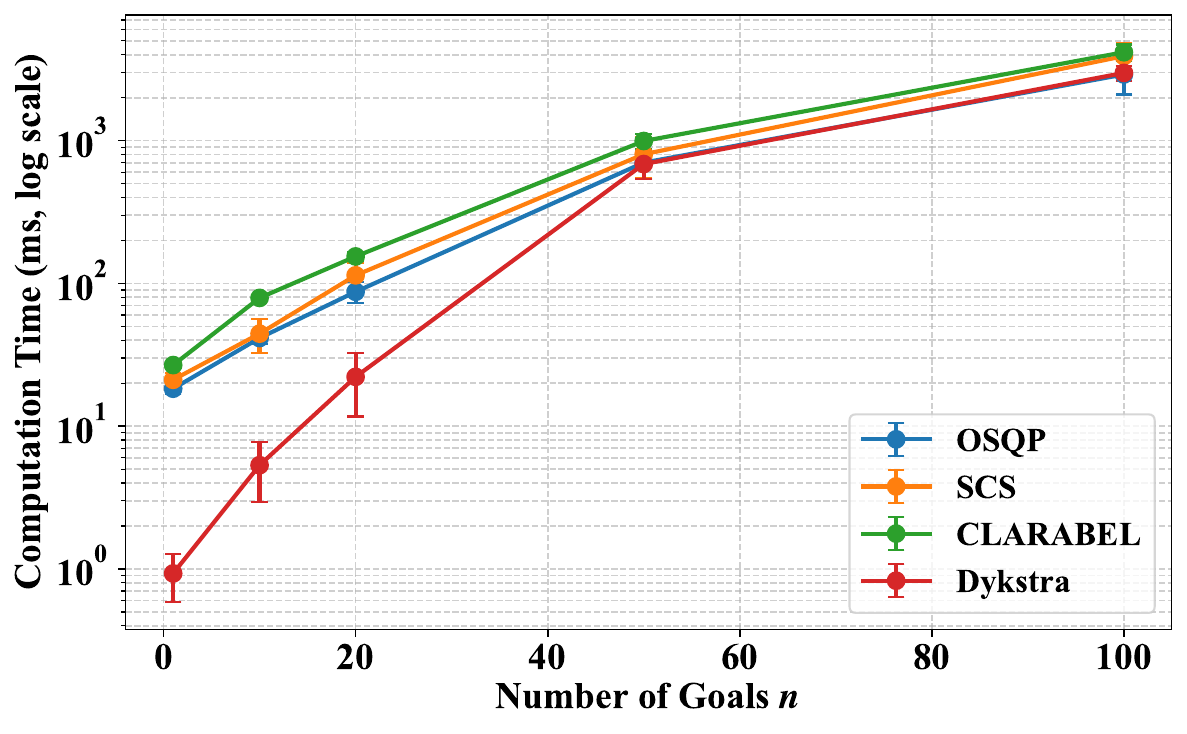}
        \end{minipage}
        \label{app:fig:solver runtime}
    }
    \subfigure[Relative Solver Speed-up (Higher = Faster)]{
        \begin{minipage}{0.95\columnwidth}
            \centering
            \includegraphics[width=\linewidth]{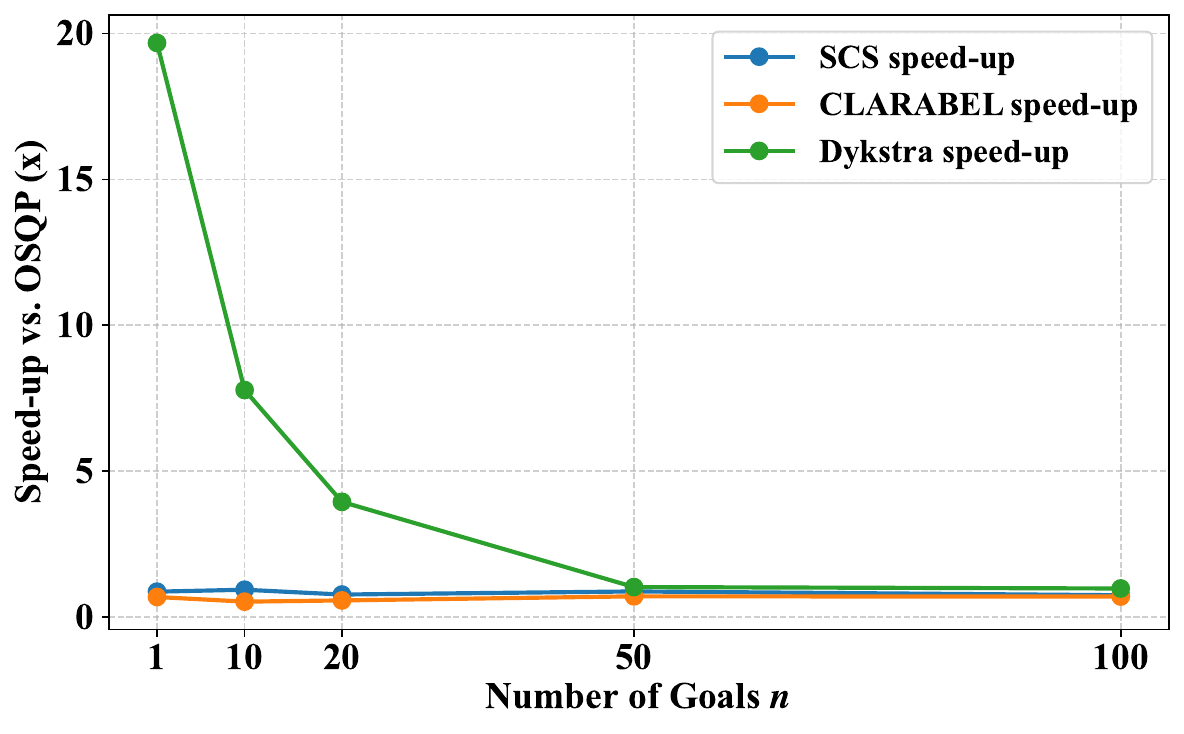}
        \end{minipage}
        \label{app:fig:solver speed-up}
    }
    \caption{Performance comparison between solvers for searching gradient}    
    \label{app:fig:solver comparison}
\end{figure*}

\subsection{C.2 Comparison Results}
To illustrate the performance comparison between different Dykstra's projection algorithm with other common solvers (OSQP, SCS, CLARABEL), we compare the solver runtime and relative solver speed-up, as shown in  Figure~\ref{app:fig:solver comparison}. To make the benchmark more comparative, SE technique is not used here, and all settings are trained with 100k steps, 5 different seeds, starting from the lowest priority subtask $K_M$ to search for a feasible policy gradient.  Specifically, Figure~\ref{app:fig:solver runtime} shows the average runtime of different solvers with different numbers of goals in a log scale. Figure~\ref{app:fig:solver speed-up} shows the relative solver speed-up using OSQP as the baseline. In all settings, these generic solvers shows a relatively stable speed with the baseline. In comparison, we observe that, as an iterative projection algorithm specifically designed for solving such types of optimization problems, Dykstra's projection algorithm has better efficiency when $n$ is small, and faster up to 20x than the baseline OSQP($\textbf{n}=1$, $|\mathcal{K}|=3$). Then, with $n$ being larger, Dykstra's algorithm needs more iterations to converge to the solution for the same precision, and the advantage over other solvers is not so obvious.

\section{D. Additional Experiment Results: MuJuCo Humanoid}
In this section, we evaluate LPPG-PPO using the MuJoCo Humanoid environment, a standard benchmark for high-dimensional continuous control tasks. We first present details about the environment configuration, followed by a direct training and performance comparison against vanilla to demonstrate the effectiveness of our approach. 

\subsection{D.1 Environment Details}
This section provides specific details for the MuJoCo Humanoid environment, including the state space, action space, reward composition, and how to specifically divide them into different subtasks. 
\begin{itemize}
    \item \textbf{State} The state consists of 348 elements, including 22 body parts positions (qpos), 23 body part velocities (qvel), 130 center-of-mass based body inertia values (cinert), 78 center-of-mass based velocity values, and 78 center-of-mass based external force values. 
    \item \textbf{Action} The action consists of 17 elements, representing the torques applied at the hinge joints. The raw policy action $a_t$ is $(-1, 1)$, and is clipped to the true torque bound $[-0.4,0.4]$. 
    \item \textbf{Reward: Healthy} This reward is designed to keep survival. The agent receives a constant positive reward $w_{healthy}$ (default 5) for each timestep it remains alive (i.e., does not enter a terminal state).
    \begin{equation} \notag
            r^{healthy}_t = 
                \begin{cases}
                    w_{healthy} \quad & \text{if} \quad  \text{alive} \\ 
                    0 & \text{else}
                \end{cases}
        \end{equation}
    \item \textbf{Reward: Forward} This reward is designed for forward locomotion. It is proportional to the humanoid velocity in the positive $x$ direction, based on the change in the center-of-mass position per timestep. 
    \begin{equation} \notag
        r^{forward}_t = w_{forward} \frac{x'-x}{dt}
    \end{equation}
    where $w_{forward}$ is the weight for this reward (default is 1.25), $x'$ and $x$ is the center position of mass after action and before action, respectively. 
    \item \textbf{Cost: Control Cost} This cost is to penalize the humanoid for high-magnitude actions to encourage energy efficiency. In practice, it is implemented as a negative reward (a cost), proportional to the squared L2-norm of the action $a_t$. 
    \begin{equation} \notag
        r^{control}_t = -c_t^{control} = w_{control} \|a_t\|^2
    \end{equation}
    where $w_{control}$ is the weight for the cost (default is 0.1).
\end{itemize}

The total reward $r_t$ at each timestep is the summation of these components,
\begin{equation}
    r_t = r^{healthy}_t + r^{forward}_t + r^{control}_t
\end{equation}

To apply LPPG-PPO, we decompose this reward and define three separate prioritized subtasks, ordered from high to low priority: (1) $K_1$ is to maximize the healthy reward $r_t^{healthy}$. (2) $K_2$ is to maximize forward reward $r_t^{forward}$. (3) $K_3$ is to maximizing the control reward $r_t^{control}$, which is equivalent to minimizing the control cost. 

A key distinction of the MuJoCo Humanoid from Nav2D environment is that, Humanoid task lacks explicit hard constraints, and all subtasks are formulated as objectives to be maximized. However, a natural priority exists among these objectives. Our defined priority order actually reflects a logical learning curriculum. The agent must first learn to survive (stay healthy or upright), then learn to achieve forward locomotion, and finally learn to optimize for energy efficiency.

\subsection{D.2 Training Details}
The hyperparameter settings used for our experiments is presented in Table~\ref{app:tab:hyperparameters of lppg-ppo humanoid}. For the Humanoid benchmark, the agent is trained for total $7\times 10^6$ steps, which ensures convergence and allows for a fair comparison of performance. Similarly, we set $\epsilon_i=0$ for all subtasks. 

\begin{table}[t]
    \centering
    \begin{tabular}{lc}
        \toprule
         & \textbf{Humanoid} \\
        \midrule
        \textbf{Total number of steps} & $7\times10^6$ \\ 
        \textbf{Learning rate for actor} & $1\times10^{-4}$\\
        \textbf{Learning rate for critic} & $1\times10^{-4}$ \\
        \textbf{Discount factor} & $0.99$ \\ 
        \textbf{GAE discount factor} & $0.95$ \\ 
        \textbf{Batch size} & $2048$\\ 
        \textbf{Mini batch size} & $64$\\ 
        \textbf{Update epoch} & $10$ \\
        \textbf{Actor hidden layer numbers} & $3$ \\
        \textbf{Actor hidden neuron numbers} & $64$ \\
        \textbf{Critic hidden layer numbers} & $3$ \\
        \textbf{Critic hidden neuron numbers} & $64$ \\
        \textbf{Lexicographic relaxation value }  \bm{$\epsilon_i$} & $[0,0,0]$ \\
        \textbf{Dykstra's convergence tolerance} & $1\times10^{-6}$ \\ 
        \textbf{Dykstra's maximum iteration} & 500 \\
        \bottomrule
    \end{tabular}
    \caption{Hyperparameter settings of LPPG-PPO in the Humanoid environment.}
    \label{app:tab:hyperparameters of lppg-ppo humanoid}
\end{table}

\subsection{D.3 Comparison Results}
In this section, we present a comparative analysis of LPPG-PPO against the vanilla PPO baseline. The vanilla PPO uses the recommended default weights for different subtask scalarization. LPPG-PPO dispenses with manual weight tuning and instead enforces the explicit priority defined above. 

The training curves for subtasks are presented in Figure~\ref{app:fig:humanoid healthy}-\ref{app:fig:humanoid control cost}, and the training curves for total reward are presented in Figure~\ref{app:fig:humanoid total reward}. The final converged values are summarized in Table~\ref{app:tab:full compare rewards humanoid}. First, For the healthy subtask $K_1$, we observe that the vanilla PPO achieves a marginally higher reward than LPPG-PPO. The main reason is that, LPPG-PPO uses the SE technique with uniform sampling, which makes the training process only partially focus on being healthy. Second, LPPG-PPO demonstrates a significant and clear advantage in the forward reward. However, the vanilla PPO attempts to learn all subtasks concurrently, and fails to find the balance between these subtasks. This result also indicates that the default PPO weights are suboptimal and that we should assign $K_2$ with a higher weight. Third, the training curve of $K_3$ also demonstrates the effectiveness of priorities. The control cost of LPPG-PPO begins to meaningfully decrease after around $4\times 10^6$ timesteps. This aligns with the priority, because it occurs only after the agent has achieved enough high performance on more critical tasks $K_2$ and $K_3$. In contrast, the vanilla PPO obtains a much larger control cost throughout training, which indicates an inefficient policy characterized by larger torques. Finally, despite LPPG-PPO does not optimize for a simple weighted sum, it ultimately achieves a superior total reward compared to the fine-tuned vanilla PPO. This demonstrates that our LPPG-RL framework does not simply trade one objective for another. Instead, it guides the agent to a more robust, balanced, and effective policy that finds a better solution according to the priority to solve a complex and high-dimensional task. 

\begin{figure*}[!t]
    \centering
    \subfigure[$r_0$]{ \label{app:fig:humanoid healthy}
        \begin{minipage}{0.48\columnwidth}
            \centering
            \includegraphics[width=\linewidth]{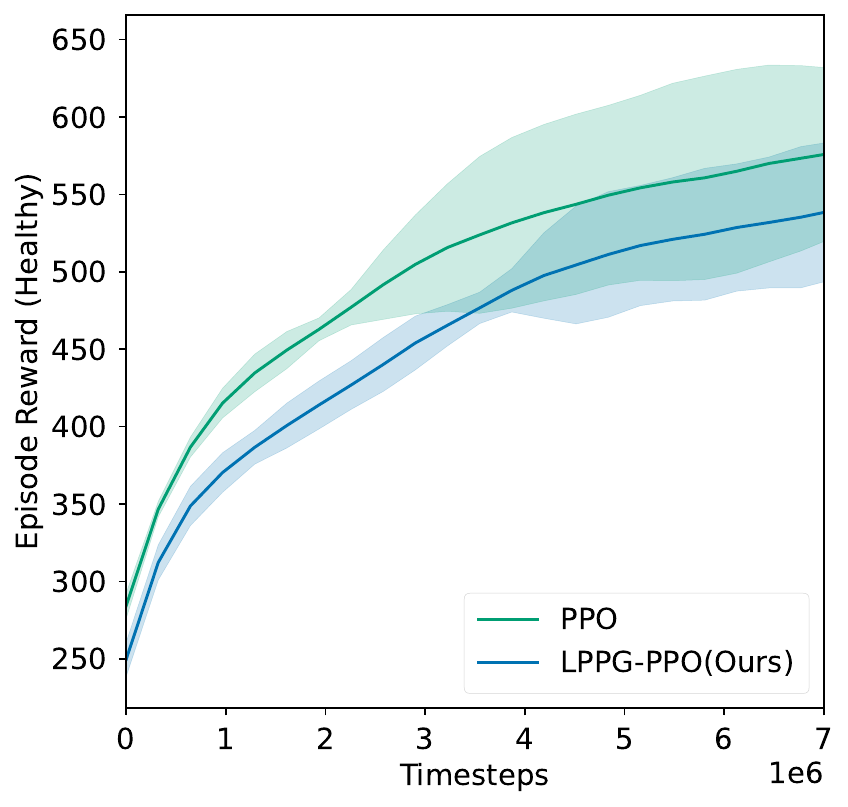}
        \end{minipage}
    }
    \subfigure[$r_1$]{ \label{app:fig:humanoid forward}
        \begin{minipage}{0.48\columnwidth}
            \centering
            \includegraphics[width=\linewidth]{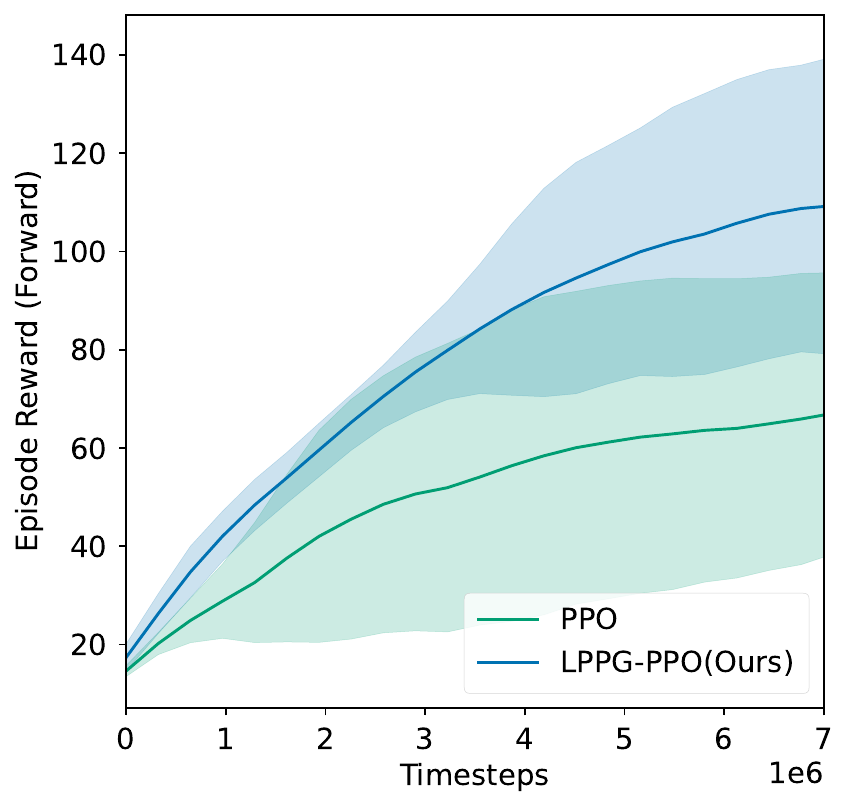}
        \end{minipage}
    }
    \subfigure[$r_2$]{ \label{app:fig:humanoid control cost}
        \begin{minipage}{0.48\columnwidth}
            \centering
            \includegraphics[width=\linewidth]{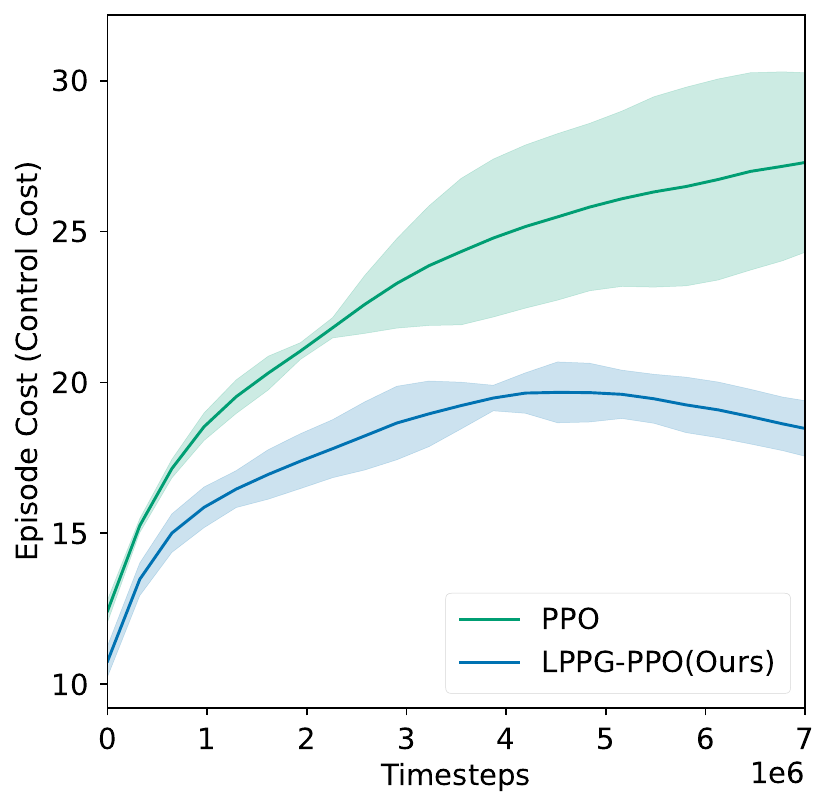}
        \end{minipage}
    }
    \subfigure[total reward]{ \label{app:fig:humanoid total reward}
        \begin{minipage}{0.48\columnwidth}
            \centering
            \includegraphics[width=\linewidth]{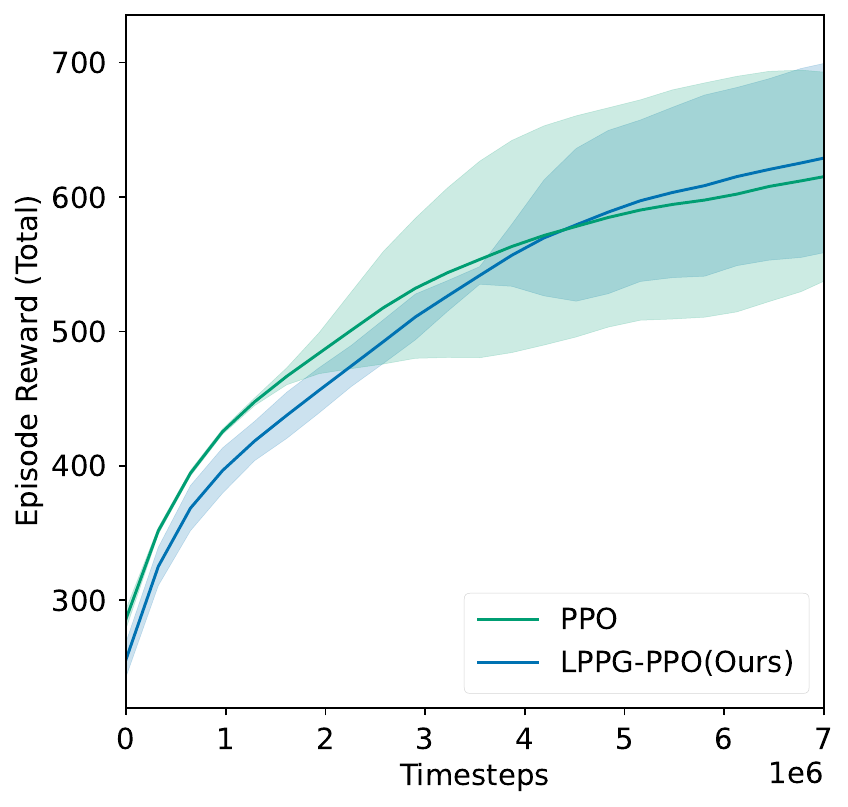}
        \end{minipage}
    }
    \caption{Training curves of LPPG-PPO and PPO in Humanoid} 
    \label{app:fig:reward humanoid}
\end{figure*}

\begin{table*}[!t]
\centering
\begin{tabular}
{
l
*{4}{M c} 
}
\toprule
\multirow{2}{*}{Algorithm} &
\multicolumn{12}{c}{\textbf{Humanoid}} \\
\cmidrule(lr){2-13}
& \multicolumn{3}{c}{$K_1$ (Healthy Reward) $\uparrow$} & \multicolumn{3}{c}{$K_2$ (Forward Reward) $\uparrow$} & \multicolumn{3}{c}{$K_3$ (Control Cost) $\downarrow$} & \multicolumn{3}{c}{Total Reward $\uparrow$} \\
\midrule
\textbf{PPO} &
\textbf{571} & \textbf{53} & \yes & 62 & 22 & \yes & 27 & 3 & \yes & 620 & 68 & \yes \\

\textbf{LPPG-PPO (Ours)} &
543 & 30 & \yes & \textbf{108} & \textbf{32} & \yes & \textbf{16} & \textbf{1} & \yes & \textbf{624} & \textbf{54} & \yes \\
\bottomrule
\end{tabular}
\caption{Completed comparison of PPO and LPPG-PPO in Humanoid environment under 5 different seeds.}
\label{app:tab:full compare rewards humanoid}
\end{table*}

\section{E. Proof of Theorem~\ref{thm:lexico convergence}}
In this section, we provide a complete proof for Theorem~\ref{thm:lexico convergence}. First, we present a general convergence Lemma with two-timescale methods and stochastic approximation methods for actor-critic architecture. This is based on some past works \cite{konda1999, borkar2008}. Here we will simply restate the Lemma and give the short proof.
Second, we give the upper bound of stepsize to ensure the lexicographic monotonic improvement for the policy update  Then, we give the full proof of the convergence of our method under general actor-critic architecture. 

\subsection{E.1 Preliminaries}
\begin{lemma}[Actor-Critic convergence] \label{app:lem:ac convergence}
    Let the learning rate of actor $\alpha_\theta$ and critic $\alpha_\phi$ satisfy the following conditions, 
    \begin{equation} \notag
        \begin{aligned}
            & \sum_t a_{\theta,t} = \sum_t a_{\phi,t} = \infty \\ 
            & \sum_t (a_{\theta,t}^2 + a_{\phi,t}^2) < \infty \\ 
            & \lim_{t\rightarrow\infty} \frac{\alpha_{\theta,t}}{\alpha_{\phi,t}} = 0
        \end{aligned}
    \end{equation}
Then, the actor and critic network parameter converges to a local optimal point $(\theta^*, \phi^*(\theta^*))$, with the criterion $J$ satisfies $\|\nabla J(\theta^*)\|=0$.
\end{lemma}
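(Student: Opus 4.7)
The plan is to invoke the two-timescale stochastic approximation theory of Borkar, specialized to the actor-critic structure. I would first rewrite both updates in the canonical stochastic approximation form
\begin{equation} \notag
\phi_{t+1} = \phi_t + \alpha_{\phi,t}\bigl(h(\theta_t,\phi_t) + M^\phi_{t+1}\bigr), \quad \theta_{t+1} = \theta_t + \alpha_{\theta,t}\bigl(g(\theta_t,\phi_t) + M^\theta_{t+1}\bigr),
\end{equation}
where $h(\theta,\phi)$ is the expected Bellman/TD operator driving the critic toward the true value $V^{\pi_\theta}$, $g(\theta,\phi)$ is the expected policy gradient estimator evaluated with critic $V_\phi$, and $M^\phi_{t+1},M^\theta_{t+1}$ are martingale differences with respect to the natural filtration (this follows from i.i.d./stationary sampling of transitions within each rollout). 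The Robbins-Monro conditions $\sum_t \alpha_{\cdot,t}=\infty$ and $\sum_t \alpha_{\cdot,t}^2<\infty$ then give square-summable noise accumulation, which is the prerequisite for the ODE method.

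Second, I would exploit $\alpha_{\theta,t}/\alpha_{\phi,t}\to 0$ to carry out the fast/slow decomposition. On the fast timescale $\theta_t$ is quasi-static, so the critic trajectory is asymptotically that of the ODE $\dot\phi = h(\theta,\phi)$ with $\theta$ frozen. Assuming bounded iterates and that $h(\theta,\cdot)$ admits a unique globally asymptotically stable equilibrium $\phi^*(\theta)$ which is Lipschitz in $\theta$ (a standard condition for TD-style critics), the Kushner-Clark/Borkar lemma yields $\|\phi_t - \phi^*(\theta_t)\|\to 0$ almost surely. On the slow timescale the actor effectively sees the fully converged critic, so its limit behaviour is governed by the ODE $\dot\theta = g(\theta,\phi^*(\theta)) = \nabla J(\theta)$, the last equality being the policy gradient theorem evaluated at the true value.

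Third, the L-smoothness hypothesis of Theorem~\ref{thm:lexico convergence} makes $\nabla J$ Lipschitz, so the slow ODE has well-defined continuous trajectories and $J$ itself serves as a Lyapunov function: $\tfrac{d}{dt}J(\theta(t)) = \|\nabla J(\theta(t))\|^2 \ge 0$. Combined with boundedness of the iterates, LaSalle's invariance principle forces any limit point to satisfy $\|\nabla J(\theta^*)\|=0$. Asymptotic tracking from the fast timescale then gives $(\theta_t,\phi_t)\to(\theta^*,\phi^*(\theta^*))$ almost surely, which is the claim.

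The main obstacle I anticipate is verifying the standing prerequisites for the Borkar framework in the nonlinear function-approximation regime: (i) almost-sure boundedness of $\{\theta_t,\phi_t\}$, typically handled by projection onto a compact set, by a Borkar-Meyn type stability argument, or by assuming weight clipping; (ii) global asymptotic stability of the critic ODE for every fixed $\theta$, which is transparent for linear TD but more delicate for neural critics and usually requires a local-linearization or contraction argument around $\phi^*(\theta)$; and (iii) Lipschitz continuity of $\phi^*(\theta)$, which follows from the implicit function theorem applied to $h(\theta,\phi^*(\theta))=0$ provided the Jacobian $\partial_\phi h$ is nonsingular. All of these are routine under the smoothness assumption already invoked and the bounded-iterate convention standard in the actor-critic literature, and once they are in place the conclusion of the lemma follows directly.
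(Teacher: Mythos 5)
Your proposal follows essentially the same route as the paper's proof: both reduce the claim to Borkar's two-timescale stochastic approximation theorem, treating the actor as quasi-static on the critic's fast timescale so that $\phi_t$ tracks $\phi^*(\theta_t)$ and the slow actor ODE $\dot\theta=\nabla J(\theta)$ drives $\theta_t$ to a stationary point. Your version is in fact more careful than the paper's one-paragraph citation, since you explicitly surface the standing prerequisites (almost-sure boundedness of iterates, global asymptotic stability and Lipschitz continuity of $\phi^*(\theta)$, martingale noise conditions) that the paper leaves implicit.
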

\begin{proof}
    In an Actor-Critic architecture, suppose the learning rate of actor $\alpha_\theta$ and $\alpha_\phi$ satisfy all conditions above, the actor and critic is updated with two different timescales, and critic is updated faster than actor. Under this assumption, actor parameters $\theta$ can be viewed as fixed when critic is updated. According to Theorem~2 of \cite{borkar2008}, standard stochastic approximation algorithms can be used for the proof that, for each $\theta_k$, critic reach a local optimal point $\phi^*(\theta_k)$, and finally, $\lim_{k\rightarrow\infty} \theta_k = \theta^*$, where the criterion $J$ satisfies $\|\nabla J(\theta^*)\| = 0$. Then, we complete the proof. 
\end{proof}

\begin{lemma}[Lexicographic feasibility] \label{app:lem:lex stepsize feasibility}
    Let $\delta_i = g_i^Td, \forall i\in \{1,\cdots,N\}$, where $g_i$ is the gradient of the $i$-th subtask, $d$ is the final update direction calculated by LPPG, and $N$ is the uniformly chosen subtask index. Suppose the sub-objective function $J_i(\theta)$ is differentiable and L-smooth, and $\alpha_\theta$ is the update step of policy network. Then, 
    \begin{equation} \notag
        \alpha_\theta \leq \min \left\{\frac{2\delta_i}{L_i \|d\|^2}, \quad i=\{1,\cdots, N\} \right\}
    \end{equation}
\end{lemma}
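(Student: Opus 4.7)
The plan is to derive the bound from the standard descent (ascent) lemma for $L$-smooth functions, applied separately to each sub-objective, and then take the intersection of the admissible stepsize intervals across all active subtasks.

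First, I would recall that since $J_i$ is $L_i$-smooth in $\theta$ (i.e., $\nabla J_i$ is $L_i$-Lipschitz), the quadratic upper/lower bound holds:
\begin{equation} \notag
J_i(\theta + \alpha_\theta d) \;\geq\; J_i(\theta) + \alpha_\theta\, g_i^\top d \;-\; \frac{L_i}{2}\,\alpha_\theta^{2}\|d\|^{2},
\end{equation}
where $g_i = \nabla J_i(\theta)$. Substituting the shorthand $\delta_i = g_i^\top d$, this becomes
\begin{equation} \notag
J_i(\theta + \alpha_\theta d) - J_i(\theta) \;\geq\; \alpha_\theta\,\delta_i \;-\; \tfrac{1}{2}L_i\,\alpha_\theta^{2}\|d\|^{2}.
\end{equation}

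Next, I would invoke lexicographic feasibility in the sense of Equation~\ref{eq:gd condition} with $\epsilon_i = 0$: by construction of $d$ via the LPPG projection onto $\mathcal{C}_N$, the update direction satisfies $\delta_i = g_i^\top d \ge 0$ for every priority index $i \in \{1,\dots,N\}$ retained after subproblem extraction. Consequently, requiring the right-hand side above to be non-negative — which is exactly the monotone-improvement condition $J_i(\theta + \alpha_\theta d) \ge J_i(\theta)$ — reduces to the scalar inequality
\begin{equation} \notag
\alpha_\theta\,\delta_i \;-\; \tfrac{1}{2}L_i\,\alpha_\theta^{2}\|d\|^{2} \;\geq\; 0,
\end{equation}
which, for the nontrivial case $\alpha_\theta > 0$ and $\|d\| \neq 0$, rearranges to $\alpha_\theta \le 2\delta_i / (L_i \|d\|^2)$. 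Taking the intersection of these feasibility intervals over $i = 1,\dots,N$ yields the claimed bound.

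The only real obstacle is handling corner cases: if $\delta_i = 0$ for some active $i$ (which occurs when $d$ lies on the boundary of the corresponding half-space), the bound formally permits only $\alpha_\theta = 0$ unless the second-order term is also zero. I would note this is precisely the gradient-vanishing situation discussed earlier and addressed by the SE scheduler, so the bound is interpreted as the \emph{supremum} of admissible stepsizes; if $\|d\| = 0$ the inequality is vacuous. With these clarifications the proof is essentially a one-line application of the descent lemma followed by an intersection over the $N$ active subtasks.
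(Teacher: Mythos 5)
Your proof is correct and follows essentially the same route as the paper: both apply the $L_i$-smoothness quadratic bound along the direction $d$ (the paper writes it as a Taylor expansion with $\|H_i\|\le L_i$, you invoke the descent lemma directly, which is the same estimate stated more cleanly), use $\delta_i = g_i^\top d \ge 0$ from the LPPG construction, and intersect the resulting stepsize intervals over $i=1,\dots,N$. Your added remarks on the corner cases $\delta_i=0$ and $\|d\|=0$ are a welcome clarification that the paper's proof omits.
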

\begin{proof}
    Suppose the sub-objective function of $i$-th subtask w.r.t policy parameter $J_i(\theta)$ is differentiable and L-smooth. From the second-order Taylor expansion along direction $d$, we have,  
    \begin{equation} \notag
        J_i(\theta+\alpha_\theta d) = J_i(\theta) + \alpha_\theta g_i^Td + \frac{\alpha_\theta^2}{2}d^TH_id
    \end{equation}
    From the property of L-smooth, we have $\|H_i\| \leq L_i$, with $L_i > 0$. From the result of our LPPG method, let $\delta_i = g_i^T d \geq 0, \forall i\in \{1,\cdots,N\}$. Then, 
    \begin{equation} \notag
        \begin{aligned}
            J_i(\theta+\alpha_\theta d) - J_i(\theta) & = \alpha_\theta \delta_i + \frac{\alpha_\theta^2}{2}d^TH_id \\ 
            & \geq \alpha_\theta \delta_i - \frac{L_i}{2}\alpha_\theta^2 \|d\|^2
        \end{aligned}
    \end{equation}
    To make $J_i(\theta)$ is nonincreasing for all $i$, we should make the right side positive holds for all $i$. Then, 
    \begin{equation} \notag 
        \begin{aligned}
            & && \alpha_\theta \delta_i - \frac{L_i}{2} \alpha_\theta^2 \|d\|^2 \geq 0  \\ 
            & \Rightarrow && \alpha_\theta \leq \frac{2\delta_i}{L_i\|d\|^2} \\ 
            & \Rightarrow && \alpha_\theta \leq \underbrace{\min\left\{\frac{2\delta_i}{L_i\|d\|^2},i \in \{1,\cdots,N\} \right\}}_{\alpha_{\theta,\max}}
        \end{aligned}
    \end{equation}
    Therefore, if the learning rate of policy is within the given range, the lexicographic monotonic improvement holds for $i = \{1,\cdots,N\}$, which ensures all higher priorities at lease will not get worse. Then we complete the proof. 
\end{proof}

\subsection{E.2 Proof of Theorem~\ref{thm:lexico convergence}}
Now, we are ready to give the full proof of Theorem~\ref{thm:lexico convergence}.

\begin{proof}
    Suppose at each update step, we extract a subproblem with $N\sim \text{Uniform}(\{1,\cdots,M\})$. From Lemma~\ref{app:lem:lex stepsize feasibility}, we have that, for each single step, the lexicographic property holds for all optimized priorities, if the stepsize of actor is chosen as, 
    \begin{equation} \notag
        \alpha_\theta \leq \min\left\{\frac{2\delta_i}{L_i\|d\|^2},i \in \{1,\cdots,N\} \right\}
    \end{equation}
    where $\delta_i = g_i^Td$, and the lexicographic monotonic improvement per step is guaranteed. 
    
    Then, according to our LPPG algorithm, $d$ is selected from the intersection of all subtask gradient half-spaces $g_i^Td \geq 0$. When $N=M$, $d$ will only be zero if and only if $\|g_i\|=0$ or $g_i$ is conflicted with other higher priorities. This means, we can always find a $\|d\| > 0$, and a small enough stepsize $\alpha_\theta$ to update our policy before convergence. Therefore, with Lemma~\ref{app:lem:ac convergence}, we have our actor and critic parameters finally converge to the local optimal point $(\theta^*, \phi^*(\theta^*))$, and we complete our proof.
\end{proof}
\begin{remark}
    The quality of the reached fixed point is ultimately bounded by the expressive power of the policy class.
    
    If the ground-truth optimal policy $\pi^\star$ is contained in the parameterized family $\Pi:=\{\pi_\theta \mid \theta \in \Theta \}$, the projected updates can, in principle, converge to that global lexicographic optimum. However, when $\pi^\star\notin \Pi$, the algorithm can do no better than the best policy admissible under the chosen architecture, yielding a class-induced local optimum.
    
    In addition, stochastic gradient noise and finite-sample estimation error further bias the search, so in practice the procedure typically settles at a local stationary point even when $\pi^\star$ is representable.
\end{remark}

\section{F. Proof of Theorem~\ref{thm:update lower bound}}
In this section, we will give a analyze the lower bound of our policy in each update. We apply the Lemma by \cite{kakade2002} to give the general lower bounds. 

\subsection{F.1 Preliminaries}
\begin{lemma}[Policy update bound] \label{app:lem:policy update bound}
    Given two policies $\pi'$ and $\pi$ with parameters $\theta'$ and $\theta$, we have the performance different between these two policies, 
    \begin{equation} \notag
    \begin{aligned}
        J(\pi') - J(\pi) \geq & \frac{1}{1-\gamma} \underset{\begin{subarray}{c} s \sim D^{\pi} \\ a \sim \pi' \end{subarray}}{\mathbb{E}} \left[ A^{\pi}(s,a) \right] \\ 
        & - \frac{2\gamma C^{\pi',\pi}}{(1-\gamma)^2} \underset{s \sim D^{\pi}}{\mathbb{E}} \left[\operatorname{TV}(\pi', \pi)(s) \right]
    \end{aligned}
    \end{equation}
    where $D^\pi$ is the discounted future state distribution, defined by, $D^\pi = (1-\gamma) \sum_{t=0}^{\infty} \gamma^t P(s_t=s|\pi)$. $C^{\pi',\pi} = \max_{s\in\mathcal{S}} \left| \mathbb{E}_{a\sim \pi'}\left[A^{\pi}(s,a) \right]\right|$, and $\operatorname{TV}(\pi', \pi)$ is the total variance distance between two distribution $\pi'$ and $\pi$. 
\end{lemma}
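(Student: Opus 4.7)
The plan is to apply Lemma~1 (the Kakade--Langford style Policy Update Bound) separately to each subtask $i$, and then to bound the two resulting terms so they collapse into the advertised expression. Specifically, since Lemma~1 holds for any scalar performance criterion, I would instantiate it with $J_i$, $A_i^\pi$, and $C_i^{\pi',\pi}$, yielding
\begin{equation}\notag
J_i(\pi')-J_i(\pi)\;\ge\;\tfrac{1}{1-\gamma}\,\mathbb{E}_{s\sim D^{\pi},\,a\sim \pi'}\!\bigl[A_i^{\pi}(s,a)\bigr]\;-\;\tfrac{2\gamma C_i^{\pi',\pi}}{(1-\gamma)^2}\,\mathbb{E}_{s\sim D^{\pi}}\!\bigl[\mathrm{TV}(\pi',\pi)(s)\bigr].
\end{equation}
The remaining work is to replace the first expectation by $\alpha_\theta\delta_i$ and to replace the TV expectation by $\sqrt{\eta/2}$.

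For the first term, I would use the standard policy-gradient identity $g_i=\mathbb{E}_{s\sim D^\pi,\,a\sim\pi}[\nabla_\theta\log\pi_\theta(a\mid s)\,A_i^\pi(s,a)]$ together with the fact that LPPG produces an update $\theta'=\theta+\alpha_\theta d$ with $g_i^{\!\top}d=\delta_i$. Rewriting the expectation under $\pi'$ as an importance-sampled expectation under $\pi$ and expanding the ratio $\pi_{\theta'}/\pi_\theta$ to first order in $\alpha_\theta$ gives $\mathbb{E}_{s\sim D^\pi,\,a\sim\pi'}[A_i^\pi]=\alpha_\theta\,g_i^{\!\top}d+O(\alpha_\theta^2)=\alpha_\theta\delta_i+O(\alpha_\theta^2)$. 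Dividing by $1-\gamma$ yields the first term of the stated bound, with the higher-order remainder absorbed into the smoothness constant already used to cap $\alpha_\theta$ in Theorem~1.

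For the second term, I would invoke Pinsker's inequality pointwise, $\mathrm{TV}(\pi',\pi)(s)\le\sqrt{\tfrac{1}{2}\mathrm{KL}(\pi'\|\pi)(s)}$, and then apply Jensen's inequality to bring the square root outside the expectation over $s\sim D^\pi$. Assuming the trust-region (or adaptive-KL / clipping) mechanism in the underlying PPO update enforces $\mathbb{E}_{s\sim D^\pi}[\mathrm{KL}(\pi_{\theta'}\|\pi_\theta)(s)]\le\eta$, which is exactly the hyperparameter implicit in the PPO instantiation of LPPG, the TV expectation is bounded by $\sqrt{\eta/2}$. Multiplying by the prefactor $2\gamma C_i^{\pi',\pi}/(1-\gamma)^2$ produces the penalty term exactly as stated.

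The main obstacle is making the first-order replacement rigorous: the advantage expectation under $\pi'$ is only equal to $\alpha_\theta\delta_i$ to leading order, so one must either (i) argue via the $L$-smoothness of $J_i$ from Theorem~1 that the quadratic remainder is dominated by the penalty term in the bound, or (ii) interpret the theorem as valid up to the same $O(\alpha_\theta^2)$ remainder that already appears in the stepsize analysis. A secondary, but easier, point is that $\eta$ must be identified with the KL budget of the underlying policy-gradient update; this is consistent with the LPPG-PPO instantiation but should be stated explicitly as a hypothesis of the theorem.
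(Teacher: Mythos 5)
You have proved the wrong statement. The item in question is the policy update bound itself (Lemma~\ref{app:lem:policy update bound}), i.e.\ the inequality
$J(\pi')-J(\pi)\ge \frac{1}{1-\gamma}\,\mathbb{E}_{s\sim D^{\pi},\,a\sim\pi'}\!\left[A^{\pi}(s,a)\right]-\frac{2\gamma C^{\pi',\pi}}{(1-\gamma)^{2}}\,\mathbb{E}_{s\sim D^{\pi}}\!\left[\operatorname{TV}(\pi',\pi)(s)\right]$,
but your very first step is to ``apply Lemma~1 (the Kakade--Langford style Policy Update Bound)'' --- that is, you assume exactly the statement you were asked to establish and then spend the remainder of the argument deriving the downstream Theorem~\ref{thm:update lower bound}. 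As a proof of the lemma this is circular: nothing in your write-up explains why the performance gap decomposes into an on-distribution advantage term plus a TV-weighted penalty. A genuine proof would start from the performance difference lemma, $J(\pi')-J(\pi)=\frac{1}{1-\gamma}\mathbb{E}_{s\sim D^{\pi'},\,a\sim\pi'}[A^{\pi}(s,a)]$, and then control the change of state measure from $D^{\pi'}$ to $D^{\pi}$: one bounds $\|D^{\pi'}-D^{\pi}\|_{1}$ by $\frac{2\gamma}{1-\gamma}\mathbb{E}_{s\sim D^{\pi}}[\operatorname{TV}(\pi',\pi)(s)]$ and pairs it with the uniform bound $C^{\pi',\pi}=\max_{s}|\mathbb{E}_{a\sim\pi'}[A^{\pi}(s,a)]|$ on the conditional expected advantage; the product of the two $(1-\gamma)^{-1}$ factors is what produces the $(1-\gamma)^{-2}$ in the penalty. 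To be fair, the paper does not prove this lemma either --- it imports it from the literature --- so citing the source would have been a defensible answer; presenting the derivation of Theorem~\ref{thm:update lower bound} as though it were a proof of the lemma is not.

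For what it is worth, the material you did write tracks the paper's own proof of Theorem~\ref{thm:update lower bound} almost exactly: the same importance-sampling plus first-order expansion for the advantage term, and the same Pinsker step for the TV term. Your explicit handling of the $O(\alpha_\theta^{2})$ remainder and of the Jensen step needed to pull the square root outside the expectation is actually more careful than the paper's version, which silently discards the remainder and applies Pinsker under a stated bound $\operatorname{KL}(\pi\|\pi')\le\eta$ without flagging that $\eta$ is an assumption on the update. You might also note a small inconsistency worth flagging: the lemma defines $C^{\pi',\pi}$ with $a\sim\pi'$, while the paper's proof of the theorem redefines it with $a\sim\pi$; only the former makes the measure-change argument go through. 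None of this, however, closes the gap identified above.
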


\subsection{F.2 Proof of Theorem~\ref{thm:update lower bound}}
Now, we are ready to give the proof of Theorem~\ref{thm:update lower bound}.

\begin{proof}
    For each objective $J_i$ with advantage $A_i^{\pi}$, we set the constant,
    \begin{equation} \notag 
        C_i^{\pi',\pi} := \max_s \left| \underset{a\ \sim \pi}{\mathbb{E}} \left[ A^{\pi}(s,a) \right] \right|
    \end{equation}
    For the first item, with importance sampling, we have,
    \begin{equation} \notag
        \begin{aligned}
            & \underset{\begin{subarray}{c} s \sim D^{\pi} \\ a \sim \pi' \end{subarray}}{\mathbb{E}} \left[ A_i^{\pi}(s,a) \right] = \underset{\begin{subarray}{c} s \sim D^{\pi} \\ a \sim \pi \end{subarray}}{\mathbb{E}} \left[\frac{\pi'(a|s)}{\pi(a|s)} A_i^{\pi}(s,a) \right] \\ 
            & = \underset{\begin{subarray}{c} s \sim D^{\pi} \\ a \sim \pi \end{subarray}}{\mathbb{E}} \left[ \left(1 + \nabla_\theta \log{\pi_\theta}(a|s)\Delta \theta+ \mathcal{O}(\|\Delta \theta\|^2) \right) A_i^{\pi}(s,a) \right] \\ 
            & = \underset{\begin{subarray}{c} s \sim D^{\pi} \\ a \sim \pi \end{subarray}}{\mathbb{E}} \left[\underbrace{\nabla_\theta\log{\pi_\theta(a|s)}A_i^{\pi}(s,a)}_{g_i} \Delta\theta \right] + \mathcal{O}(\|\Delta \theta\|^2)
        \end{aligned}
    \end{equation}
    Replace $\Delta\theta = \alpha_\theta d$. When $\alpha_\theta$ is small enough, we omit the second-order item. Then we have,
    \begin{equation} \notag
        \underset{\begin{subarray}{c} s \sim D^{\pi} \\ a \sim \pi' \end{subarray}}{\mathbb{E}} \left[ A_i^{\pi}(s,a) \right] =
        \alpha g_i^Td = \alpha_\theta \delta_i
    \end{equation}
    For the second item, suppose we have the update distance $\operatorname{KL}(\pi\|\pi') \leq \eta$, we have the following relationship with Pinsker's inequality, 
    \begin{equation} \notag
        \operatorname{TV}(\pi', \pi)(s) \leq \sqrt{\frac{\eta}{2}}
    \end{equation}
    Then, we have the lower bound for improvement, 
    \begin{equation} \notag
        J_i(\pi') - J_i(\pi) \geq \frac{\alpha_\theta \delta_i}{1-\gamma} - \frac{2\gamma C_i^{\pi',\pi}}{(1-\gamma)^2}\sqrt{\frac{\eta}{2}}
    \end{equation}
    and we complete the proof. 
\end{proof}

\end{document}